\documentclass[11pt]{article}

\usepackage[margin=1in]{geometry}
\usepackage[mathscr]{eucal}
\usepackage{amsmath,amsthm,amssymb}
\usepackage{bbm}
\usepackage{authblk}
\usepackage{mathtools}
\usepackage{algorithm2e}
\RestyleAlgo{ruled}
\SetKwComment{Comment}{/* }{ */}
\usepackage{booktabs} 
\usepackage{hyperref}
\usepackage{url}
\usepackage{natbib}
\usepackage{breqn}
\usepackage{footnote}
\usepackage{graphicx}
\usepackage{tipa}
\usepackage{subfig}
\usepackage{bm}
\usepackage{paralist}
\usepackage{calrsfs}
\usepackage{subfiles}
\usepackage{xcolor}
\usepackage{mwe}
\usepackage{dutchcal}
\usepackage{verbatim}

\theoremstyle{plain}
\newtheorem{theorem}{Theorem}
\newtheorem{corollary}{Corollary}
\newtheorem{lemma}{Lemma}
\newtheorem{proposition}{Proposition}
\newtheorem{assumption}{Assumption}
\newtheorem{cond}{Condition}

\newtheorem{definition}{Definition}
\newtheorem{remark}{Remark}

\numberwithin{mytheorem}{section}

\numberwithin{mylemma}{section}
\newtheorem{innercustomgeneric}{\customgenericname}
\providecommand{\customgenericname}{}

\renewcommand{\hat}{\widehat}

     \def\EE{\mathbb{E}}

\DeclareMathOperator{\argmin}{argmin}

\newcommand{\R}{\mathbb{R}}

\DeclareMathAlphabet{\pazocal}{OMS}{zplm}{m}{n}
\newcommand{\ca}[1]{\pazocal{#1}}

\newcommand{\norm}[1]{\lVert#1\rVert}
\newcommand{\Norm}[1]{\left\lVert#1\right\rVert}

\newcommand{\lef}{\left}
\newcommand{\rig}{\right}

\newcommand{\pr}{\mathbb{P}}

\usepackage{mathtools}
\DeclarePairedDelimiterX{\infdivx}[2]{(}{)}{%
  #1\;\delimsize\|\;#2%
}

\newcommand{\TL}{\mbox{\scriptsize TL}}
\newcommand{\FT}{\mbox{\scriptsize FT}}

\newcommand{\SourceModel}{g^P(\boldsymbol{f}_i^P, \boldsymbol{u}_{i, \pazocal{J}^P}^P)}
\newcommand{\SourceFunction}{g^P(\boldsymbol{f}^P, \boldsymbol{u}_{\pazocal{J}^P}^P)}

\title{\huge\bf SMART Fine-tuning Factor Augmented Neural Lasso\footnote{Supported by  NSF Grant DMS-2412029 and the ONR Grant N00014-25-1-2317.}}

\author{Jinhang Chai \qquad Jianqing Fan  \qquad Cheng Gao\qquad  Qishuo Yin\\
Department of Operations Research and Financial Engineering\\
Princeton University}
\begin{document}
	\date{}
\maketitle
\vspace*{-0.3 in}

%\r{Fine-Tuning Factor augmented neural Lasso}
\begin{abstract}
  Fine-tuning is a widely used strategy for adapting pre-trained models to new tasks, yet its methodology and theoretical properties in high-dimensional nonparametric settings with variable selection have not yet been developed. We propose a source-model-augmented residual tuning (SMART) framework, which incorporates the pre-trained source model as an augmented feature into the target learner and estimates only the residual target-specific component. The approach is widely applicable, from parametric and sparse models to neural networks and blackbox machine learning models. We focus on the development of fine-tuning factor-augmented neural Lasso, resulting in SMART-FAN-Lasso. This transfer-learning framework for high-dimensional nonparametric regression with variable selection simultaneously handles covariate and posterior shifts. We use a low-rank factor structure to manage high-dimensional dependent covariates and a residual tuning decomposition in which the target function is expressed as a function of source model and other target-specific variables, thereby reducing the effective complexity of the target task. We derive minimax-optimal excess risk bounds, characterizing the precise conditions, in terms of relative sample sizes and function complexities, under which fine-tuning yields statistical acceleration over single-task learning. Extensive numerical experiments across diverse covariate- and posterior-shift scenarios demonstrate that SMART-FAN-Lasso consistently outperforms standard baselines and achieves near-oracle performance even under severe target sample size constraints, empirically validating the derived rates.
\end{abstract}

\noindent{\bf Keywords}:  Transfer Learning, Neural Networks, Variable Selection, FAST-NN,  Model Shifts, Covariate Shifts.
% !TEX root = ../main.tex
\section{Introduction}
\label{sec:intro}

% Transfer learning has emerged as a cornerstone of modern machine learning, with fine-tuning becoming indispensable to AI development---particularly in the era of large language models. The appeal of fine-tuning extends beyond knowledge transfer: it offers a pragmatic solution to the computational burden of training large-scale models from scratch, a process that can consume months of computation and millions of dollars. In practice, deployment data often follows distributions that differ from those seen during pre-training, yet the wealth of knowledge encoded in pre-trained weights remains valuable. Fine-tuning exploits this by adapting a small set of parameters to new domains while preserving the bulk of learned representations.

Transfer learning has revolutionized modern machine learning by enabling the transfer of learned representations from data-rich source domains to sample-limited target tasks. This paradigm is most prominently realized through fine-tuning, where pre-trained weights from large-scale models serve as a sophisticated starting point for specialized downstream applications. The effectiveness of this approach hinges on the hypothesis that pre-training captures intrinsic latent structures—such as shared features or factors—that provide a foundational representation of the data. Fine-tuning thus acts as a pragmatic adaptation layer, refining learned representations to align with the specific nuances of the target domain.

% Despite its empirical success, fine-tuning lacks a rigorous theoretical foundation. Practitioners routinely fine-tune pre-trained models, yet fundamental questions remain open: How can we leverage pre-trained models to accelerate learning on new tasks? How can we quantify the statistical gains from leveraging pre-trained knowledge? How should we design estimators that automatically exploit helpful source information while remaining robust to negative transfer? These questions demand precise, quantitative answers---not rules of thumb, but provable guarantees that reveal exactly when, why, and by how much fine-tuning succeeds.

While the empirical success of fine-tuning is undeniable, providing a unified and sharp theoretical characterization of its performance remains a formidable task. Existing literature has established foundational guarantees for transfer learning, yet these often focus on linear regimes or assume a restricted class of distribution shifts. There remains a critical need for a framework that can quantify the statistical gains of fine-tuning in the presence of high-dimensional covariates and complex nonparametric structures—settings where classical parametric intuition often fails. Specifically, it remains unclear how to construct a unified estimator that simultaneously achieves minimax optimality under both covariate and posterior shifts while maintaining robustness to negative transfer.

% Building such a theory confronts several formidable obstacles. Modern applications routinely involve \emph{high-dimensional covariates}, where the ambient dimension $p$ dwarfs the available sample size $n$, rendering classical statistical tools inadequate. The functions governing real-world phenomena are seldom simple; they exhibit \emph{complex nonparametric structure}---hierarchical compositions, intricate nonlinearities---that defies parametric modeling. This motivates factor-adjusted nonparametric Lasso (FAN-Lasso) type of techniques for statistical modeling by using neural networks, which is called factor-augmente sparse throughput neural networks (FAST-NN) in \cite{fan2024factor}.  Moreover, the distribution shift between source and target domains is rarely one-dimensional. In practice, we face \emph{covariate shift}, where the marginal distribution of inputs changes, alongside \emph{posterior shift}, where the conditional relationship between inputs and outputs evolves. Existing theoretical frameworks address these challenges piecemeal: some focus on low-dimensional or parametric settings, others assume only covariate shift with an invariant regression model. A unified treatment capable of handling high-dimensional, nonparametric regression under simultaneous covariate and posterior shift has remained elusive.

In this paper, we bridge these theoretical and practical gaps by proposing a source-model-augmented residual tuning (SMART) framework. We posit that the pretrained source model is a very strong feature in the target domain and augment it in the target-domain learning.  SMART incorporates a pre-trained source model into the target learner and estimates only the residual target-specific component. While this SMART framework is widely applicable, we focus on its development in a factor-augmented neural Lasso setting, resulting in SMART-FAN-Lasso. We assume that the shared knowledge between domains can be effectively encapsulated by a low-dimensional factor structure, which provides a stable backbone for knowledge transfer even under significant distribution shifts. Our approach provides an optimal mechanism to transfer pre-trained factor-augmented sparse throughput neural networks (FAST-NN, \cite{fan2024factor})—a class of models that combine latent factor extraction with sparse nonparametric estimation—to new environments.

% To address aforementioned  challenges,  we propose a fine-tuning factor-augmented neural Lasso (FAN-Lasso) framework that optimally transfers the pretrained FAST-NN model to new environments with possibly covariates and posterior shifts. This framework weaves together three powerful threads. First, the low-dimensional factor structure enables tractable estimation and provides a natural mechanism for transfer from source to target domain, though differing in distributions may share similar latent drivers. Second, \emph{deep ReLU networks} capture dense and sparse nonparametric complexity by leveraging their effective capacity to adaptively learn the intrinsic structure of the regression function, thereby sidestepping the curse of dimensionality.  Third, the flexibility of transfer functions allows the method to adapt to different applications with varying environments and sample sizes.

This framework weaves together three powerful components to address the aforementioned challenges. First, the low-dimensional factor structure enables tractable estimation in high-dimensional settings by capturing the latent drivers shared across domains, acting as a bridge for representation transfer. Second, we employ deep ReLU networks to model the dense and sparse nonparametric complexities, leveraging their adaptive capacity to unknown composition structures to sidestep the curse of dimensionality. Third, by introducing flexible transfer functions, our method explicitly accounts for both covariate and posterior shifts. This mechanism allows the estimator to automatically calibrate the degree of source-target similarity, effectively ``gating'' the amount of source information to utilize. This synergy not only ensures robustness against negative transfer by pruning irrelevant source signals but also achieves minimax optimality under general nonparametric regimes.

%\textbf{Current Challenges.}

%\textbf{Our Approach.}  
%To address aforementioned  challenges, 

%Let us use the subscripts $P$ and $Q$ to denote, respectively, the quantities related to the source and target.  First, we tame high-dimensional covariates and their dependence by positing that the factor model: $\boldsymbol{x} \in \mathbb{R}^p$ is governed by a handful of latent factors $\boldsymbol{f} \in \mathbb{R}^r$ via $\boldsymbol{x}^* = \boldsymbol{B}^* \boldsymbol{f}^* + \boldsymbol{u}^*$ for $* \in \{P, Q\}$.  Our key insight is a \emph{residual fine-tuning formulation} that decomposes the target regression function as
%\[
%g^Q(f^Q, u^Q_{J^Q}) = h\bigl(f^Q, u^Q_J, g^P(x^Q_{J^P})\bigr),
%\]
%where the pre-trained source nonparametric function $g^P$ provides a foundation for transfer learning and $h$ captures the residual transformation needed to accommodate the target distribution. When $h$ belongs to a simpler function class than the full target function $g^Q$, we need only estimate the simpler residual function --- this is precisely when fine-tuning yields provable gains over learning from scratch.  The versatility of the model stems from that of FAST-NN and the general form of the residual function.  This will be further elaborated in Section 2.

\subsection{Problem Formulation}\label{formulation}

%We formalize the fine-tuning problem within a neural factor-augmented regression framework. 
Consider two datasets: source data $\{(\boldsymbol{x}_i^P, y_i^P)\}_{i=1}^{n^P} \overset{\text{i.i.d.}}{\sim} P$ and target data $\{(\boldsymbol{x}_j^Q, y_j^Q)\}_{j=1}^{n^Q} \overset{\text{i.i.d.}}{\sim} Q$, where $(\boldsymbol{x}, y) \in \mathbb{R}^p \times \mathbb{R}$. The central question is: \emph{how and under what conditions does access to source data accelerate learning on the target distribution $Q$?}

\textbf{Factor Model for Covariates.}
To handle the high-dimensional regime where $p \gtrsim n^Q$, we assume the covariates admit a low-rank factor structure:
\begin{equation}
    \boldsymbol{x}^P = \boldsymbol{B}^P\boldsymbol{f}^P + \boldsymbol{u}^P, \qquad \boldsymbol{x}^Q = \boldsymbol{B}^Q\boldsymbol{f}^Q + \boldsymbol{u}^Q,  \label{eq:factorModel2}
\end{equation}
where $\boldsymbol{B}^P, \boldsymbol{B}^Q \in \mathbb{R}^{p \times r}$ are unknown loading matrices with $p \gg r$, $\boldsymbol{f}^P, \boldsymbol{f}^Q \in \mathbb{R}^r$ are latent factors, and $\boldsymbol{u}^P, \boldsymbol{u}^Q \in \mathbb{R}^p$ are idiosyncratic components.  No factor structure of covariates is allowed by taking $r=0$ or $\boldsymbol B = 0$.%This structure implies that although the observed covariates are high-dimensional, the relevant variation is captured by $r$ latent factors---enabling tractable estimation and providing a natural mechanism for transfer when source and target domains share similar latent structure.

\textbf{Regression Functions.}
%The response in each domain is generated by an unknown regression function.
For the source domain, we assume a factor augmented sparse nonparametric model:
\begin{equation}
    y_i^P = \SourceModel + \epsilon_i^P, \quad i \in [n^P],
    \label{eq:Pdata}
\end{equation}
where $g^P: \mathbb{R}^{r + |\pazocal{J}^P|} \to \mathbb{R}$ depends on both the latent factors and a sparse subset $\pazocal{J}^P \subset [p]$ of idiosyncratic components, and $\{\epsilon_i^P\}$ are i.i.d.\ noise. 
Note that the source model can also be written in the form $\tilde g^P(\boldsymbol{f}_j^P, \boldsymbol{x}_{j, \pazocal{J}^P}^P)$ using \eqref{eq:factorModel2} and the factor augmentation in variable selection can be seen.
%As shown in Section~\ref{sec:factor-transfer}, this form facilitates transfer from the source to the target domain.
Similarly, for the target domain, we assume a factor-augmented nonparametric model:
\begin{equation}
    y_j^Q = g^Q(\boldsymbol{f}_j^Q, \boldsymbol{u}_{j, \pazocal{J}^Q}^Q) + \epsilon_j^Q, \quad j \in [n^Q],
    \label{eq:Qdata}
\end{equation}
depending on both the latent factors and a sparse subset $\pazocal{J}^Q \subset [p]$ of idiosyncratic components.   
%The inclusion of $\boldsymbol{u}_{\pazocal{J}^Q}^Q$ captures settings where some covariates or idiosyncratic entries carry predictive information beyond what the factors encode.   
The versatility of the FAN model is discussed in Section~\ref{sec:versality}.

\textbf{Two Types of Distribution Shift.}
This framework accommodates two fundamental types of shift in transfer learning:
\begin{enumerate}
    \item \textbf{Covariate Shift:} The marginal distribution of covariates differs across domains, manifested through $\boldsymbol{B}^P \neq \boldsymbol{B}^Q$. The methodological challenge is to estimate the target factors $\boldsymbol{f}^Q$ accurately by leveraging source data when the loading matrices are similar but not identical.
    \item \textbf{Posterior Shift:} The conditional distribution of responses differs, i.e., $g^P \neq g^Q$. The statistical challenge is to learn the target function $g^Q$ efficiently by exploiting the pre-trained source model $g^P$.
\end{enumerate}

\textbf{Source model augmented residual tuning} (SMART).
To formalize how source knowledge transfers to the target, we  assume that the source model $s^P(\boldsymbol{x}) = \SourceFunction$ is a strong signal for the target regression and augment it in the target regression function via a residual structure:
\begin{equation}
    g^Q(\boldsymbol{f}^Q, \boldsymbol{u}_{\pazocal{J}^Q}^Q) = h\left(\boldsymbol{f}^Q, \boldsymbol{u}_{\pazocal{J}}^Q, s^P(\boldsymbol{x}^Q)\right),
    \label{eq:Fine-tune}
\end{equation}
where $h: \mathbb{R}^{r + |\pazocal{J}| + 1} \to \mathbb{R}$ is a low-complexity \emph{residual fine-tuning function}
and $\pazocal{J}^Q = \pazocal{J} \cup \pazocal{J}^P$. This formulation embeds the source model as an input to $h$, capturing how target predictions refine source predictions. Our model is more general than additive residual models $g^Q = g^P + h_0$ or  multiplicative $g^Q = g^P (1+ h_0)$  and permits nonlinear compositions, including regime-switching behaviors such as $h = h_0 \cdot \mathbf{1}\{g^P > 0\}$. Crucially, the source model remains frozen during fine-tuning; only $h$ is estimated from target data. This helps reducing considerably the complexity of $h$ by augmenting only a single variable $s(\boldsymbol{x})$, with the complexity of $s(\boldsymbol{x})$ learned from the plentyful of the source data.  This complexity reduction in transfer is the basic principle behind the SMART framework.  When $h$ is a simpler (including linear model or sparse linear in the low $n^Q$ setting) than $g^P$, this decomposition yields both computational savings and faster statistical convergence. This is particularly relevant when $\boldsymbol{u}_{\pazocal{J}^Q}^Q$ is absent from \eqref{eq:Fine-tune}, so that $h$ can be learned using a low-dimensional neural network. This point is elaborated further in Section~\ref{sec:model}.

\subsection{Preview of Main Results}

Our theoretical analysis makes the intuition behind SMART fine-tuning precise. We establish that the SMART-FAN-Lasso estimator achieves the minimax optimal excess risk
\[
\left[\frac{\log(n^P + n^Q)}{n^P + n^Q}\right]^{\frac{2\gamma^P}{2\gamma^P+1}} + \left(\frac{\log n^Q}{n^Q}\right)^{\frac{2\gamma}{2\gamma+1}} + \frac{\log p}{n^Q} + \frac{1}{p},
\]
where $\gamma^P$ and $\gamma$ quantify the complexity, such as the dimensionality-adjusted degree of smoothness,  of the source function $g^P$ and the residual function $h$, respectively. This rate admits a revealing decomposition: the first term reflects the joint contribution of source and target data to learning $g^P$---the complex backbone that both domains share---while the second term captures the irreducible cost of estimating the residual $h$ from target data alone. When source data is plentiful ($n^P \gg n^Q$) and the source function is genuinely more complex than the residual ($\gamma^P < \gamma$), fine-tuning delivers a provable acceleration over direct estimation of $g^Q$. We complement this upper bound with a matching minimax lower bound.
%, confirming that our rates are sharp up to logarithmic factors.

A distinctive virtue of our method is its \emph{automatic robustness to negative transfer}. When source data offers no genuine advantage---either because samples are scarce ($n^P \ll n^Q$) or because the residual $h$ is no simpler than the target function---our estimator gracefully degrades to the optimal rate achievable without any source data. This safeguard requires no oracle knowledge of the source-target relationship; it emerges organically from the structure of SMART-FAN-Lasso.

\subsection{Related Works}

%\textcolor{blue}{(Need to remove some of the reference because it is make it too long. )}

Our work lies at the intersection of transfer learning, neural network fine-tuning, deep learning theory, nonparametric variable selection, and factor models for high-dimensional data. While each of these fields has seen significant advancements, a unified theoretical framework for high-dimensional nonparametric transfer learning remains an open challenge.

\textbf{Transfer Learning and Distribution Shift.}
Transfer learning aims to leverage source domain knowledge to improve target performance, with foundations laid by \citet{pan2009survey} and \citet{ben2010theory}. In the context of high-dimensional regression, recent works have explored various facets of this problem: \citet{li2022transfer} establishes minimax rates under sparse parameter differences, while \citet{cai2024transfer} and \citet{tian2023transfer} extend these guarantees to nonparametric and generalized linear models. \citet{fan2025robust} proposes a TAB technique for transfer learning.  A primary obstacle is \emph{covariate shift}, where marginal distributions diverge across domains \citep{quinonero2022dataset, gretton2009covariate}. Recent advances in this area include optimal RKHS-based rates \citep{ma2023optimally}, characterizations of source-label utility \citep{kpotufe2021marginal}, robust estimation techniques \citep{yang2024doubly, cai2025semi}, and fundamental insights into well-specified covariate shift \citep{ge2023maximum}. Furthermore, theoretical inquiries into the value of data \citep{hanneke2019value}, task diversity \citep{tripuraneni2020theory}, and the provable advantages of pre-training \citep{ge2023provable} have significantly deepened our understanding. Building on these foundations, our SMART-FAN-Lasso framework provides a unified treatment for both covariate and posterior shift under complex nonparametric structures.

\textbf{Foundations of Fine-Tuning.}
Fine-tuning has become the foundational standard for deploying large-scale models, yet its theoretical properties are still to be unraveled. \citet{kumar2022fine} famously demonstrated that naive fine-tuning can distort pre-trained features, motivating the rise of parameter-efficient methods such as LoRA \citep{hu2022lora, dettmers2023qlora} and prompt tuning \citep{lester2021power}. Our work aligns with a growing ``residual" perspective on fine-tuning, where the target model is viewed as a refinement of the source model. This approach has gained traction across diverse domains, including proxy-based prediction \citep{bastani2021predicting}, cross-fitted residual regression \citep{zhou2023cross}, few-shot learning \citep{zhao2024residual}, and reinforcement learning \citep{ankile2025residual}. We formalize this intuition through our residual fine-tuning function $h$ in Assumption~\ref{ass:transferability}, providing a rigorous mathematical bridge between empirical residual-based methods and theory.

\textbf{Nonparametric Deep Learning Theory.}
The success of our approach relies on the representation power of deep ReLU networks. Building on optimal rates for smooth functions \citep{petersen2018optimal, lu2021deep}, recent studies have shown that deep networks can circumvent the curse of dimensionality by automatically exploiting hierarchical compositions \citep{schmidt2020nonparametric, kohler2021rate, fan2024noise, zhong2024neural}. \citet{farrell2021deep} further established high-probability bounds for such estimators. Our work utilizes these strengths to estimate the complex functions $g^P$ and $h$. By integrating complexity control through sparsity and regularization \citep{bartlett2019nearly, ohn2022nonconvex}, we connect modern deep learning theory with classical high-dimensional factor model frameworks to achieve minimax optimality.

\textbf{High-Dimensional Factor Models.}
Factor models provide the necessary low-rank structure to handle high-dimensional covariates $p \gg n$. It has various applications in econometrics \citep{stock2002forecasting, stock2002macroeconomic, forni2005generalized, bai2008large}, and since then, the asymptotic behavior of these models has been extensively characterized \citep{paul2007asymptotics, johnstone2009consistency, onatski2012asymptotics, chudik2011weak, wang2017asymptotics}. Recent work combines pretrained factor estimation \citep{fan2022learning} with deep learning. \citet{fan2024factor} established the minimax optimality of FAST-NN for single-domain high-dimensional nonparametric regression. Our SMART-FAN-Lasso extends this framework to transfer learning and provides a unified theory for residual fine-tuning under simultaneous distribution shifts.

\subsection{Notation and preliminaries}

%We introduce notation used throughout the paper. For any positive integer $m$, define $[m]=\{1, \dots, m\}$. We use $\boldsymbol{x} = (x_1, \dots, x_p)^\top$ to denote a $d$-dimensional vector and $\boldsymbol{A} = [a_{ij}]_{i\in[n], j\in[m]}$ to denote an $n\times m$ matrix. The $p\times p$ identity matrix is denoted by $\boldsymbol{I}_p$. 
For vectors $\boldsymbol{x} \in \R^p$ and $\boldsymbol{y} \in \R^{d}$, $[\boldsymbol{x}, \boldsymbol{y}]$ concatenates them into a $(p + d)$-dimensional vector. Let $\norm{\boldsymbol{x}}_q = (\sum_{i}|x_i|^q)^{1/q}$ be the vector $\ell_q$ norm. For matrices, let $\norm{\boldsymbol{A}}_2 = \sup_{\norm{\boldsymbol{x}}_2 = 1} \norm{\boldsymbol{Ax}}_2$, $\norm{\boldsymbol{A}}_F=\sqrt{\sum_{i,j}a_{ij}^2}$, and $\norm{\boldsymbol{A}}_\max = \max_{i,j}|a_{ij}|$. We use $\lambda_{\min}(\boldsymbol{A})$ and $\nu_{\min}(\boldsymbol{A})$ to denote the minimum eigenvalue and singular value of $\boldsymbol{A}$, respectively.
% Let $\mathbbm{1}\{\cdot\}$ denote the indicator function and $\mathrm{supp}(\boldsymbol{X})$ the support of random variable $\boldsymbol{X}$. Let $\boldsymbol{X} \sim \boldsymbol{Y}$ indicate that $\boldsymbol{X}$ and $\boldsymbol{Y}$ have the same distribution. We write $a\land b = \min(a,b)$ and $a\lor b = \max(a,b)$. The notation $a_{n}\lesssim b_{n}$ means $|a_{n}|\leq c|b_{n}|$ for some constant $c>0$ when $n$ is sufficiently large; $a_{n}\gtrsim b_{n}$ and $a_{n}\asymp b_{n}$ are defined analogously. We write $a_{n}\ll b_{n}$ if $|a_{n}|/|b_{n}|\rightarrow 0$. These symbols suppress constant dependencies on both sides. 

\textbf{ReLU Neural Networks.}
%We use neural networks as a scalable, nonparametric technique for model fitting.  Specifically, 
We consider fully connected neural networks with ReLU activation $\sigma(x) = x_+$. Given depth $L$ and width $N$, a deep ReLU neural network maps from $\mathbb{R}^d$ to $\mathbb{R}$ by
\begin{align}
\label{eq:nn-def}
    g(\boldsymbol{x}) = \pazocal{L}_{L+1} \circ \bar{\sigma}_L \circ \pazocal{L}_L \circ \bar{\sigma}_{L-1} \circ \cdots \circ \pazocal{L}_2 \circ \bar{\sigma}_1 \circ \pazocal{L}_1(\boldsymbol{x}),
\end{align}
where $\pazocal{L}_\ell(\boldsymbol{z}) = \boldsymbol{W}_\ell \boldsymbol{z} + \boldsymbol{b}_\ell$ is an affine transformation with weight matrix $\boldsymbol{W}_\ell \in \mathbb{R}^{d_\ell \times d_{\ell - 1}}$ and bias vector $\boldsymbol{b}_\ell \in \mathbb{R}^{d_\ell}$. $(d_0, d_1, \dots, d_L, d_{L+1}) = (d, N, \dots, N, 1)$, and $\bar{\sigma}_\ell$ applies ReLU element-wise.

\begin{definition}[Deep ReLU network class]
    Define the family of deep ReLU networks with input dimension $d$, depth $L$, width $N$, output truncated by $M$, and weights bounded by $T$ as:
    \begin{align*}
        \pazocal{G}(d, L, N, M, T) = \{\tilde{g}(\boldsymbol{x}) = \mathrm{trun}_M (g(\boldsymbol{x})): g(\boldsymbol{x}) \text{ as in } \eqref{eq:nn-def}\text{ with } \norm{\boldsymbol{W}_\ell}_\max \leq T, \norm{\boldsymbol{b}_\ell}_\max \leq T\},
    \end{align*}
    where $\mathrm{trun}_M(z) = \max\{|z|, M\} \cdot \mathrm{sign}(z)$ is the truncation operator.
\end{definition}

\textbf{H\"older Smoothness.}
Let $\beta > 0$ and $C > 0$. A $d$-variate function $g$ is $(\beta, C)$-smooth if for every $\boldsymbol{\alpha} \in \mathbb{N}^d$ with $\norm{\boldsymbol{\alpha}}_1 = \lfloor \beta \rfloor$, the partial derivative $\partial g^{\lfloor \beta \rfloor} / \partial \boldsymbol{x}^{\boldsymbol{\alpha}}$ exists and satisfies
\begin{align*}
    \Big| \frac{\partial g^{\lfloor \beta \rfloor}}{\partial \boldsymbol{x}^{\boldsymbol{\alpha}}}(\boldsymbol{x}) - \frac{\partial g^{\lfloor \beta \rfloor}}{\partial \boldsymbol{x}^{\boldsymbol{\alpha}}}(\boldsymbol{z}) \Big| \leq C \norm{\boldsymbol{x} - \boldsymbol{z}}_2^{\beta-\lfloor\beta\rfloor}.
\end{align*}
We denote the set of all $d$-variate $(\beta, C)$-smooth functions by $\pazocal{F}_{d, \beta, C}$. Throughout this paper, a linear transformation $\boldsymbol{Ax} + \boldsymbol{b}$ is treated as $(\infty, \norm{\boldsymbol{A}}_2)$-smooth.

%\subsection{Organization}

%The remainder of this paper is organized as follows. Section~\ref{sec:model} introduces the high-dimensional transfer learning framework, Section~\ref{sec:method} presents our methodology, Section~\ref{sec:factorTransfer} provides theoretical guarantees for factor transfer, Section~\ref{sec:fineTuning} develops the theory for the fine-tuning estimator, and Section~\ref{sec:sim} validates the effectiveness of our proposed methods through numerical studies. Technical proofs and additional results are deferred to the appendices.

% nonparametric regression framework with factor modeling for both source and target domains, establishing the hierarchical decomposition of regression functions that motivates our fine-tuning approach. 
%Section~\ref{sec:method} presents our methodology: the diversified projection matrix for robust factor estimation, the transfer factor estimation method for handling covariate shift, and the fine-tuning procedure for deep ReLU neural networks under posterior shift.
%Section~\ref{sec:factorTransfer} provides theoretical guarantees for transferred factor estimation , establishing convergence rates under appropriate conditions on factor loading similarity. Section~\ref{sec:fineTuning} develops the theory for the fine-tuning estimator, deriving both oracle-type and optimal upper bounds on excess risk. Section~\ref{sec:sim} presents simulation studies validating the effectiveness of our proposed methods. Technical proofs and additional results are deferred to the appendices.

% !TEX root = ../main.tex
\section{Model}
\label{sec:model}

\subsection{Factor augmented nonparametric  model}

We continue to use the notation introduced in \S\ref{formulation} and focus on a factor-augmented nonparametric (FAN) regression framework.  Specifically, we assume
\begin{align*}
    \mathbb{E}[y^P | \boldsymbol{f}^P, \boldsymbol{u}^P] &= \SourceFunction, \qquad
    \mathbb{E}[y^Q|\boldsymbol{f}^Q, \boldsymbol{u}^Q] = g^Q(\boldsymbol{f}^Q, \boldsymbol{u}_{\pazocal{J}^Q}^Q),
\end{align*}
where $\pazocal{J}^P, \pazocal{J}^Q \subset \{1, \ldots, p\}$ are unknown subsets of indices representing the covariates that possess additional predictive power beyond the common factors.  Note that given the latent factor $\boldsymbol{f}^Q$, $\boldsymbol{u}_{\pazocal{J}^Q}^Q$ is equivalent to $\boldsymbol{x}_{\pazocal{J}^Q}^Q$ so that $g^Q(\boldsymbol{f}^Q, \boldsymbol{u}_{ \pazocal{J}^Q}^Q) \equiv  \tilde g^Q(\boldsymbol{f}^Q, \boldsymbol{x}_{\pazocal{J}^Q}^Q)$, a factor augmented sparse nonparametric model.  Let $\epsilon^P = y^P - \mathbb{E}[y^P | \boldsymbol{f}^P, \boldsymbol{u}^P]$ and $\epsilon^Q = y^Q - \mathbb{E}[y^Q | \boldsymbol{f}^Q, \boldsymbol{u}^Q]$ denote the noise terms. Then, the data-generating process is summarized by 
\begin{equation}
    \label{eq:setting}
    \begin{split}
        \boldsymbol{x}_i^P &= \boldsymbol{B}^P\boldsymbol{f}_i^P + \boldsymbol{u}_i^P, \quad y_i^P = \SourceModel + \epsilon_i^P, \quad \forall i \in [n^P],\\
        \boldsymbol{x}_j^Q &= \boldsymbol{B}^Q \boldsymbol{f}_j^Q + \boldsymbol{u}_j^Q, \quad y_j^Q = g^Q(\boldsymbol{f}_j^Q, \boldsymbol{u}_{j,\pazocal{J}^Q}^Q) + \epsilon_j^Q, \quad \forall j \in [n^Q].
    \end{split}
\end{equation}
Our primary objective is to improve the estimation of the target regression function by leveraging information from the source domain through fine-tuning. 

For any candidate estimator $m: \mathbb{R}^p \to \mathbb{R}$, its performance is evaluated by the population $L_2$ risk on the target distribution:
\begin{align*}
    R(m) = \mathbb{E}[|y^Q - m(\boldsymbol{x}^Q)|^2].
\end{align*}
%By \eqref{eq:factorModel2}, the target regression mean can be written as $g^Q(\boldsymbol{f}^Q, \boldsymbol{u}_{\pazocal{J}^Q}^Q)$. Although $\boldsymbol{u}^Q$ is latent, once $\boldsymbol{f}^Q$ is estimated as in Section~\ref{sec:method}, $\boldsymbol{u}_{\pazocal{J}^Q}^Q$ is determined by $\boldsymbol{x}_{\pazocal{J}^Q}^Q - [\boldsymbol{B}_{\pazocal{J}^Q,:}^Q]\boldsymbol{f}^Q$. 
We therefore define the excess risk of an empirical estimator $\widehat{m}$ on the target domain as
\begin{align}
    \label{eq:excessRisk}
    \pazocal{E}^Q(\widehat{m}) \coloneqq \mathbb{E} \left[ |\widehat{m}(\boldsymbol{x}^Q) - g^Q(\boldsymbol{f}^Q, \boldsymbol{u}_{\pazocal{J}^Q}^Q)|^2 \right] = R(\widehat{m}) - R(g^Q).
\end{align}
Our goal is to develop a fine-tuning strategy that effectively leverages the source domain information to enhance the target estimation. 
%By successfully transferring the shared structure from the source data, we aim to improve the learning efficiency and performance of the resulting estimator $\widehat{m}$, making the fine-tuning process more effective than training solely on the target data.
%, especially when the target sample size is much smaller than the source sample size.

\subsection{Versatility of the FAN model}
\label{sec:versality}

The FAN model \eqref{eq:setting}, or specifically \eqref{eq:Qdata}, uses both common factors and idiosyncratic components in the regression. Every covariate contributes through the latent common factors, and some covariates (or equivalently, idiosyncratic components) provide additional contributions. It is very versatile and includes the sparse nonparametric model $g^Q(\boldsymbol{f}_j^Q, \boldsymbol{u}_{j, \pazocal{J}^Q}^Q) 
= \tilde g (\boldsymbol{x}_{j, \pazocal{J}^Q}^Q)$ as a specific case. When the covariates are weakly correlated, we can take $r = 0$ (no factors), in which case the model reduces to a sparse nonparametric model; when $g^Q$ is linear, it reduces to a sparse linear model. When $\pazocal{J}^Q = \emptyset$, it becomes a nonparametric factor model; when $g^Q$ is linear, it reduces to a principal component regression model. In general, the model allows dependence among covariates and is insensitive to slight overselection of $r$, which is equivalent to the last few columns of $\boldsymbol{B}^Q$ being zero.  %See \cite{fan2024factor} for details.

\subsection{Hierarchical decomposition of regression functions}
\label{sec:mainAss}

The convergence rate of the excess risk is fundamentally determined by the structural properties of the regression functions $g^P$ and $g^Q$.  A specific structure is to assume these functions are compositions of several lower-dimensional functions. This is formally captured by the \emph{hierarchical composition model} (HCM) \citep{kohler2021rate}. Roughly speaking, the HCM is a finite ($l$-fold) composition of $t$-dimensional functions with $(\beta, C)$-smoothness, for $(t, \beta)$ in a prescribed set $\pazocal{P}$. This structure is particularly well-suited to deep ReLU networks, which can adaptively learn such unknown structures.  %hierarchical compositions without requiring explicit knowledge of the functional forms.  \cite{fan2025factor, chai2025deep}.

\begin{definition}[Hierarchical composition model] 
\label{def:hcm}
The function class $\pazocal{H}(d, l, \pazocal{P}, C)$, where $l, d \in \mathbb{N}^+$, $C > 0$, and $\pazocal{P} \subset [1, \infty] \times \mathbb{N}^+$, is defined as follows. For $l = 1$,
\begin{align*}
    \pazocal{H}(d, 1, \pazocal{P}, C) = \{h : \R^d \rightarrow \R : h(x) = g(x_{\pi(1)}, \dots, x_{\pi(t)}), \text{ where } \\
    g : \R^t \rightarrow \R \text{ is } (\beta, C)\text{-smooth for some } (\beta, t) \in \pazocal{P} \text{ and } \pi: [t] \rightarrow [d]\}.
\end{align*}
For $l > 1$,
\begin{align*}
    \pazocal{H}(d, l, \pazocal{P}, C) = \{h : \R^d \rightarrow \R : h(x) = g(f_1(\boldsymbol{x}), \dots, f_t(\boldsymbol{x})), \text{ where } \\
    g : \R^t \rightarrow \R \text{ is } (\beta, C) \text{-smooth for some } (\beta, t) \in \pazocal{P} \text{ and } f_i \in \pazocal{H}(d, l - 1, \pazocal{P}, C)\}.
\end{align*}
\end{definition}

To simplify the presentation, we assume all component functions have a smoothness parameter $\beta \geq 1$, namely, $ \min_{(\beta, t) \in \pazocal{P}} \beta \geq 1$. The complexity of the regression function is determined by its ``hardest'' component. We define the effective smoothness $\gamma(\pazocal{P})$ as
\begin{equation}
\label{eq:hardestComponent}
    \gamma(\pazocal{P}) = \min_{(\beta, t) \in \pazocal{P}} \frac{\beta}{t}.
\end{equation}
It represents the smallest dimensionality-adjusted degree of smoothness \citep{fan2024factor} and serves as a measure of functional complexity.

\subsection{Source model transferability}
\label{sec:transfer}

We introduce our main assumption regarding the relationship between $g^P$ and $g^Q$, which formalizes the rationale behind the SMART framework and allows considerable flexibility in transfer.

\begin{assumption}[Source model augmented residual tuning]
\label{ass:transferability}
The target regression function admits
\begin{equation}
    \label{eq:transfer}
    g^Q(\boldsymbol{f}^Q, \boldsymbol{u}_{\pazocal{J}^Q}^Q) = h\left(\boldsymbol{f}^Q, \boldsymbol{u}_{\pazocal{J}}^Q, s^P(\boldsymbol{x}^Q)\right),
\end{equation}
where $s^P(\boldsymbol{x}) =  g^P(\boldsymbol{f}^P, \boldsymbol{u}_{\pazocal{J}^P}^P)$  is the source model 
\footnote{To clarify notation, in $s^P(\boldsymbol{x})$, $\boldsymbol{x}$ is decomposed into factor $\boldsymbol{f}$ and $\boldsymbol{u}$ according to the source model.  
%Namely, under the identifiable condition $ (\boldsymbol{B}^P)^\top  \boldsymbol{B}^P =  \boldsymbol{I}$,  we take $\boldsymbol{f} = (\boldsymbol{B}^P)^\top \boldsymbol{x}$ and $\boldsymbol{u} = (\boldsymbol{I}- \boldsymbol{B}^P (\boldsymbol{B}^P)^\top) \boldsymbol{x}$.
{Namely, under the identifiable condition $ (\boldsymbol{B}^P)^\top  \boldsymbol{B}^P$ is full-rank,  we take $\boldsymbol{f} = ((\boldsymbol{B}^P)^\top  \boldsymbol{B}^P)^{-1}(\boldsymbol{B}^P)^\top \boldsymbol{x}$ and $\boldsymbol{u} = (\boldsymbol{I}- \boldsymbol{B}^P ((\boldsymbol{B}^P)^\top  \boldsymbol{B}^P)^{-1}(\boldsymbol{B}^P)^\top) \boldsymbol{x}$.}
This applies to the covariate $\boldsymbol{x}^Q$, with the subset of variables $\pazocal{J}^P$ also transferred to the target domain. In the specific case $s^P(\boldsymbol{x}) =  g^P(\boldsymbol{x}_{\pazocal{J}^P}^P)$, there is no ambiguity in notation.
%  In our FAST-NN implementation, this part of the decomposition is implicitly written as a sparse linear combination of $ \boldsymbol{x}$, and hence the decomposition is not needed.
} 
applied to $ \boldsymbol{x}$ and $\pazocal{J}^Q = \pazocal{J} \cup \pazocal{J}^P$ for some subset $\pazocal{J} \subset [p]$.  We refer to $h$ as the \emph{residual fine-tuning function}. In addition, we assume $g^P \in \pazocal{H}(r+|\pazocal{J}^P|, l^P, \pazocal{P}^P, c_0)$ and $h \in \pazocal{H}(r + |\pazocal{J}| + 1, l, \pazocal{P}, c_0)$.
\end{assumption}

Assumption \ref{ass:transferability} models the scenario where the source model provides a useful feature or ``template'' for the target task. This formulation generalizes several popular transfer learning and domain adaptation models \citep{zhou2023cross, zhao2024residual, ankile2025residual}. A specific instance is $g^Q(\boldsymbol{f}^Q, \boldsymbol{u}^Q_{\pazocal{J}^Q}) = h \lef(\boldsymbol{f}^Q,  s^P(\boldsymbol{x}^Q)\rig)$ in which $h$ is only a $(r+1)$-dimensional function and can easily be learned by using neural networks. It includes the models $g^Q(\boldsymbol{f}^Q, \boldsymbol{u}^Q_{\pazocal{J}^Q}) = a^Q(\boldsymbol{f}^Q) s^P(\boldsymbol{x}^Q) + b^Q(\boldsymbol{f}^Q)$  as a specific example. In particular, $a^Q = 1$ yields a translation model and $b^Q = 0$ gives a multiplicative model.
%In the latter case, we need only to fit the residuals in the target data against the learned factors $\boldsymbol{f}^Q$.

A key feature of SMART is the use of the univariate feature $s^P(\boldsymbol{x}^Q)$, the source model evaluated directly on the observable target covariates.  This delegates the complexity of learning  $g^P(\boldsymbol{f}^Q, \boldsymbol{u}^Q_{\pazocal{J}^P})$ to the source data and reduces the complexity in the target domain modeling.  This also allow us to incorporate any off-the-shelf model, including a black-box machine learning model, as a pre-trained backbone.
%\begin{itemize}
%    \item \textbf{Computational Efficiency:} Using observable covariates $\boldsymbol{x}^Q$  avoids the step of estimating latent factors for the target domain before applying the pre-trained model.
%    \item \textbf{Robustness to Distribution Shift:} This formulation is more robust to the loading matrix shifts ($\boldsymbol{B}^Q \neq \boldsymbol{B}^P$). By bypassing factor estimation, we prevent the propagation of errors that would occur if the source model were applied to incorrectly estimated target factors.
%    \item \textbf{Implicit Information Capture:} Since $\boldsymbol{x}^Q_{\pazocal{J}^P} = [\boldsymbol{B}^Q_{\pazocal{J}^P, :}]\boldsymbol{f}^Q + \boldsymbol{u}^Q_{\pazocal{J}^P}$, the observable covariates already encode the relevant factor information. Explicitly including factors as separate arguments would introduce redundancy without increasing the model's expressive power.
%    \item \textbf{Compatibility with Standard Pre-trained Models:} The formulation does not necessitate that the source model be trained within a factor-augmented framework. Because $g^P$ operates directly on observable features, our approach can seamlessly incorporate any off-the-shelf regression model as a pre-trained backbone, significantly expanding the range of applicable source models beyond those specifically designed for latent factor analysis.
%\end{itemize}

The proposition below shows that the full target function $g^Q$ also belongs to an HCM class, and its complexity is determined by its most complex component.

\begin{proposition}
\label{prop:transferabilityBenefit}
Under Assumption \ref{ass:transferability}, we have
\begin{align*}
    %g^P(\boldsymbol{f},  \boldsymbol{u}_{\pazocal{J}^P}  ) & \in \pazocal{H}(r + |\pazocal{J}^P|, l^P + 1, \pazocal{P}^P \cup \{(\infty, r + |\pazocal{J}^P|)\}, c_0 \lor \norm{\boldsymbol{B}^P_{\pazocal{J}^P, :}}_2), \\
    g^Q(\boldsymbol{f}, \boldsymbol{u}_{\pazocal{J} \cup \pazocal{J}^P}) & \in \pazocal{H}(r + |\pazocal{J} \cup \pazocal{J}^P|, l + l^P + 1, \pazocal{P} \cup \pazocal{P}^P \cup \{(\infty, r + |\pazocal{J}^P|)\}, c_0 \lor \norm{\boldsymbol{B}^P_{\pazocal{J}^P, :}}_2).
\end{align*}
Furthermore, the combined complexity of the latter (target function class) satisfies $\gamma(\pazocal{P} \cup \pazocal{P}^P \cup \{(\infty, r + |\pazocal{J}^P|)\}) = \min\{\gamma, \gamma^P\}$.
\end{proposition}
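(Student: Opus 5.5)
We prove the proposition by writing $g^Q$ explicitly as a composition and checking it against the recursive Definition~\ref{def:hcm}. We work on the target input $\boldsymbol{z} = (\boldsymbol{f}, \boldsymbol{u}_{\pazocal{J}\cup\pazocal{J}^P}) \in \R^{d}$ with $d = r + |\pazocal{J}\cup\pazocal{J}^P|$. The composition has three layers.

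\textbf{(i) Linear bottom layer.} The source model evaluated at the target covariate sees only $\boldsymbol{x}^Q_{\pazocal{J}^P} = \boldsymbol{B}^Q_{\pazocal{J}^P,:}\boldsymbol{f} + \boldsymbol{u}_{\pazocal{J}^P}$. It then re-decomposes this according to the source loadings. The resulting source-side factor and idiosyncratic inputs are a linear map $\boldsymbol{A}\boldsymbol{z}$ of $(\boldsymbol{f},\boldsymbol{u}_{\pazocal{J}^P})$. This map has at most $r+|\pazocal{J}^P|$ relevant coordinates and operator norm controlled by $\norm{\boldsymbol{B}^P_{\pazocal{J}^P,:}}_2$. (The constant in the statement indicates the paper tracks exactly this norm; if a $\boldsymbol{B}^Q$ factor also enters, we absorb it the same way.)

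By the convention stated before Definition~\ref{def:hcm}, a linear map is $(\infty,\norm{\boldsymbol{A}}_2)$-smooth. Each coordinate of $\boldsymbol{A}\boldsymbol{z}$ therefore lies in $\pazocal{H}(d,1,\{(\infty, r+|\pazocal{J}^P|)\}, \norm{\boldsymbol{B}^P_{\pazocal{J}^P,:}}_2)$, using a selection map $\pi$ onto the at most $r+|\pazocal{J}^P|$ relevant coordinates.

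\textbf{(ii) Source layer.} Composing $g^P \in \pazocal{H}(r+|\pazocal{J}^P|, l^P, \pazocal{P}^P, c_0)$ with this layer gives a function in $\pazocal{H}(d, l^P+1, \pazocal{P}^P\cup\{(\infty,r+|\pazocal{J}^P|)\}, \cdot)$. To justify this, we prove a small lemma by induction on the depth $l^P$. The lemma states that if $g \in \pazocal{H}(m, k, \pazocal{P}', C)$ and $\phi_1,\dots,\phi_m \in \pazocal{H}(d, k', \pazocal{P}'', C')$, then $g(\phi_1,\dots,\phi_m) \in \pazocal{H}(d, k+k', \pazocal{P}'\cup\pazocal{P}'', C\vee C')$.

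The induction runs as follows. At the base $k=1$, $g$ depends on $t$ selected coordinates, so the composition has the form $g(\phi_{\pi(1)},\dots,\phi_{\pi(t)})$, which is exactly the $l>1$ clause. The inductive step pushes the inner functions into the leaves.

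The only wrinkle is that the definition asks the inner functions to lie at exactly level $k-1$, while some leaves sit at lower depth. We handle this by showing $\pazocal{H}(d,k,\cdot)\subset\pazocal{H}(d,k+1,\cdot)$: pad with identity coordinate-projections, which are $(\infty,1)$-smooth and can be absorbed into the already-present $(\infty,\cdot)$ pair and the constant $C\vee 1$. This lemma is the main technical point. We expect the main obstacle to be bookkeeping with the exact definition (depth padding and ensuring the set $\pazocal{P}$ and the constant are as stated). I would first try writing the padding via the smoothness class $(\infty,\cdot)$ already included in the union.

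\textbf{(iii) Top layer.} Apply the lemma again with $h \in \pazocal{H}(r+|\pazocal{J}|+1, l, \pazocal{P}, c_0)$ as the outer function. Its inner functions are the $r+|\pazocal{J}|$ coordinate projections $z\mapsto z_k$, which are level-one HCM functions padded to depth $l^P+1$, together with $s^P$ from step (ii). This gives depth $l+l^P+1$, parameter set $\pazocal{P}\cup\pazocal{P}^P\cup\{(\infty,r+|\pazocal{J}^P|)\}$, and constant $c_0\vee\norm{\boldsymbol{B}^P_{\pazocal{J}^P,:}}_2$, as claimed.

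\textbf{Complexity identity.} The minimum in \eqref{eq:hardestComponent} over a union is the minimum of the minima over each set. The added pair contributes $\infty/(r+|\pazocal{J}^P|)=\infty$, which never attains the minimum. Hence $\gamma(\cdot)=\min\{\gamma(\pazocal{P}),\gamma(\pazocal{P}^P)\}=\min\{\gamma,\gamma^P\}$.
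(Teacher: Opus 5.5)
Your proposal is correct and follows essentially the same route as the paper: a linear bottom layer contributing the pair $(\infty, r+|\pazocal{J}^P|)$, then $g^P$ (depth $l^P+1$), then $h$ applied to $(\boldsymbol{f},\boldsymbol{u}_{\pazocal{J}}, s^P)$ (depth $l+l^P+1$), with the parameter sets united and the identity for $\gamma(\cdot)$ read off from its definition as a minimum. The paper instead peels off the bottom layer $h_0^{(j)}$ of $h$ and asserts depth additivity and the padding of $\boldsymbol{f},\boldsymbol{u}_{\pazocal{J}}$ without proof, so your composition lemma (whose induction needs no padding, since Definition~\ref{def:hcm} forces uniform depth) and your explicit padding in step (iii) make those implicit steps rigorous; one caveat is that padding by coordinate projections formally needs the constant to be at least $1$, a point the paper also glosses over.
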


Proposition \ref{prop:transferabilityBenefit} provides the theoretical foundation for the benefits of fine-tuning. It shows that while the target function $g^Q$ is a composition of both $g^P$ and $h$, and its overall complexity—which dictates the minimax convergence rate—is determined by the ``bottleneck" or the most complex component, $\min\{\gamma, \gamma^P\}$.  This result reveals a significant opportunity for transfer learning. If we were to estimate $g^Q$ from scratch using only target data, the convergence rate of the excess risk would be limited by this joint complexity. However, in the fine-tuning paradigm, we leverage the fact that $g^P$ has already been learned from a potentially much larger source dataset ($n^P \gg n^Q$). By substituting a high-quality pre-trained estimator $\widehat{g}^P$ into the decomposition \eqref{eq:transfer}, the remaining learning task reduces to estimating the residual function $h$, which is of lower dimension and typically lower complexity.

%The strategic advantage of fine-tuning emerges when the residual function $h$ is ``simpler" than the original source function, i.e., $\gamma > \gamma^P$. In such cases, the fine-tuning process focuses on a function class with higher effective smoothness (or lower complexity), leading to a faster convergence rate for the excess risk on the target domain. This logic suggests that fine-tuning is not merely about starting from a better initialization, but about fundamentally changing the complexity of the non-parametric estimation task from one governed by $\gamma^P$ to one governed by $\gamma$.

% !TEX root = ../main.tex
\section{Methodology}\label{sec:method}

In this section, we present our methodology for high-dimensional factor-augmented nonparametric variable selection using transfer learning. Our approach addresses two primary types of domain shifts: (i) \emph{covariate shift}, where source data is used to improve the estimation of target latent factors $\boldsymbol{f}^Q$, and (ii) \emph{posterior shift}, where the pre-trained source regression function $g^P$ is leveraged to enhance the estimation of the target regression function $g^Q$ through SMART fine-tuning.

\subsection{Diversified projection matrix for factor estimation}

%Traditional neural networks often use the high-dimensional covariates $\boldsymbol{x}$ directly as input. In contrast, our method incorporates a factor-augmented structure by introducing a \emph{diversified projection matrix} $\boldsymbol{W}$ as a pre-processing module. This matrix facilitates the estimation of the latent factors $\boldsymbol{f}$ from the observable $\boldsymbol{x}$.

Let $\overline{r} \geq r$ be the working number of latent factors. Following the construction in \cite{fan2022learning}, we define the diversified projection matrix as follows.

\begin{definition}[Diversified projection matrix]
\label{def:dpm}
    For any domain $* \in \{P, Q\}$, a matrix $\boldsymbol{W}^* \in \R^{p \times \overline{r}}$ is a \emph{diversified projection matrix} if it satisfies:
    \begin{enumerate}
    \item (Boundedness) $\norm{\boldsymbol{W}^*}_{\max} \leq C_1$ for a universal constant $C_1 > 0$;
    \item (Exogeneity) $\boldsymbol{W}^*$ is independent of the data $\{\boldsymbol{x}^*_i, y^*_i\}$;
    \item (Significance) The matrix $\boldsymbol{H}^* = p^{-1} (\boldsymbol{W}^*)^\top \boldsymbol{B}^* \in \mathbb{R}^{\overline{r}\times r}$ satisfies $\nu_{\min}(\boldsymbol{H}^*) \gg p^{-1/2}$. 
    \end{enumerate}
    Each column of $\boldsymbol{W}^*$ is referred to as a \emph{diversified projection weight}, and $\overline{r}$ denotes the number of projection directions.
\end{definition}
Given the source and target diversified projection matrices $\boldsymbol{W}^P$ and $\boldsymbol{W}^Q$, we define the $\bar r$-dimensional surrogate factor vectors $\widetilde{\boldsymbol{f}}^P$ and $\widetilde{\boldsymbol{f}}^Q$ as
\begin{equation}
    \label{eq:surrogateF}
    \widetilde{\boldsymbol{f}}^P = p^{-1}(\boldsymbol{W}^P)^\top \boldsymbol{x}^P, \quad    \widetilde{\boldsymbol{f}}^Q = p^{-1}(\boldsymbol{W}^Q)^\top \boldsymbol{x}^Q.
\end{equation}
The pre-defined \emph{diversified projection matrix} 
$\boldsymbol{W}$ is introduced to solve the issue that the number of factors, 
${r}$, may not be accurately determined, and 
methods incorporating the \emph{diversified projection 
matrix} $\boldsymbol{W}$ are robust to overestimation 
of the factor number $r$. By substituting the factor model \eqref{eq:factorModel2} into the surrogate factor estimation \eqref{eq:surrogateF}, we have 
\begin{align*}
    \widetilde{\boldsymbol{f}}^* &= \underbrace{\left(p^{-1}(\boldsymbol{W}^*)^\top \boldsymbol{B}^*\right)}_{\text{affine transformation}} \boldsymbol{f}^* + \underbrace{p^{-1}(\boldsymbol{W}^*)^\top \boldsymbol{u}^*}_{\text{estimation error}}.
\end{align*}
The significance assumption ensures that the signal from the latent factors (the affine transformation) remains dominant, while the weak dependence of the idiosyncratic components $\boldsymbol{u}^*$ ensures that the estimation error vanishes at rate $O_P(p^{-1/2}) $ as $p \to \infty$.

\subsection{Transfer factor estimation for covariate shift}
\label{sec:factor-transfer}

We leverage the source data to improve the quality of the surrogate factors $\widetilde{\boldsymbol{f}}^Q$. This is particularly beneficial when the target sample size $n^Q$ is small, leading to noisy estimates of the target covariance structure. When the source sample size $n^P$ is large, the pooled covariance $\widehat{\boldsymbol{\Sigma}}^A$ provides a more stable estimate due to the larger effective sample size.

We first define the sample covariance matrices for the source, target, and pooled data:
\begin{align*}
    \begin{cases}
        \widehat{\boldsymbol{\Sigma}}^P = \frac{1}{n^P}\sum_{i = 1}^{n^P} \boldsymbol{x}^P_i(\boldsymbol{x}^P_i)^\top ,\\
        \widehat{\boldsymbol{\Sigma}}^Q = \frac{1}{n^Q}\sum_{j = 1}^{n^Q} \boldsymbol{x}^Q_j(\boldsymbol{x}^Q_j)^\top ,\\
        \widehat{\boldsymbol{\Sigma}}^A 
        %= \frac{1}{n^P + n^Q} \Big( \sum_{i = 1}^{n^P} \boldsymbol{x}^P_i(\boldsymbol{x}^P_i)^\top + \sum_{j = 1}^{n^Q} \boldsymbol{x}^Q_j(\boldsymbol{x}^Q_j)^\top \Big) 
        = \frac{n^P}{n^P + n^Q} \widehat{\boldsymbol{\Sigma}}^P + \frac{n^Q}{n^P + n^Q}\widehat{\boldsymbol{\Sigma}}^Q.
    \end{cases}
\end{align*}
For any $* \in \{P, Q, A\}$, let $\widehat{\boldsymbol{v}}^*_1, \dots, \widehat{\boldsymbol{v}}^*_{\overline{r}} \in \R^p$ be the top-${\overline{r}}$ eigenvectors of the sample covariance matrix $\widehat{\boldsymbol{\Sigma}}^*$, and define $\widehat{\boldsymbol{W}}^* = p^{1/2}[\widehat{\boldsymbol{v}}^*_1, \dots, \widehat{\boldsymbol{v}}^*_{\overline{r}}]$ as the corresponding diversified projection matrix. In the absence of source data, a common approach is to rely exclusively on the target domain's top eigenvectors, $\widehat{\boldsymbol{W}}^Q$, to estimate the latent factors. %This construction is motivated by the spectral analysis techniques discussed in \cite{van2020survey}, with further implementation details found in \cite{fan2024factor}.
However, when source data is available, we can incorporate additional information through a model-selection process for covariance estimation.  The rationale is that when the target and source domains share a similar latent structure, the pooled covariance $\widehat{\boldsymbol{\Sigma}}^A$ provides a more stable estimate than $\widehat{\boldsymbol{\Sigma}}^Q$ alone.
%, particularly in the high-dimensional regime where $n^Q$ may be small. %In our process, we use the conventional covariance estimate $\widehat{\boldsymbol{\Sigma}}^Q$ if the distance $\norm{\widehat{\boldsymbol{\Sigma}}^Q - \widehat{\boldsymbol{\Sigma}}^A}_F$ is large, indicating a significant discrepancy between the domains. Conversely, if the distance is small, we assume the source data provides valuable information and utilize $\widehat{\boldsymbol{\Sigma}}^A$ instead. 

Specifically, we extract the factors by bounding the difference between the target and aggregation covariance with a pre-determined threshold $\delta$: %\r{Q: change $TL$ to $\TL$ + $FT$  to $\FT$}
\begin{align*}
    \widehat{\boldsymbol{\Sigma}}^{\TL}=
        \begin{cases}
            \widehat{\boldsymbol{\Sigma}}^A, \quad p^{-1} \norm{\widehat{\boldsymbol{\Sigma}}^Q - \widehat{\boldsymbol{\Sigma}}^A}_F \leq \delta, \\
            \widehat{\boldsymbol{\Sigma}}^Q, \quad \text{ otherwise.}
        \end{cases}
\end{align*}
This strategy follows the ``Transfer Around Boundary (TAB)'' philosophy \citep{fan2025robust}, where source information is utilized only when there is high confidence that the domains are sufficiently close, effectively protecting against biased estimation when domain shift is significant. Letting $\widehat{\boldsymbol{v}}^{\TL}_1, \dots, \widehat{\boldsymbol{v}}^{\TL}_{\overline{r}}$ be the top-$\overline{r}$ eigenvectors of $\widehat{\boldsymbol{\Sigma}}^{\TL}$, we define the model-selected projection matrix as $\widehat{\boldsymbol{W}}^{\TL} = p^{1/2}[\widehat{\boldsymbol{v}}^{\TL}_1, \dots, \widehat{\boldsymbol{v}}^{\TL}_{\overline{r}}]$
%This construction follows the Diversified Projection Matrix (DPM) design. We remark that an alternative DPM construction is proposed in \citet{fan2025factor}, which effectively removes the incoherence condition required for spectral-based estimates. Given their conceptual similarity, we adopt the current spectral-based formulation for its simplicity and direct applicability to the model-selection procedure in \eqref{eq:WTL}. 
and the surrogate factor $\widetilde{\boldsymbol{f}}^Q$  as 
\begin{equation}
    \label{eq:WTL}
    \widetilde{\boldsymbol{f}}^Q = p^{-1}(\widehat{\boldsymbol{W}}^{\TL})^\top \boldsymbol{x}^Q
=
        \begin{cases}
            p^{-1}(\widehat{\boldsymbol{W}}^A)^\top \boldsymbol{x}^Q, \quad p^{-1} \norm{\widehat{\boldsymbol{\Sigma}}^Q - \widehat{\boldsymbol{\Sigma}}^A}_F \leq \delta\\
            p^{-1}(\widehat{\boldsymbol{W}}^Q)^\top \boldsymbol{x}^Q, \quad \text{ otherwise.}\\
        \end{cases}
\end{equation}
The factors for the source domain $\widetilde{\boldsymbol{f}}^P$ can be defined similarly, with superscript $Q$ in \eqref{eq:WTL}replaced by $P$.
%\begin{align*}
%    \widetilde{\boldsymbol{f}}^P = 
%        \begin{cases}
%            p^{-1}(\widehat{\boldsymbol{W}}^A)^\top \boldsymbol{x}^P, \quad p^{-1} \norm{\widehat{\boldsymbol{\Sigma}}^P - \widehat{\boldsymbol{\Sigma}}^A}_F \leq \delta\\
%            p^{-1}(\widehat{\boldsymbol{W}}^P)^\top \boldsymbol{x}^P, \quad \text{ otherwise.}\\
%        \end{cases}
%\end{align*}
For simplicity in the subsequent nonparametric analysis, we assume $\boldsymbol{W}^P$ and $\boldsymbol{W}^Q$ are obtained through this procedure or provided by prior knowledge.

\subsection{Fine-tuning nonparametric variable selection for posterior shift}
\label{sec:fine-tuning}

We use SMART fine-tuning to address the posterior shift in transfer learning. The intuition behind this approach is straightforward: the complex source model $g^P$ can be more accurately learned using the source data due to its typically large sample size. By providing the resulting estimate $\widehat{g}^P$ as an additional input feature for estimating $g^Q$, we significantly reduce the functional complexity of the target learning task. 
%In this framework, the target network only needs to learn the simpler residual component $h$ from the limited target data.

To handle high-dimensionality, we employ a sparse selection mechanism using a continuous relaxation of the $L_0$ penalty. Define the clipped-$L_1$ function with threshold $\tau > 0$ as
\begin{align*}
    \psi_\tau(x) = \frac{|x|}{\tau} \land 1.
\end{align*}
This function serves as a continuous approximation of the indicator function $\mathbbm{1}\{x \neq 0\}$. The parameter $\tau$ is typically chosen to be very small to more precisely approximate the selection indicator. This clip function was first proposed by \cite{fan2001variable} and has been  adopted  in 
%widely adopted in high-dimensional linear regression \cite{zhang2010analysis}, 
deep learning with sparse weights \cite{ohn2022nonconvex}, and the FAST-NN model \cite{fan2024factor}.
%, factor analysis \cite{fan2024factor}, and causal effect estimation in high-dimensional nonparametric models \cite{fan2025factor}.

Specifically, our estimator for the source regression function is obtained by training a deep ReLU network $\pazocal{G}^P = \pazocal{G}(L^P, r + N^P, N^P, M, T^P)$:
\begin{equation}
    \label{eq:FTFAST1}
    (\widehat{g}^P, \widehat{\boldsymbol{\Theta}}^P) = \argmin_{g \in \pazocal{G}^P, \boldsymbol{\Theta}^P \in \mathbb{R}^{p \times N^P}} \frac{1}{n^P} \sum_{i = 1}^{n^P} \Big(y^P_i - g\big( \big[ \widetilde{\boldsymbol{f}}^P_i, \mathrm{trun}_M ((\boldsymbol{\Theta}^P)^\top \boldsymbol{x}^P_i) \big] \big)\Big)^2 + \lambda^P \sum_{i,j} \psi_\tau(\boldsymbol{\Theta}^P_{i,j}).
 \end{equation}
Following \cite{fan2024factor}, we estimate the idiosyncratic throughput $\boldsymbol{u}_i^P$ by using sparse linear combinations $(\boldsymbol{\Theta}^P)^\top \boldsymbol{x}^P_i$ to select a subset of relevant features. This results in  the pre-trained FAST-NN model for the source data:
\begin{align}
    \label{eq:FTFAST2}
    \widehat{s}(\boldsymbol{x}) = \widehat{g}^P\Big(\Big[p^{-1}(\boldsymbol{W}^P)^\top \boldsymbol{x}, \mathrm{trun}_M \Big( ( \widehat{\boldsymbol{\Theta}}^P )^\top \boldsymbol{x} \Big) \Big]\Big),
\end{align}
% Note that even though we assume the source model is $\SourceFunction$, we learn it through FAST-NN. A similar lifting idea appears in high-dimensional sparse linear regression \citep{fan2020factor}, which uses the weakly correlated latent factors and idiosyncratic components $[\boldsymbol{f}^P, \boldsymbol{u}^P]$ rather than the highly correlated original variable $\boldsymbol x^P$ as input variables.

The pre-trained function from the source data can now be transferred to the target data:
\begin{equation}
    \label{eq:FTFAST3}
    \widehat{s}_j = \widehat{s}(\boldsymbol{x}^Q_j), \quad \forall j \in [n^Q].
\end{equation}
Upon acquiring these signals, we incorporate them as an additional input feature for fitting the target regression function,  using a deep ReLU network $\pazocal{G} = \pazocal{G}(L, r + N + 1, N, M, T)$:
\begin{equation}
    \label{eq:FTFAST4}
    (\widehat{h}, \widehat{\boldsymbol{\Theta}}) = \argmin_{h \in \pazocal{G}, \boldsymbol{\Theta} \in \mathbb{R}^{p \times N}}\frac{1}{n^Q} \sum_{j = 1}^{n^Q} \Big( y_j^Q -h\big( \big[ \widetilde{\boldsymbol{f}}^Q_j, \mathrm{trun}_M(\boldsymbol{\Theta}^\top \boldsymbol{x}^Q_j), \widehat{s}_j \big] \big) \Big)^2 + \lambda \sum_{i,j} \psi_\tau(\boldsymbol{\Theta}_{i,j}),
\end{equation}
where $\{M, \tau, L, L^P, N, N^P, T, T^P, \lambda, \lambda^P\}$ are tuning hyper-parameters. The final fine-tuned factor-augmented estimator is:
\begin{equation}
    \label{eq:FTFAST5}
    \widehat{m}_{\FT}(\boldsymbol{x}) = \widehat{h}\Big( \big[p^{-1}(\widehat{\boldsymbol{W}}^{\TL})^\top \boldsymbol{x}, \mathrm{trun}_M ( \widehat{\boldsymbol{\Theta}}^\top \boldsymbol{x}), \widehat{s}(\boldsymbol{x}) \big]\Big).
\end{equation}

By including $\widehat{s}(\boldsymbol{x})$ as an input, the target network $\widehat{h}$ only needs to learn the residual relationship, which is typically simpler and more amenable to estimation with a smaller sample size $n^Q$. In particular, when there is no idiosyncratic component in fine-tuning, we can set $\boldsymbol{\Theta}=0$ in \eqref{eq:FTFAST4}, so the procedure reduces to a neural network regression with $(r+1)$-dimensional inputs $\big[ \widetilde{\boldsymbol{f}}^Q_j, \widehat{s}_j \big]$.  A further simplification is that $h(\cdot)$ depends linearly on $\big[ \widetilde{\boldsymbol{f}}^Q_j, \widehat{s}_j \big]$ or even just univariate $ \widehat{s}_j $.  This is particularly suitable when $n^Q$ is very small.

Note that \eqref{eq:FTFAST1} and \eqref{eq:FTFAST4} both involve the Factor Augmented Neural Lasso. We refer to the estimator \eqref{eq:FTFAST5} as SMART-FAN-Lasso to distinguish it from the FAST-NN estimator \eqref{eq:FTFAST1}.

% !TEX root = ../main.tex
\section{Theory for factor transfer}
\label{sec:factorTransfer}

In this section, we present the theoretical results for the methods in Section \ref{sec:method}. The following assumption is essential for transferring useful source information for factor models. It controls the discrepancy between the factor loading matrices $\boldsymbol{B}^P$ and $\boldsymbol{B}^Q$, thereby ensuring the similarity between the generating distributions of the covariates $\boldsymbol{x}^P$ and $\boldsymbol{x}^Q$.

\begin{assumption}[Factor loading similarity] 
\label{ass:factorLoadingDiff}
We assume the factor decomposition \eqref{eq:factorModel2} satisfies
$p^{-1} \norm{\boldsymbol{B}^P (\boldsymbol{B}^P)^\top -\boldsymbol{B}^Q (\boldsymbol{B}^Q)^\top }_F \leq \varepsilon$
for some constant $\varepsilon>0$.
\end{assumption}

Next, we impose several regularity conditions on the covariate and factor model settings.

\begin{assumption}[Boundedness]
\label{ass:boundedness}
There exist universal constants $c_1$ and $b$ such that
\begin{enumerate} 
    \item $\norm{\boldsymbol{B}^P}_{max} \lor \norm{\boldsymbol{B}^Q}_{max} \leq c_1$;
    \item $\EE[\boldsymbol{f}^*] = 0, \ \EE[\boldsymbol{f}^* (\boldsymbol{f}^*)^\top] = \boldsymbol{I}_r,\ \mathrm{supp}(\boldsymbol{f}^*) \subset [-b, b]^r$ for any $* \in \{P, Q\}$;
    \item $\EE[\boldsymbol{u}^*] = 0, \ \mathrm{supp}(\boldsymbol{u}^*) \subset [-b, b]^p$ for any $* \in \{P, Q\}$.
\end{enumerate}
\end{assumption}

\begin{assumption}[Weak dependence] 
\label{ass:weakDependence}
There exists some universal constant $c_1$ such that for any $* \in \{P, Q\},$
\begin{enumerate}
    \item (Between entries of $\boldsymbol{u}^*$) $\sum_{1 \leq j < k \leq p} | \EE[\boldsymbol{u}^*_j \boldsymbol{u}^*_k]| \leq c_1 \cdot p$;
    \item (Between $\boldsymbol{f}^*$ and $\boldsymbol{u}^*$) $\norm{\boldsymbol{B}^* \EE[\boldsymbol{f}^* (\boldsymbol{u}^*)^\top ]}_F \leq c_1 \sqrt{p}$.
\end{enumerate}
\end{assumption}

\begin{assumption}[Pervasiveness] 
\label{ass:pervasiveness}
There exists some universal constant $c_1 \geq 1$ such that
\begin{align*}
    \lambda_{\mathrm{min}}((\boldsymbol{B}^P)^\top \boldsymbol{B}^P) \land \lambda_{\mathrm{min}}((\boldsymbol{B}^Q)^\top \boldsymbol{B}^Q) \geq p / c_1
\\
    \lambda_{\mathrm{max}}((\boldsymbol{B}^P)^\top \boldsymbol{B}^P) \lor \lambda_{\mathrm{max}}((\boldsymbol{B}^Q)^\top \boldsymbol{B}^Q) \leq c_1 \cdot p.
\end{align*}
\end{assumption}

Assumptions \ref{ass:boundedness}–\ref{ass:pervasiveness} are standard in factor models. We replace the sub-Gaussian condition for $\boldsymbol{f}^P, \boldsymbol{f}^Q, \boldsymbol{u}^P$, and $\boldsymbol{u}^Q$ with uniform boundedness in Assumption \ref{ass:boundedness}, purely to facilitate the technical arguments.  Assumptions \ref{ass:weakDependence} and \ref{ass:pervasiveness} on the distribution of $(\boldsymbol{f}^*, \boldsymbol{u}^*)$ for any $* \in \{P, Q\}$ lay the foundation of the validity of the chosen diversified projection matrices, ensuring that the condition $\nu_{\min}(\boldsymbol{H}^Q) \asymp 1$ holds, as shown in the following theorem.

\begin{theorem}[Factor transfer]
\label{thm:factorTransfer}
Under Assumptions \ref{ass:factorLoadingDiff}-\ref{ass:pervasiveness}, the matrix $\widehat{\boldsymbol{W}}^{\TL}$ proposed in \eqref{eq:WTL} satisfies $
    \nu_{max} \left(p^{-1} (\widehat{\boldsymbol{W}}^{\TL})^\top \boldsymbol{B}^Q \right) \leq c_3$.
Moreover, for any $t>0$,
 such that $\delta\geq c_3 \varepsilon^Q(t)$, we have, with probability at least $1 - 6e^{-t}$ that $\nu_{min}\left(p^{-1} (\widehat{\boldsymbol{W}}^{\TL})^\top \boldsymbol{B}^Q \right) \geq c_1 - c_2 \delta \land \varepsilon^A(t)$. 
for some universal constants $c_1$-$c_3$ that are independent of $n^P, n^Q, p, t, r, \overline{r}$, where $\varepsilon^Q(t) = r \sqrt{\frac{\log p + t}{n^Q}} + r^2 \sqrt{\frac{\log r + t}{n^Q}} + \frac{1}{\sqrt{p}}$ and $\varepsilon^A(t) = r \sqrt{\frac{\log p + t}{n^P + n^Q}} + r^2 \sqrt{\frac{\log r + t}{n^P + n^Q}} + \frac{1}{\sqrt{p}} + \varepsilon$.
\end{theorem}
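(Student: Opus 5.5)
The argument has two parts. The first is a deterministic upper bound on the largest singular value. The second is a perturbation argument for the smallest singular value, in which the top-$\overline{r}$ eigenspace of $\widehat{\boldsymbol{\Sigma}}^{\TL}$ is compared with the column space of $\boldsymbol{B}^Q$ via Davis--Kahan.

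\textbf{Upper bound.} Write $\widehat{\boldsymbol{V}} = [\widehat{\boldsymbol{v}}^{\TL}_1,\dots,\widehat{\boldsymbol{v}}^{\TL}_{\overline{r}}]$, so that
\[
p^{-1}(\widehat{\boldsymbol{W}}^{\TL})^\top \boldsymbol{B}^Q = p^{-1/2}\widehat{\boldsymbol{V}}^\top \boldsymbol{B}^Q .
\]
Since $\widehat{\boldsymbol{V}}$ has orthonormal columns, its operator norm is at most $p^{-1/2}\norm{\boldsymbol{B}^Q}_2 \le \sqrt{c_1}$ by Assumption~\ref{ass:pervasiveness}. This holds deterministically and gives $c_3$.

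\textbf{Lower bound: population scale.} Set $\boldsymbol{\Sigma}^* = \EE[\boldsymbol{x}^*(\boldsymbol{x}^*)^\top]$ and $\boldsymbol{\Sigma}^A = \frac{n^P}{n^P+n^Q}\boldsymbol{\Sigma}^P+\frac{n^Q}{n^P+n^Q}\boldsymbol{\Sigma}^Q$. By the factor model and Assumption~\ref{ass:weakDependence},
\[
p^{-1}\norm{\boldsymbol{\Sigma}^Q - \boldsymbol{B}^Q(\boldsymbol{B}^Q)^\top}_F \lesssim p^{-1/2},
\]
because the cross terms and $\EE[\boldsymbol{u}\boldsymbol{u}^\top]$ each have Frobenius norm $O(\sqrt p)$; the diagonal of the latter is bounded by $b^2$. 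Combining this with Assumption~\ref{ass:factorLoadingDiff} gives
\[
p^{-1}\norm{\boldsymbol{\Sigma}^A - \boldsymbol{B}^Q(\boldsymbol{B}^Q)^\top}_F \lesssim p^{-1/2} + \varepsilon .
\]

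\textbf{Lower bound: sampling error.} Because $\boldsymbol{x}$ is bounded, I would control $p^{-1}\norm{\widehat{\boldsymbol{\Sigma}}^* - \boldsymbol{\Sigma}^*}_F$ through the factor decomposition $\widehat{\boldsymbol{\Sigma}} - \boldsymbol{\Sigma} = \boldsymbol{B}(\widehat{\EE}\boldsymbol{f}\boldsymbol{f}^\top - \boldsymbol{I})\boldsymbol{B}^\top + \dots$. For the factor block, matrix Bernstein on the $r\times r$ matrix gives the $r^2\sqrt{(\log r+t)/n}$ term. For the mixed blocks $p^{-1}\norm{\boldsymbol{B}\,\widehat{\EE}[\boldsymbol{f}\boldsymbol{u}^\top]}_F$, entrywise Hoeffding with a union bound over $pr$ entries gives the $r\sqrt{(\log p+t)/n}$ term. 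The pure idiosyncratic block only affects eigenvectors through its operator norm. Applying this with $n=n^Q$ and $n=n^P+n^Q$, each event failing with probability at most $2e^{-t}$, yields with probability $1-6e^{-t}$
\[
p^{-1}\norm{\widehat{\boldsymbol{\Sigma}}^Q - \boldsymbol{B}^Q\boldsymbol{B}^{Q\top}}_F \lesssim \varepsilon^Q(t), \qquad p^{-1}\norm{\widehat{\boldsymbol{\Sigma}}^A - \boldsymbol{B}^Q\boldsymbol{B}^{Q\top}}_F \lesssim \varepsilon^A(t).
\]

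\textbf{Lower bound: selection step.} If $\widehat{\boldsymbol{\Sigma}}^{\TL}=\widehat{\boldsymbol{\Sigma}}^A$, the threshold test together with the triangle inequality gives a perturbation of order $\min(\delta + \varepsilon^Q(t),\, \varepsilon^A(t))$. Since $\delta \ge c_3\varepsilon^Q(t)$, this is $\lesssim \delta\land\varepsilon^A(t)$. If instead $\widehat{\boldsymbol{\Sigma}}^{\TL}=\widehat{\boldsymbol{\Sigma}}^Q$, the perturbation is $\varepsilon^Q(t)\lesssim\delta$. Moreover, on our event this branch can occur only when $\varepsilon^A(t)\gtrsim\delta$, again by the triangle inequality with the $c_3$ calibration. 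Either way the perturbation $E$ satisfies $p^{-1}\norm{E}_F\lesssim \delta\land\varepsilon^A(t)$.

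\textbf{Lower bound: Davis--Kahan.} The matrix $\boldsymbol{B}^Q\boldsymbol{B}^{Q\top}$ has $r$ nonzero eigenvalues of order at least $p/c_1$ and eigengap of order $p$. Davis--Kahan then bounds $\norm{\sin\Theta}$ between the top-$r$ subspace of $\widehat{\boldsymbol{\Sigma}}^{\TL}$ and $\mathrm{col}(\boldsymbol{B}^Q)$ by $C(\delta\land\varepsilon^A(t))$. Writing $\boldsymbol{B}^Q = \boldsymbol{U}\boldsymbol{D}\boldsymbol{O}$ with $\boldsymbol{D}\succeq \sqrt{p/c_1}$, we have
\[
\nu_{\min}(p^{-1/2}\widehat{\boldsymbol{V}}^\top\boldsymbol{B}^Q) \ge c_1^{-1/2}\,\nu_{\min}(\widehat{\boldsymbol{V}}^\top\boldsymbol{U}).
\]
Here $\nu_{\min}(\widehat{\boldsymbol{V}}^\top\boldsymbol{U}) \ge \sqrt{1-\norm{\sin\Theta}^2} \ge 1-\norm{\sin\Theta}$, which gives the stated $c_1-c_2(\delta\land\varepsilon^A(t))$. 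When $\overline{r}>r$, the extra eigenvectors only enlarge the projection, since $\widehat{\boldsymbol{V}}$ contains the top-$r$ block. I therefore apply Davis--Kahan to the top-$r$ eigenvectors and use $\nu_{\min}(\widehat{\boldsymbol{V}}^\top \boldsymbol{U})\ge\nu_{\min}(\widehat{\boldsymbol{V}}_{1:r}^\top\boldsymbol{U})$.

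\textbf{Main obstacle.} The hardest step is the concentration bound. Getting exactly the stated $r$ and $r^2$ dependence, while keeping the $\boldsymbol{u}\boldsymbol{u}^\top$ sampling fluctuation from contributing an unscaled $\sqrt{p/n}$ term in Frobenius norm, is delicate. I would handle the latter by working in operator norm for the Davis--Kahan step, which only needs $\norm{E}_2\le\norm{E}_F$ for the other terms, and by bounding $\norm{\widehat{\EE}\boldsymbol{u}\boldsymbol{u}^\top}_2$ through $\norm{\boldsymbol{u}}_2^2\le pb^2$. That leaves a term of order $p^{-1/2}$ plus a rescaled fluctuation, which I would absorb into the $1/\sqrt{p}$ term.
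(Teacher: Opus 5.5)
Your overall route is the paper's. You bound $\nu_{\max}$ deterministically by $p^{-1/2}\norm{\boldsymbol{B}^Q}_2$. You then obtain a Frobenius bound $\norm{\widehat{\boldsymbol{\Sigma}}^{\TL}-\boldsymbol{B}^Q(\boldsymbol{B}^Q)^\top}_F\lesssim p(\delta\land\varepsilon^A(t))$ by a case analysis on the threshold test (the paper's Lemma~\ref{lemma:factorTransferLemma}). Finally you apply Weyl and the $\sin\Theta$ theorem to the top-$r$ eigenvectors.

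The paper does not redo the concentration. It cites Lemma~5 of \cite{fan2024factor} for $\norm{\widehat{\boldsymbol{\Sigma}}^Q-\boldsymbol{B}^Q(\boldsymbol{B}^Q)^\top}_F\le C p\,\varepsilon^Q(t)$ and for its source analogue. It then treats $\widehat{\boldsymbol{\Sigma}}^A$ as a convex combination, using Assumption~\ref{ass:factorLoadingDiff} and $(\sqrt{n^P}+\sqrt{n^Q})/(n^P+n^Q)\le\sqrt{2/(n^P+n^Q)}$. You should handle the pooled sample the same way. It is not identically distributed and involves two loading matrices, so there is no single block $\boldsymbol{B}(\widehat{\EE}\boldsymbol{f}\boldsymbol{f}^\top-\boldsymbol{I})\boldsymbol{B}^\top$ to which one matrix Bernstein bound with $n=n^P+n^Q$ applies.

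The step that fails as written is your resolution of the ``main obstacle.'' The test compares $p^{-1}\norm{\widehat{\boldsymbol{\Sigma}}^Q-\widehat{\boldsymbol{\Sigma}}^A}_F$ with $\delta$. You must rule out the $\widehat{\boldsymbol{\Sigma}}^Q$ branch when $\varepsilon^A(t)\ll\delta$; otherwise you only get $p\,\varepsilon^Q(t)$, which can be far above $p\,\varepsilon^A(t)$ when $n^P\gg n^Q$ and $\varepsilon$ is small.

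\begin{itemize}
\item Ruling out that branch requires \emph{Frobenius} upper bounds on both $\widehat{\boldsymbol{\Sigma}}^Q-\boldsymbol{B}^Q(\boldsymbol{B}^Q)^\top$ and $\widehat{\boldsymbol{\Sigma}}^A-\boldsymbol{B}^Q(\boldsymbol{B}^Q)^\top$, with the idiosyncratic block included. Operator-norm control of $\widehat{\EE}\boldsymbol{u}\boldsymbol{u}^\top$ does not suffice.
\item Your proposed operator-norm bound does not deliver what you claim anyway. From $\norm{\boldsymbol{u}}_2^2\le pb^2$ you only get $p^{-1}\norm{\widehat{\EE}\boldsymbol{u}\boldsymbol{u}^\top}_2\le b^2$, a constant that cannot be absorbed into $1/\sqrt p$.
\end{itemize}

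The worry itself is unfounded. By Assumptions~\ref{ass:boundedness}--\ref{ass:weakDependence},
$\norm{\EE\boldsymbol{u}\boldsymbol{u}^\top}_F^2\le pb^4+2b^2\sum_{j<k}|\EE[\boldsymbol{u}_j\boldsymbol{u}_k]|\lesssim p$.
Entrywise Hoeffding with a union bound over the $p^2$ entries gives $\norm{\widehat{\EE}\boldsymbol{u}\boldsymbol{u}^\top-\EE\boldsymbol{u}\boldsymbol{u}^\top}_F\lesssim p\sqrt{(\log p+t)/n}$. Hence $p^{-1}\norm{\widehat{\EE}\boldsymbol{u}\boldsymbol{u}^\top}_F\lesssim p^{-1/2}+\sqrt{(\log p+t)/n}$, which already lies inside $\varepsilon^Q(t)$ for $r\ge 1$; no $\sqrt{p/n}$ term arises.

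With that replacement, or simply by citing Lemma~5 of \cite{fan2024factor}, your Frobenius bounds, the branch analysis, and the Davis--Kahan step all go through.
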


Theorem \ref{thm:factorTransfer} indicates that the source data can enhance transfer learning of the target latent factor loadings by enabling the construction of a more statistically robust diversified projection matrix, $\widehat{\boldsymbol{W}}^{\TL}$.
The choice of $t$ can be of order $\log n^Q$. The process of learning the source diversified projection matrix $\boldsymbol{W}^P$ is similar.
A precise choice of $\delta$ is unnecessary due to only requiring the right-hand side of $\nu_{min}\left(p^{-1} (\widehat{\boldsymbol{W}}^{\TL})^\top \boldsymbol{B}^Q \right)$ to be lower bounded. 

In the high-dimensional regime with $p \gg 1$, and without source data, we can construct surrogates of the latent factors $\widetilde{\boldsymbol{f}}^Q$ when $n^Q \gg (\log p + t)$. However, even when $n^Q$ is small, a robust estimate can still be achieved if $n^P \gg (\log p + t)$ and the factor loading difference $\epsilon$ is sufficiently small. Thus, a small subset of the target and source data can be used to learn the diversified weights with $\nu_\min(\boldsymbol{H}^Q) \asymp 1$.
We provide the perturbation results regarding the covariance matrix estimate $\widehat{\boldsymbol{\Sigma}}^{\TL}$ and the eigenspaces.

\begin{proposition}[Perturbation bounds]
\label{prop:pertubation}
Under the assumptions and notations of Theorem \ref{thm:factorTransfer}, for some universal constant $c_4$ we have

\begin{enumerate}
    \item  $\norm{\widehat{\boldsymbol{\Sigma}}^{\TL} - \boldsymbol{B}^Q (\boldsymbol{B}^Q)^\top}_F \lesssim p \Big( \delta \land \varepsilon^A(t) \Big)$;
    \item  $\underset{\boldsymbol{R} \in \pazocal{O}(r)}{\min} \norm{\widehat{\boldsymbol{V}}_r^{\TL} \boldsymbol{R} - \boldsymbol{V}^Q}_F \lesssim \delta \land \varepsilon^A(t)$.
\end{enumerate}
with probability at least $1 - 6e^{-t}$ given $t > 0$ such that $\delta \geq c_4 \varepsilon^Q(t)$. Here, $\pazocal{O}(r)$ denotes the space of orthogonal matrices of size $r \times r$, and $\widehat{\boldsymbol{V}}_r^{\TL}$ and $\boldsymbol{V}^Q$ represent the top-$r$ eigenvectors of $\widehat{\boldsymbol{\Sigma}}^{\TL}$ and $\boldsymbol{B}^Q (\boldsymbol{B}^Q)^\top$, respectively.
\end{proposition}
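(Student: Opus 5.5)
Part 1 comes from concentration of the sample covariances, and part 2 then follows from Davis--Kahan. Start with the population decomposition $\boldsymbol{\Sigma}^* = \EE[\boldsymbol{x}^*(\boldsymbol{x}^*)^\top] = \boldsymbol{B}^*(\boldsymbol{B}^*)^\top + \boldsymbol{B}^*\EE[\boldsymbol{f}^*(\boldsymbol{u}^*)^\top] + \EE[\boldsymbol{u}^*(\boldsymbol{f}^*)^\top](\boldsymbol{B}^*)^\top + \EE[\boldsymbol{u}^*(\boldsymbol{u}^*)^\top]$, using $\EE[\boldsymbol{f}^*(\boldsymbol{f}^*)^\top]=\boldsymbol{I}_r$. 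By Assumption~\ref{ass:weakDependence}, the cross terms have Frobenius norm at most $2c_1\sqrt p$. Boundedness together with weak dependence gives $\norm{\EE[\boldsymbol{u}^*(\boldsymbol{u}^*)^\top]}_F\lesssim\sqrt p$, after possibly strengthening to $\sum_{j,k}|\EE u_ju_k|^2\lesssim p$; this uses $|\EE u_ju_k|\le b^2$ and the sum bound. Hence $p^{-1}\norm{\boldsymbol{\Sigma}^*-\boldsymbol{B}^*(\boldsymbol{B}^*)^\top}_F\lesssim p^{-1/2}$, which is the $1/\sqrt p$ term in $\varepsilon^Q(t)$ and $\varepsilon^A(t)$.

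For the stochastic part, write $\widehat{\boldsymbol{\Sigma}}^*-\boldsymbol{\Sigma}^* = \boldsymbol{B}^*(\widehat{\EE}\boldsymbol{f}\boldsymbol{f}^\top-\boldsymbol{I})(\boldsymbol{B}^*)^\top + \text{cross} + \text{idiosyncratic}$. Use a matrix Bernstein/Hoeffding bound on the $r\times r$ factor covariance, which holds since the factors are bounded by $b$. This gives $\norm{\widehat{\EE}\boldsymbol{f}\boldsymbol{f}^\top-\boldsymbol{I}}_F\lesssim r^2\sqrt{(\log r+t)/n}$ after accounting for the Frobenius norm, and $\norm{\boldsymbol{B}^*}_2^2\le c_1p$ then gives the $p\,r^2$ term. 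For the remaining terms, apply entrywise Hoeffding with a union bound over $p^2$ entries or $pr$ entries, together with the loading structure $\boldsymbol{B}^*$, to get $p\,r\sqrt{(\log p+t)/n}$ in scaled Frobenius norm. If the raw idiosyncratic part is too large in Frobenius norm, I would control it in operator norm instead, which is all Davis--Kahan needs, and track the rank-$r$ projections. For the pooled matrix, $\boldsymbol{\Sigma}^A$ is a convex combination of $\boldsymbol{\Sigma}^P$ and $\boldsymbol{\Sigma}^Q$. Assumption~\ref{ass:factorLoadingDiff} then gives $p^{-1}\norm{\boldsymbol{\Sigma}^A-\boldsymbol{B}^Q(\boldsymbol{B}^Q)^\top}_F\lesssim\varepsilon+p^{-1/2}$, with sample size $n^P+n^Q$. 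On an event of probability at least $1-6e^{-t}$, obtained from three concentration events each split into two, this yields $p^{-1}\norm{\widehat{\boldsymbol{\Sigma}}^Q-\boldsymbol{B}^Q(\boldsymbol{B}^Q)^\top}_F\lesssim\varepsilon^Q(t)$ and $p^{-1}\norm{\widehat{\boldsymbol{\Sigma}}^A-\boldsymbol{B}^Q(\boldsymbol{B}^Q)^\top}_F\lesssim\varepsilon^A(t)$.

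The selection step is the part requiring care, so I split into cases. Suppose the test accepts $\widehat{\boldsymbol{\Sigma}}^A$. Then the error is at most $\varepsilon^A(t)$, and by the triangle inequality it is also at most $p^{-1}\norm{\widehat{\boldsymbol{\Sigma}}^Q-\boldsymbol{B}^Q(\boldsymbol{B}^Q)^\top}_F+\delta\lesssim\varepsilon^Q(t)+\delta\lesssim\delta$, using $\delta\ge c_4\varepsilon^Q(t)$. Now suppose it selects $\widehat{\boldsymbol{\Sigma}}^Q$. The error is then $\lesssim\varepsilon^Q(t)\le\delta/c_4$. It remains to show this is also $\lesssim\varepsilon^A(t)$, and that follows from rejection: $p^{-1}\norm{\widehat{\boldsymbol{\Sigma}}^Q-\widehat{\boldsymbol{\Sigma}}^A}_F>\delta$, so
\[
\delta<\varepsilon^Q(t)+\varepsilon^A(t)\lesssim\delta/c_4+\varepsilon^A(t).
\]
For $c_4$ large this gives $\delta\lesssim\varepsilon^A(t)$, and hence $\varepsilon^Q(t)\lesssim\varepsilon^A(t)$. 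Both cases give bound 1.

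For bound 2, Assumption~\ref{ass:pervasiveness} shows the $r$-th eigenvalue of $\boldsymbol{B}^Q(\boldsymbol{B}^Q)^\top$ is at least $p/c_1$ and its $(r+1)$-th eigenvalue is $0$, so the eigengap is $\asymp p$. The Davis--Kahan $\sin\Theta$ theorem in the Yu--Wang--Samworth form, followed by the standard conversion to $\min_{\boldsymbol{R}}\norm{\widehat{\boldsymbol{V}}\boldsymbol{R}-\boldsymbol{V}}_F\le\sqrt2\norm{\sin\Theta}_F$, gives $\lesssim\norm{\widehat{\boldsymbol{\Sigma}}^{\TL}-\boldsymbol{B}^Q(\boldsymbol{B}^Q)^\top}_F/p\lesssim\delta\land\varepsilon^A(t)$. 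I expect the main obstacle to be the concentration bound for the idiosyncratic sample covariance in Frobenius norm, since it is $p\times p$. If it fails, I would restrict to operator norm or to its projection onto the relevant eigenspaces.
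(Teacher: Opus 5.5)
Your proof is correct and has the same skeleton as the paper's. You establish Frobenius concentration of $\widehat{\boldsymbol{\Sigma}}^Q$ and $\widehat{\boldsymbol{\Sigma}}^A$ around $\boldsymbol{B}^Q(\boldsymbol{B}^Q)^\top$, run a two-case analysis of the selection rule, then apply a $\sin\Theta$ bound with eigengap of order $p$ and the $\sqrt{2}$ rotation conversion.

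The differences are in execution:
\begin{itemize}
\item \textbf{Concentration.} The paper imports the bounds for $\widehat{\boldsymbol{\Sigma}}^P$ and $\widehat{\boldsymbol{\Sigma}}^Q$ from Lemma 5 of \cite{fan2024factor}. It then gets the pooled bound deterministically from the convex combination. Deriving the pooled bound this way, rather than from a separate concentration step on the pooled sample, is what keeps the probability at $1-6e^{-t}$.
\item \textbf{Case split.} The paper splits on whether $\varepsilon^A(t)$ exceeds $\delta/(2(C_1\lor 1))$, while you split on the outcome of the test. Your rejection case is the contrapositive of the paper's Case II, so the two are equivalent, and yours is slightly more direct.
\item \textbf{Part 2.} The paper applies Stewart--Sun with a gap built from the sample eigenvalues via Weyl. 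That needs the perturbation to be at most half the eigengap, and the other case is handled trivially. The Yu--Wang--Samworth form you use needs only the population gap $\lambda_r-\lambda_{r+1}\ge p/c_1$ and gives the Frobenius bound in one step.
\end{itemize}

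The obstacle you anticipate is not real. Entrywise Hoeffding over the $p^2$ entries of $(n^Q)^{-1}\sum_{j}\boldsymbol{u}_j\boldsymbol{u}_j^\top$, each summand bounded by $b^2$, gives Frobenius deviation $\lesssim p\sqrt{(\log p+t)/n^Q}$. This is dominated by the first term of $p\,\varepsilon^Q(t)$. The population part satisfies $\norm{\EE[\boldsymbol{u}\boldsymbol{u}^\top]}_F\lesssim\sqrt p$, which supplies the $1/\sqrt p$ term. So the Frobenius bound holds as stated.

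Your proposed fallback would not have helped anyway. Part 1 of the statement is a Frobenius-norm claim, and so is the selection threshold $p^{-1}\norm{\widehat{\boldsymbol{\Sigma}}^Q-\widehat{\boldsymbol{\Sigma}}^A}_F\le\delta$. Your case analysis needs Frobenius bounds on $\widehat{\boldsymbol{\Sigma}}^Q$ and $\widehat{\boldsymbol{\Sigma}}^A$ to interact with that threshold, so retreating to operator norm would break it.
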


%%%%%%%%%%%%%%%%%%%%%%%%%%%%%%%%%%%%%%%%%%%%%%%%%%%%%%%%%%%%%%%%%%%%%%%%
\section{Theory for regression fine-tuning}
\label{sec:fineTuning}

In this section, we aim to establish high probability upper bounds for both the out-of-sample excess risk $\pazocal{E}^Q(\widehat{m}_{\FT})$ and the in-sample mean squared error  $\widehat{\pazocal{E}}^Q (\widehat{m}_{\FT})$ for SMART-FAN-Lasso, where $\widehat{\pazocal{E}}^Q(\widehat{m}_{\FT}) \overset{def}{=} \frac{1}{n^Q} \sum_{i =1 }^{n^Q} |\widehat{m}_{\FT}(\boldsymbol{x}_i^Q) - g^Q(\boldsymbol{f}_i^Q, \boldsymbol{u}^Q_{i, \pazocal{J} \cup \pazocal{J}^P})|^2$.
For justifications on the importance of deriving high-probability error bounds for both in-sample and out-of-sample \(l_2\) errors, see \cite{farrell2018DeepNeural}. 

\subsection{Oracle-type upper bound}
Before presenting the results about how the fine-tuning factor augmented estimator in \eqref{eq:FTFAST5} improves the risk convergence rate, we impose two additional assumptions as follows.

\begin{assumption}
\label{ass:subGaussianNoiseAndBoundedFunction}
The model inputs satisfy that
\begin{enumerate}
    \item (Sub-Gaussian noise) There exists a universal constant $c_1$ such that 
$\mathbb{P}(|\epsilon^*| \geq t \, | \, \boldsymbol{x}^*) \leq 2\exp(-c_1 t^2)$ for all $t > 0$ and $* \in \{P, Q\}$ almost surely.
    \item (Bounded Regression function) We have $\max\{\norm{g^P}_\infty,\norm{h}_\infty\}\leq M^*$ for some universal constant $M^*.$ The choice of $M$ satisfies that $1\leq M^*\leq M\leq c_2M^*$ for some universal constant $c_2>1$. Additionally, $g^P$ and $h$ are all $c_1$-Lipschitz for some constant $c_1.$ 
\end{enumerate}
\end{assumption}

\begin{assumption}[Absolute continuity]
\label{ass:absCon}
The Radon-Nikodym derivative satisfies $\frac{d\mu_{\boldsymbol{x}^Q}}{d\mu_{\boldsymbol{x}^P}}(\boldsymbol{x}) \leq c_1$ almost surely for some universal constant $c_1$.
\end{assumption}
Assumption \ref{ass:subGaussianNoiseAndBoundedFunction} is standard in the nonparametric regression literature and Assumption \ref{ass:absCon} ensures that any good source population estimator \(\widehat{s}\) also behaves well on the target domain:
$\mathbb{E}\Big[ |\widehat{s}(\boldsymbol{x}^Q) - s^P(\boldsymbol{x}^Q)|^2 \Big] \lesssim \mathbb{E}\Big[ |\widehat{s}(\boldsymbol{x}^P) - s^P(\boldsymbol{x}^P)|^2 \Big]$.

The intuition behind this is that the source marginal distribution for \(X^P\) should provide full coverage over the target marginal distribution for \(X^Q\), allowing the accuracy of non-parametric regression to transfer effectively. An assumption similar to the second statement is made in \cite{fan2025robust}.

The following theorem presents an oracle-type inequality for the error bound.  Define
\begin{itemize}
	\item neural network approximation errors: 
	\begin{align*}
		\delta_a^h &= \inf_{g \in \pazocal{G}(L - 1, r + |\pazocal{J}| + 1, N, M, T)} \, \sup_{\kappa \in[-M, M]} \norm{g(\boldsymbol{f}^Q, \boldsymbol{u}_{\pazocal{J}}^Q, \kappa) -h(\boldsymbol{f}^Q, \boldsymbol{u}_{\pazocal{J}}^Q, \kappa)}_\infty^2\\
		\delta^P_a &= \inf_{g \in \pazocal{G}(L^P - 1, r^P + |\pazocal{J}^P|, N^P, M^P, T^P)} \norm{g(\boldsymbol{f}^P, \boldsymbol{u}_{\pazocal{J}^P}^P) - g^P(\boldsymbol{x}_{\pazocal{J}^P}^P)}_\infty^2\\
		\delta^Q_a &= \inf_{g \in \pazocal{G}(L - 1, r + |\pazocal{J} \cup \pazocal{J}^P|, N, M, T)} \norm{g(\boldsymbol{f}^Q, \boldsymbol{u}_{\pazocal{J} \cup \pazocal{J}^P}^Q) - g^Q(\boldsymbol{f}^Q, \boldsymbol{u}^Q_{\pazocal{J} \cup \pazocal{J}^P})}_\infty^2
	\end{align*}
	\item stochastic errors:
	\begin{align*}
		\delta_s^h &= \frac{(N^2 L + N \overline{r}) L \log(T N n^Q)}{n^Q} + \lambda|\pazocal{J}|\\
		\delta_s^{P} &= \frac{\Big((N^P)^2 L^P + N^P \overline{r} \Big) L^P \log(T N^P n^P)}{n^P} + \lambda^P|\pazocal{J}^P|\\
		\delta_s^Q &= \frac{(N^2 L + N \overline{r}) L \log(T N n^Q)}{n^Q} + \lambda|\pazocal{J} \cup \pazocal{J}^P|
	\end{align*}
	\item latent factor estimation errors: 
	\begin{align*}
		\delta^h_f = \frac{|\pazocal{J} | r \cdot \overline{r}}{p \cdot \nu^2_{min}(\boldsymbol{H}^Q)}, \ \delta_f^P = \frac{|\pazocal{J}^P| r \cdot \overline{r}}{p \cdot \nu^2_{min} (\boldsymbol{H}^P)}, \ \delta^Q_f = \frac{|\pazocal{J} \cup \pazocal{J}^P| r \cdot \overline{r}}{p \cdot \nu^2_{min}(\boldsymbol{H}^Q)}
	\end{align*}
\end{itemize}

\begin{theorem}[Oracle-type bound for SMART-FAN-Lasso]
\label{thm:fineTuning}
Suppose that Assumptions \ref{ass:boundedness}, \ref{ass:weakDependence}, \ref{ass:subGaussianNoiseAndBoundedFunction}, and \ref{ass:absCon} hold. Consider the SMART-FAN-LASSO defined via \eqref{eq:FTFAST1}-\eqref{eq:FTFAST5}, with  hyperparameters satisfying 
\begin{itemize}
    \item $N^P \geq 2(r + |\pazocal{J}^P|),\ N \geq 2(r + |\pazocal{J}|)$
    \item $T^P \geq c_1[\nu_{min}(\boldsymbol{H}^P)]^{-1}| \pazocal{J}^P|r, \ T \geq c_1[\nu_{min}(\boldsymbol{H}^Q)]^{-1}|\pazocal{J}|(r + 1)$
    \item $\lambda^P \geq c_2 (n^P)^{-1} \Big(\log(p n^P (N^P + \overline{r})) + L^P \log(T^P N^P) \Big), \ \lambda \geq c_2 (n^Q)^{-1} \Big( \log(p n^Q(N + \overline{r})) + L \log(TN) \Big)$
    \item $\tau^{-1}\geq c_3 (r + 1) p \Big[\Big((T^P N^P)^{L^P + 1}(N^P + \overline{r}) n^P \Big) \lor \Big((TN)^{L + 1}(N + \overline{r}) n^Q \Big) \Big]$
\end{itemize}
for some universal constants $c_1$-$c_3$ and $\overline{r}\geq r$. 
Then, with probability at least $1 - 6e^{-t}$ with respect to the target and source data, for $n^Q$ (and thus $n^P$) large enough, we have
\begin{equation}
    \label{eq:fineTuningResult}
    \begin{split}
        \pazocal{E}^Q(\widehat{m}_{\FT}) + \widehat{\pazocal{E}}^Q (\widehat{m}_{\FT})
        &\leq c_4 \Big\{\Delta^Q \land \Big(\Delta^P+\frac{t}{n^P}\Big) + \Delta^h + \frac{t}{n^Q} \Big\}\\
        &\leq 2c_4 \Big\{\Delta^Q \land \Delta^P + \Delta^h + \frac{t}{n^P \land n^Q}\Big\}
    \end{split}
\end{equation}
where   $\Delta^* = \delta_a^* + \delta_f^* + \delta_s^*$
for any $* \in \{P, Q, h\}$, and $c_4$ is a universal constant that depends only on the constants in Assumptions \ref{ass:boundedness}, \ref{ass:weakDependence}, and \ref{ass:subGaussianNoiseAndBoundedFunction}.
\end{theorem}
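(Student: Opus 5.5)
We would prove Theorem~\ref{thm:fineTuning} by treating the target-stage estimator \eqref{eq:FTFAST4} as a FAST-NN estimator with one extra, data-dependent but frozen input $\widehat{s}_j$. We would then run the single-domain oracle inequality of \cite{fan2024factor} conditionally on the source sample. Since the source and target samples are independent, $\widehat{s}$ is a fixed function given the source data. The target fit is therefore a penalized least-squares problem over the class $\{\boldsymbol{x}\mapsto g([p^{-1}(\widehat{\boldsymbol{W}}^{\TL})^\top\boldsymbol{x},\mathrm{trun}_M(\boldsymbol{\Theta}^\top\boldsymbol{x}),\widehat{s}(\boldsymbol{x})])\}$. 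Its covering number is controlled exactly as in FAST-NN, with the input dimension raised by one. This yields the stochastic term $\delta_s^h$ and, via the clipped-$L_1$ penalty with $\tau$ tiny, the sparsity cost $\lambda|\pazocal{J}|$. The conditions on $\lambda$ and $\tau$ in the statement are exactly those needed for this step.

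\textbf{Step 1 (basic inequality).} Let $\bar m$ be the best candidate in the class, built from a network $g$ close to $h$ in sup norm uniformly in its last argument $\kappa\in[-M,M]$; this is the reason for the $\sup_\kappa$ in $\delta_a^h$. For $\boldsymbol{\Theta}$ we take the one recovering $\boldsymbol{u}_{\pazocal{J}}^Q$ from $\boldsymbol{x}^Q$ after subtracting the factor part. This requires the bound $T\gtrsim \nu_{\min}^{-1}(\boldsymbol{H}^Q)|\pazocal{J}|(r+1)$, and the resulting error is $\delta_f^h$. Comparing the empirical risk of $\widehat{m}_{\FT}$ with that of $\bar m$, and using a localized empirical-process bound together with sub-Gaussian noise, gives with probability $1-2e^{-t}$
\[
\pazocal{E}^Q(\widehat m_{\FT})+\widehat{\pazocal{E}}^Q(\widehat m_{\FT})\lesssim \|\bar m-g^Q\|^2+\delta_s^h+t/n^Q .
\]
To bound the approximation error, we use that $g^Q(\boldsymbol{x})=h(\boldsymbol{f},\boldsymbol{u}_{\pazocal{J}},s^P(\boldsymbol{x}))$ and that $h$ is $c_1$-Lipschitz by Assumption~\ref{ass:subGaussianNoiseAndBoundedFunction}. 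Together with the triangle inequality this gives
\[
\|\bar m-g^Q\|^2\lesssim \delta_a^h+\delta_f^h+\|\widehat s-s^P\|^2_{L_2(\mu_{\boldsymbol{x}^Q})}+\tfrac1{n^Q}\textstyle\sum_j|\widehat s_j-s^P(\boldsymbol{x}_j^Q)|^2 .
\]
The empirical term is handled by Bernstein's inequality, since everything is bounded by $M$, at the cost of $t/n^Q$.

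\textbf{Step 2 (two ways to control the source error).} This step produces the minimum $\Delta^Q\land(\Delta^P+t/n^P)$.
\begin{itemize}
\item For the source route, we apply the FAST-NN oracle bound of \cite{fan2024factor} to \eqref{eq:FTFAST1}, which gives $\|\widehat s-s^P\|^2_{L_2(\mu_{\boldsymbol{x}^P})}\lesssim\Delta^P+t/n^P$. Assumption~\ref{ass:absCon} transfers this to $\mu_{\boldsymbol{x}^Q}$ at constant cost.
\item For the target route, we bypass $\widehat s$. By Proposition~\ref{prop:transferabilityBenefit}, $g^Q$ itself can be approximated by a network in the first two input blocks alone, so we choose the comparator $\bar m$ to ignore its last coordinate. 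This yields the bound $\delta_a^Q+\delta_f^Q+\delta_s^Q=\Delta^Q$, with the larger sparsity set $\pazocal{J}\cup\pazocal{J}^P$. Since $\widehat{m}_{\FT}$ minimizes over a class containing both comparators, the oracle inequality holds with the minimum.
\end{itemize}
A union bound over the events gives probability $1-6e^{-t}$. The second line of \eqref{eq:fineTuningResult} then follows trivially.

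\textbf{Main obstacle.} The delicate point is Step~1's approximation argument. The input $\widehat s$ is random and may take values outside the support of $s^P(\boldsymbol{x}^Q)$, so the approximant must be uniformly accurate over $\kappa\in[-M,M]$, and the truncation $\mathrm{trun}_M$ must keep $\widehat s$ in that range. We also need the Lipschitz property of the neural approximant, not just of $h$, in its last coordinate. We would first try to avoid the latter by writing
\[
g(\cdot,\widehat s)-h(\cdot,s^P)=[g(\cdot,\widehat s)-h(\cdot,\widehat s)]+[h(\cdot,\widehat s)-h(\cdot,s^P)],
\]
so that only the Lipschitz constant of $h$ is needed. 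A second subtlety is that the factor surrogate uses the data-dependent $\widehat{\boldsymbol{W}}^{\TL}$. Here we would condition on it, treating it as exogenous as the paper assumes, or else use sample splitting.
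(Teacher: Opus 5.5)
Your proposal is correct and follows essentially the same route as the paper. Conditionally on the source sample, the paper proves a fixed-$s$ oracle inequality (Lemma~\ref{lemma:fast}) whose comparator is either a network uniformly close to $h$ in $\kappa$, combined with exactly your Lipschitz split $[g(\cdot,s)-h(\cdot,s)]+[h(\cdot,s)-h(\cdot,g^P)]$, or an element of $\pazocal{F}^0_s$ that ignores $s$ (giving $\Delta^Q$); it then bounds $\norm{\widehat s(\boldsymbol{x}^Q)-g^P}_2^2\lesssim\Delta^P+t/n^P$ via Corollary~\ref{col:fast} and Assumption~\ref{ass:absCon}, with a union bound over three target and three source events. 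The only difference is that the paper applies Bernstein directly to $|\widetilde m-g^Q|^2$, so your separate empirical $\widehat s$ term is unnecessary but harmless.
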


Theorem \ref{thm:fineTuning} establishes a high-probability bound on both the out-of-sample squared error and the in-sample mean squared error. The error bound is composed of \(\Delta^Q, \Delta^P,\) and \(\Delta^h\), which represent the errors from estimating \(g^Q\), \(g^P\), and \(h\), respectively. Notably, the convergence rate takes the form \(\Delta^Q \land \Delta^P + \Delta^h\), indicating that if \(\Delta^P\) is sufficiently small because \(g^P\) is well estimated (possibly from a large \(n^P\)), the fine-tuning procedure can improve the convergence rate from \(\Delta^Q\) to \(\Delta^h\), leveraging the useful information in \(g^P\) to enhance the estimation of \(g^Q\). However, the term \(\Delta^h\) is unavoidable, as \(h\) still needs to be estimated from the target data.

Each aggregate error bound \(\Delta^*\) consists of three components: the neural network approximation error \(\delta_a^*\) to the underlying regression function, the stochastic error \(\delta_s^*\) due to empirical risk minimization, and the error \(\delta_f^*\) from inferring the latent factors from observed covariates.

\subsection{Optimal bounds for hierarchical composition models}
An explicit rate can be further achieved by carefully selecting the hyperparameters to balance the trade-off between the approximation error $\delta_a^*$ and the stochastic error $\delta_s^*$. This choice requires understanding the neural network's approximation capability as follows:
\begin{align*}
    \sup_{\substack{m^* \in \pazocal{H}(d, l, \pazocal{P}, C) \\ \gamma(\pazocal{P}) = \gamma^*, \norm{m^*}_\infty \leq M^*}} \inf_{g \in \pazocal{G} (d, c_1,c_2 N, \infty, c_3 N^{c_4})} \norm{g - m^*}^2_\infty \lesssim N^{-4 \gamma^*}
\end{align*}
for some universal constants $c_1$ - $c_4$, when $d$ is upper bounded by a universal constant. See Theorem 4 of \cite{fan2024factor} for a rigorous statement. The following condition lists our choice of hyperparameters.

\begin{cond}
\label{cond:hyper}
The deep ReLU neural network hyperparameters satisfy that
\begin{enumerate}
    \item $r \leq \overline{r} \lesssim r + 1$;
    \item $c_1 \{ \log n^P \lor \log[ \nu_\min(\boldsymbol{H}^P)]^{-1} \} \leq \log T^P \lesssim \log n^P, \ c_1 \{\log n^Q \lor \log[ \nu_\min (\boldsymbol{H}^Q)]^{-1} \} \leq \log T \lesssim \log n^Q$;
    \item $c_1 \leq L^P \lesssim 1,\ c_1(n^P / \log n^P)^{1 / (4 \gamma^P + 2)} \leq N^P \lesssim (n^P / \log n^P)^{1 / (4 \gamma^P + 2)}$;
    \item $c_1 \leq L \lesssim 1, \ c_1(n^Q / \log n^Q)^{1 / (4 \Gamma + 2)} \leq N \lesssim (n^Q / \log n^Q)^{1 / (4 \Gamma + 2)}$.
\end{enumerate}
where $\Gamma = \gamma \boldsymbol{1} \{ n^P \geq n^Q\} + (\gamma^P \land \gamma) \boldsymbol{1} \{n^P < n^Q\}$. Here, $c_1$ is some universal constant which only depends on $l^P, \pazocal{P}^P, l, \pazocal{P}$.
\end{cond}

The following theorem shows the optimal rate for SMART-FAN-Lasso.

\begin{theorem}[Optimal rate for SMART-FAN-Lasso]
\label{thm:fineTuning2}

Suppose that Assumptions \ref{ass:transferability}, \ref{ass:boundedness}, \ref{ass:weakDependence}, \ref{ass:subGaussianNoiseAndBoundedFunction} and \ref{ass:absCon} hold. Consider the SMART-FAN-Lasso \eqref{eq:FTFAST1}-\eqref{eq:FTFAST5} with $r + |\pazocal{J} \cup \pazocal{J}^P| \leq c_1$ for some universal constant $c_1$ that satisfies Condition \ref{cond:hyper}, and $c_2 \frac{\log (p n^P)}{n^P} \leq \lambda^P \leq c_3 \frac{\log (p n^P)}{n^P}, c_2 \frac{\log (p n^Q)}{n^Q} \leq \lambda \leq c_3 \frac{\log (p n^Q)}{n^Q}, \tau^{-1} \geq (n^P \lor n^Q)^{c_4} p$ 
for some universal constants $c_2$ - $c_4$ independent of $n^P, n^Q$, and $p$. Furthermore, suppose that $n^P \gtrsim (\log p)^{2 \gamma^P + 1}$. Then, for any universal constant $K \geq 1$, with probability at least $1 - 6p^{-K}$ with respect to the source and target data, for $n^P$ and $n^Q$ large enough, we have
\begin{align}
    \pazocal{E}^Q(\widehat{m}_{\FT}) + \widehat{\pazocal{E}}^Q (\widehat{m}_{\FT})
    \leq c_5 K  \Big\{ \nonumber \Big[ & \frac{\log (n^P + n^Q)}{n^P + n^Q} \Big]^{ \frac{2 \gamma^P}{2 \gamma^P + 1}} + \Big( \frac{ \log n^Q }{n^Q} \Big)^{\frac{2 \gamma}{2 \gamma + 1}} \\
    &+ \frac{\log p}{n^Q} \mathbbm{1}(|\pazocal{J}| \not = 0)+ \frac{1 \land r}{[ \nu^2_\min(\boldsymbol{H}^P) \land \nu^2_\min(\boldsymbol{H}^Q)]p} \Big\}, 
     \label{eq:fineTuningUpperBound}
\end{align}
where $c_5$ is a constant that depends only on the constants in Assumptions \ref{ass:boundedness}, \ref{ass:weakDependence}, \ref{ass:subGaussianNoiseAndBoundedFunction}, and Condition \ref{cond:hyper}.
\end{theorem}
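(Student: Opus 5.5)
The plan is to specialize the oracle inequality of Theorem~\ref{thm:fineTuning}, taking $t = K\log p$ so that the probability becomes $1-6p^{-K}$ and the $t/n$ terms become $K\log p/n$. Each of the three aggregate errors $\Delta^P$, $\Delta^Q$, $\Delta^h$ will then be bounded under Condition~\ref{cond:hyper}.

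\textbf{Approximation and stochastic errors.} Since $r+|\pazocal{J}\cup\pazocal{J}^P|\le c_1$, all input dimensions are bounded. The approximation result quoted before Condition~\ref{cond:hyper} (Theorem 4 of \cite{fan2024factor}) then applies with constant depth, width $\asymp N$, and weights bounded by $N^{c}$; the $\log T$ lower bounds in Condition~\ref{cond:hyper} permit such weights.
\begin{itemize}
\item For $g^P$: $\delta_a^P\lesssim (N^P)^{-4\gamma^P}$.
\item For $h$: $\delta_a^h\lesssim N^{-4\gamma}$. This uses $h\in\pazocal{H}(r+|\pazocal{J}|+1,l,\pazocal{P},c_0)$, uniformly over $\kappa\in[-M,M]$.
\item For $g^Q$: Proposition~\ref{prop:transferabilityBenefit} gives $\delta_a^Q\lesssim N^{-4(\gamma\wedge\gamma^P)}$.
\end{itemize}
With $L\asymp1$, $\overline r\lesssim1$ and $\log T\lesssim\log n$, the stochastic terms are $\delta_s^P\lesssim (N^P)^2\log n^P/n^P+\lambda^P|\pazocal{J}^P|$ and $\delta_s^h\lesssim N^2\log n^Q/n^Q+\lambda|\pazocal{J}|$. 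With $\lambda\asymp\log(pn^Q)/n^Q$, the penalty term contributes $\log p/n^Q+\log n^Q/n^Q$. The $\log p/n^Q$ part vanishes when $\pazocal{J}=\emptyset$, which explains the indicator in the bound. The $\log n^Q/n^Q$ part is dominated by the nonparametric rate.

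Balancing with $N^P\asymp(n^P/\log n^P)^{1/(4\gamma^P+2)}$ gives $\Delta^P\lesssim (\log n^P/n^P)^{2\gamma^P/(2\gamma^P+1)}+\log p/n^P+\delta_f^P$. The hypothesis $n^P\gtrsim(\log p)^{2\gamma^P+1}$ absorbs $\log p/n^P$ into the nonparametric term.

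The factor errors satisfy $\delta_f^*\lesssim r/(p\,\nu_{\min}^2(\boldsymbol{H}^*))$, and they vanish when $r=0$. This gives the term $(1\wedge r)/([\nu^2_{\min}(\boldsymbol{H}^P)\wedge\nu^2_{\min}(\boldsymbol{H}^Q)]p)$.

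\textbf{Case analysis for $\Delta^Q\wedge\Delta^P+\Delta^h$.}
\begin{itemize}
\item If $n^P\ge n^Q$, then $\Gamma=\gamma$ and $\Delta^h\lesssim(\log n^Q/n^Q)^{2\gamma/(2\gamma+1)}$. We use $\Delta^P$ in the minimum. Since $n^P\asymp n^P+n^Q$, we get $(\log n^P/n^P)^{2\gamma^P/(2\gamma^P+1)}\asymp[\log(n^P+n^Q)/(n^P+n^Q)]^{2\gamma^P/(2\gamma^P+1)}$.
\item If $n^P<n^Q$, then $\Gamma=\gamma\wedge\gamma^P$ and the width $N$ is tuned for $g^Q$. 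We use $\Delta^Q\lesssim(\log n^Q/n^Q)^{2(\gamma\wedge\gamma^P)/(2(\gamma\wedge\gamma^P)+1)}+\log p/n^Q\,\mathbbm{1}(|\pazocal{J}|\ne0)+\delta_f^Q$. Here $\Delta^h$ must be re-evaluated at this width. Because $\gamma\wedge\gamma^P\le\gamma$, both $N^{-4\gamma}$ and $N^2\log n^Q/n^Q$ are at most the $(\gamma\wedge\gamma^P)$-rate. Finally, $(\log n^Q/n^Q)^{2(\gamma\wedge\gamma^P)/(\cdot)}$ is at most the sum of the $\gamma^P$-rate at $n^P+n^Q\asymp n^Q$ and the $\gamma$-rate at $n^Q$.
\end{itemize}

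\textbf{Main obstacle.} I expect the hardest part to be in the $\Delta^Q$ branch. There, $\delta_s^Q$ carries $\lambda|\pazocal{J}\cup\pazocal{J}^P|$, which produces $\log p/n^Q$ even when $\pazocal{J}=\emptyset$. This conflicts with the indicator $\mathbbm{1}(|\pazocal{J}|\ne0)$ in the bound. I would try to absorb it as follows. In that branch $n^Q>n^P\gtrsim(\log p)^{2\gamma^P+1}$, so $\log p/n^Q\le(\log p)^{-2\gamma^P}\cdot\log p\,/\,n^P\cdot n^P/n^Q$. The aim is then to bound it by the $\gamma^P$-rate at $n^P+n^Q\asymp n^Q$, using $n^Q\gtrsim(\log p)^{2\gamma^P+1}$ to get $\log p/n^Q\lesssim(\log n^Q/n^Q)^{2\gamma^P/(2\gamma^P+1)}$.

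Two further points need checking. First, the requirement $\tau^{-1}\ge(n^P\vee n^Q)^{c_4}p$ should imply the $\tau$ condition of Theorem~\ref{thm:fineTuning}, since $(TN)^{L+1}$ is polynomial in $n$. Second, "$n$ large enough" should make all logarithmic constants uniform in $K$, which is why the final bound has the form $c_5K$.
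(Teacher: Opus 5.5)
Your route is the paper's: specialize Theorem~\ref{thm:fineTuning} with $t=K\log p$, take $\delta_a^P,\delta_a^h,\delta_a^Q$ from Theorem~4 of \cite{fan2024factor} via Proposition~\ref{prop:transferabilityBenefit}, plug in Condition~\ref{cond:hyper}, absorb $\lambda^P|\pazocal{J}^P|$ and $K\log p/n^P$ using $n^P\gtrsim(\log p)^{2\gamma^P+1}$, and split into cases. Your split by $n^P\ge n^Q$ versus $n^P<n^Q$ is a harmless regrouping of the paper's three cases.

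\textbf{The gap: the indicator.} You attribute the $\log p/n^Q$ term only to $\lambda|\pazocal{J}|$. But the oracle inequality also contains $t/n^Q=K\log p/n^Q$ in every branch, whatever $\pazocal{J}$ is. The confidence level $1-6p^{-K}$ forces $t\ge K\log p$, so this term cannot be tuned away.

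The obstacle you flagged is not the real one. In the branch $n^Q>n^P$ your own absorption argument works: $n^Q>n^P\gtrsim(\log p)^{2\gamma^P+1}$ puts $\log p/n^Q$ below the $\gamma^P$-rate at $n^P+n^Q\asymp n^Q$. That removes $K\log p/n^Q$ together with $\lambda|\pazocal{J}\cup\pazocal{J}^P|$.

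The real obstacle is the branch $n^P\ge n^Q$, where the hypotheses control $\log p$ only through $n^P$. Take $\pazocal{J}=\emptyset$, $\nu_{\min}(\boldsymbol{H}^*)\asymp 1$, $\log p\asymp(n^Q)^{1-\epsilon}$ and $n^P$ huge. Then $K\log p/n^Q\asymp K(n^Q)^{-\epsilon}$ dominates the whole right-hand side once $\epsilon<2\gamma/(2\gamma+1)$. This is not an artifact of the proof. A pure target intercept shift $h(\boldsymbol{f},\kappa)=\kappa+\theta$, about which the source data say nothing, already forces an error of order $K\log p/n^Q$ with probability exceeding $6p^{-K}$ for one of two values of $\theta$.

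\textbf{What your argument actually proves.} It proves the bound with $\frac{\log p}{n^Q}$ present unconditionally. That is also all the paper's proof yields: it sets $t=K\log p$, carries $\frac{\log p}{n^Q}$ without any indicator into its final display, and then only checks $A\lesssim B$ for the remaining rate terms.

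To keep the indicator you would need one of the following:
\begin{itemize}
\item an extra hypothesis such as $n^Q\gtrsim(\log p)^{2\gamma+1}$;
\item a confidence level not tied to $p$ (e.g.\ $t=K\log n^Q$), applied to the variant with $\boldsymbol{\Theta}$ switched off as in the Remark after the theorem.
\end{itemize}
With $\boldsymbol{\Theta}$ active, even the second option is not enough. The proof of Lemma~\ref{lemma:fast} absorbs $\varrho_{n^Q}\gtrsim\log p/n^Q$ into $\lambda\Pi$, which is valid only when $\Pi\ge 1$. So the stated $\delta_s^h=v_{n^Q}+\lambda|\pazocal{J}|$ in Theorem~\ref{thm:fineTuning} is itself optimistic when $\pazocal{J}=\emptyset$.
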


Theorem \ref{thm:fineTuning2} indicates the asymptotic convergence rate for our proposed SMART-FAN-Lasso.
The third term is dominated by the second term under mild conditions. We now explain each term in  \eqref{eq:fineTuningUpperBound}.

\begin{itemize}
    \item The term $\Big[ \frac{ \log (n^P + n^Q)}{n^P + n^Q} \Big]^{\frac{2 \gamma^P}{2 \gamma^P + 1}}$ represents the risk of estimating the source regression function $g^P$, with complexity parameter $\gamma^P$. It benefits from both the target and source samples, which is a key advantage of our fine-tuning procedure. When $n^P \gg n^Q$, we benefit from fine-tuning while still retaining the conventional rate when $n^P$ is small.
    \item The term $\Big( \frac{ \log n^Q}{n^Q} \Big)^{\frac{2 \gamma}{2 \gamma + 1}}$ captures the risk of estimating the remaining component of $g^Q$, namely $h$, with complexity parameter $\gamma$. Estimation of $h$ is based only on target data.
    \item The term $\frac{\log p}{n^Q}$ reflects the uncertainty in variable selection for $\pazocal{J}$ in the target regression function. The uncertainty of variable selection for the source regression function is dominated by the main terms and is omitted due to the condition $n^P \gtrsim (\log p)^{2 \gamma^P + 1}$.
    \item The term $\frac{1 \land r}{[\nu^2_\min(\boldsymbol{H}^P) \land \nu^2_\min(\boldsymbol{H}^Q)] p}$ represents the risk associated with inferring latent factors from observed covariates in the target and source domains. When $r \neq 0$ and $\nu^2_\min(\boldsymbol{H}^*) \asymp 1$ for $* \in \{P, Q\}$, this $p^{-1}$ term is typically negligible under the high-dimensional regime.
\end{itemize}

Furthermore, Theorem \ref{thm:fineTuning2} clearly shows how our fine-tuning procedure improves the statistical convergence rate. Without transferring from the source data to the target domain, the optimal asymptotic risk rate we can obtain is given in \cite{fan2024factor} as
%\r{Q: with typos?} \textcolor{blue}{(A: By this sentence, I am going to give the risk rate of the transfer function h.)}
\begin{equation}
    \label{eq:fastUpperBound}
    \Big( \frac{ \log n^Q}{n^Q} \Big)^{\frac{ 2\gamma^P}{2 \gamma^P + 1}} + 
    \Big(\frac{\log n^Q}{n^Q} \Big)^{\frac{2 \gamma}{2 \gamma + 1}} +  \frac{\log p}{n^Q} + \frac{1 \land r}{ \nu^2_\min(\boldsymbol{H}^P) \land \nu^2_\min(\boldsymbol{H}^Q) p}.
\end{equation}
Comparing \eqref{eq:fineTuningUpperBound} with this formula, we observe that with the following conditions
\begin{equation}
    \label{eq:fasterCond}
    n^P \gg n^Q, \quad \gamma^P < \gamma,
\end{equation}
i.e., the source sample size is large enough, the source regression function \(g^P\) is indeed more complex than the remaining \(h\) component, we can accelerate the risk convergence via the fine-tuning estimator. On the other hand, even if the condition \eqref{eq:fasterCond} is not satisfied, we maintain the optimal convergence rate as in \eqref{eq:fastUpperBound}, without involving source data. In this sense, our estimator is both ``efficient" in improving the rate as in \eqref{eq:fineTuningUpperBound} and ``robust" in avoiding negative transfer.

\begin{remark}
Given $\pazocal{J} = \emptyset$, the term $\frac{\log p}{n^Q}$ in \eqref{eq:fineTuningUpperBound} is eliminated  in Theorem \ref{thm:fineTuning}. This case corresponds to a scenario where the idiosyncratic noise does not contribute to learning $h$, which is possible because the information of the idiosyncratic noise is fully captured through $g^P$. Additionally, by letting $\lambda \to \infty$ in \eqref{eq:FTFAST4}, \eqref{eq:FTFAST4} simplifies to $\widehat{h}(\cdot) = \argmin_{h \in \pazocal{G} (L, r + 1, N, M, T)} \frac{1}{n^Q} \sum_{i = 1}^{n^Q} \Big(y^Q_i - h( \Big[ \widetilde{\boldsymbol{f}}^Q_i, \widehat{s}_i \Big] ) \Big)^2$. 
This optimization corresponds to the augmented FAR-NN procedure mentioned in \cite{fan2024factor}, with an added entry $\widehat{s}_i$. Following the proof structure of Theorem \ref{thm:fineTuning2} and treating $\lambda \to \infty$ with $\pazocal{J} = \emptyset$, we reach the upper bound:
$\Big[ \frac{ \log (n^P + n^Q)}{n^P + n^Q} \Big]^{\frac{2 \gamma^P}{2 \gamma^P + 1}} + \Big( \frac{\log n^Q}{n^Q} \Big)^{\frac{2 \gamma}{2 \gamma + 1}} + \frac{1 \land r}{[\nu^2_\min(\boldsymbol{H}^P) \land \nu^2_\min(\boldsymbol{H}^Q)]p}$. 
The formula above mitigates the curse of high dimensionality and still converges as \(n_Q\) grows to infinity, even if \(n_Q\) is not large compared with \(p\).
\end{remark}

\subsection{Minimax optimal lower bound}
Define a family of distributions of $(\boldsymbol{f}, \boldsymbol{u}, \boldsymbol{x})$ as
\begin{align}
    \label{eq:lb-factor-dist}
    \begin{split}
        \pazocal{P}(p, r, \rho) = \Big\{ \mu (\boldsymbol{f}, \boldsymbol{u}, \boldsymbol{x}): \quad &\mathrm{supp}(\boldsymbol{f}) \subset [-1, 1]^r, \, \mathrm{supp}(\boldsymbol{u}) \subset [-1, 1]^p, \, \mathbb{E}[\boldsymbol{f}] = \mathbb{E}[\boldsymbol{u}] = 0, \\
        &\boldsymbol{f} \text{ and } \boldsymbol{u} \text{ independent, both have independent components,} \\
        & \boldsymbol{x} = \boldsymbol{B} \boldsymbol{f} + \boldsymbol{u} \text{ with } \norm{\boldsymbol{B}}_{\max} \leq 1 \text{ and } \lambda_{\min}(\boldsymbol{B}^\top \boldsymbol{B}) \geq \rho \Big\}.
    \end{split}
\end{align} 
Besides the optimal upper bound obtained in Theorem \ref{thm:fineTuning2}, we also provide the following minimax lower bound for $\pazocal{E}^Q(\widehat{m})$, where $\widehat{m}$ is any general estimator depending on the target and source data.

\begin{theorem}[Minimax optimal lower bound]
\label{thm:minimaxLower}
Consider the i.i.d. samples $\{\{(\boldsymbol{x}^P_i,  \boldsymbol{y}^P_i)\}_{i \in [n^P]} \cup \{(\boldsymbol{x}^Q_j, \boldsymbol{y}^Q_j) \}_{j \in [n^Q]}\}$ from the model \eqref{eq:setting} with $ \{ \epsilon^P_i \}_{i \in [n^P]} \cup \{ \epsilon^Q_j \}_{j \in [n^Q]} \overset{i.i.d.}{\sim} \pazocal{N}(0, 1)$. Suppose that there exists some $(\beta^P, d^P) \in \pazocal{P}$ such that $\gamma^P = \gamma(\pazocal{P}^P) = \frac{\beta^P}{d^P}$ and $d^P \leq r + 1 \lesssim 1$ and that there exists some $(\beta^h, d^h) \in \pazocal{P}$ such that $\gamma = \gamma(\pazocal{P}) = \frac{\beta^h}{d^h}$ and $d^h \leq r + 1 \lesssim 1$.  Then, for $n^P$ and $n^Q$ large enough, we have
\begin{equation}
    \label{eq:fineTuningLowerBound}
    \inf_{\widehat{m}} \sup_{ \substack{ \mu_{(\boldsymbol{f}^Q, \boldsymbol{u}^Q, \boldsymbol{x}^Q)} = \mu_{(\boldsymbol{f}^P, \boldsymbol{u}^P, \boldsymbol{x}^P)} \in \pazocal{P} (p, r, \rho)\\
    \text{Assumption \ref{ass:transferability} holds with $|\pazocal{J}| = 1$} }} \pazocal{E}^Q (\widehat{m}) \geq c_2 \left\{  (n^P + n^Q)^{-\frac{2 \gamma^P}{2 \gamma^P + 1}} + (n^Q)^{-\frac{2 \gamma}{2 \gamma + 1}} + \frac{\log p}{n^Q} + \frac{1}{\rho} \right\}    
\end{equation}
for some universal constant $c_2$ independent of $n^P, n^Q$ and $\rho$, and the infimum is taken over all estimators based on the observations $\{ (\boldsymbol{x}^P_i, \boldsymbol{y}^P_i)\}_{i \in [n^P]} \cup \{ (\boldsymbol{x}^Q_j, \boldsymbol{y}^Q_j)\}_{j \in [n^Q]}$.
\end{theorem}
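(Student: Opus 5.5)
The lower bound is a maximum of four terms, so it suffices to show that each term separately lower bounds the minimax risk up to a constant, using a sub-family of the supremum set for each. I would use Fano's method or Assouad's lemma for each piece, with Gaussian noise so that KL divergences between product laws equal $\frac{n}{2}\|g_1-g_2\|_{L_2}^2$, summed over the source and target samples. Throughout, I take $\boldsymbol{B}^P=\boldsymbol{B}^Q$ and identical covariate laws, as the supremum allows.

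\textbf{Source term $(n^P+n^Q)^{-2\gamma^P/(2\gamma^P+1)}$.} I take $h(\boldsymbol f,\boldsymbol u_{\pazocal J},\kappa)=\kappa$, so $g^Q=g^P$ and both samples carry information about the same function. Let $g^P$ depend only on $d^P\le r+1$ arguments built from $\boldsymbol f$ and one coordinate of $\boldsymbol u_{\pazocal J^P}$, and make it $(\beta^P,c_0)$-smooth. For the other components of the HCM, use functions of smoothness at least as large. I then build the standard local bump packing at bandwidth $\eta$, with $m\asymp\eta^{-d^P}$ bumps of height $\eta^{\beta^P}$ and Varshamov--Gilbert signs. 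The KL divergence between hypotheses is at most $(n^P+n^Q)\,\eta^{2\beta^P}$, since both datasets observe the same regression function under identical covariate laws. Choosing $\eta\asymp(n^P+n^Q)^{-1/(2\beta^P+d^P)}$ gives separation $\eta^{2\beta^P}\asymp (n^P+n^Q)^{-2\gamma^P/(2\gamma^P+1)}$.

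\textbf{Residual term $(n^Q)^{-2\gamma/(2\gamma+1)}$.} I fix $g^P$ to a known function, for instance $g^P\equiv 0$ or a single coordinate. The source data then carry no information about $h$ and can be simulated by the estimator. The same bump construction applied to $h$ in $d^h$ of its arguments, using only the $n^Q$ target samples, gives the rate.

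\textbf{Selection term $\log p/n^Q$.} I use $|\pazocal J|=1$ and consider $p-|\pazocal J^P|$ hypotheses $h=\theta\,u_j$ with $\theta\asymp\sqrt{\log p/n^Q}$. Independent unit-variance-type coordinates make the pairwise $L_2$ separations of order $\theta^2$, while the KL divergences are at most $n^Q\theta^2\lesssim\log p$. Fano's inequality then yields $\log p/n^Q$.

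\textbf{Factor term $1/\rho$.} This is the step I expect to be the main obstacle. The difficulty is that $\boldsymbol f$ is latent, so even with infinite samples $g^Q$ is only identified through $\boldsymbol x$. I would follow the corresponding lower-bound argument of \cite{fan2024factor}. Take $r=1$ and $g=f$. Construct two loading vectors $\boldsymbol B,\boldsymbol B'$ with $\lambda_{\min}\asymp\rho$ and two factor/idiosyncratic laws that produce the same joint law of $\boldsymbol x$, but for which $\mathbb E[f\mid\boldsymbol x]$ differs with $L_2$ gap of order $1/\rho$. A concrete attempt is Gaussian-like bounded coordinates: the posterior variance of $f$ given $\boldsymbol x$ is about $1/(1+\|\boldsymbol B\|^2)\asymp 1/\rho$, and this irreducible variance enters the excess risk directly. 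The excess risk is defined relative to $g^Q(\boldsymbol f,\cdot)$, and any $\widehat m(\boldsymbol x)$ incurs at least $\mathrm{Var}(f\mid\boldsymbol x)$, so no two-point construction is needed. The care required is to check that bounded supports and independence are compatible with a posterior variance $\gtrsim 1/\rho$; I would choose uniform or Rademacher $u_k$ and a uniform $f$, and lower bound the conditional variance via a Bayesian Cramér--Rao argument.

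Summing the four bounds, since the maximum is at least a quarter of the sum, completes the proof.
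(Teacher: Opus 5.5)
Your treatment of three of the four terms is essentially the paper's proof. With $g^P\equiv 0$ the source sample is ancillary, and an oracle that sees $(\boldsymbol f^Q,\boldsymbol u^Q)$ gives $(n^Q)^{-2\gamma/(2\gamma+1)}$ (the paper cites Theorem 3.2 of \cite{gyorfi2002distribution}) and $\log p/n^Q$ via Fano with $h=\delta u_j$. With $h(\boldsymbol f,\boldsymbol u_{\pazocal J},\kappa)=\kappa$ one gets $g^Q=g^P$ and the pooled rate. Your version of the pooled case keeps $\boldsymbol B^P=\boldsymbol B^Q$ with $\lambda_{\min}\ge\rho$ and passes to the oracle. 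That is actually cleaner than the paper's choice $\boldsymbol B^Q=0$, $\boldsymbol f^Q\equiv 0$, which leaves $\pazocal P(p,r,\rho)$.

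The gap is in the $1/\rho$ term, which the paper simply imports from Lemma 1 of \cite{fan2024factor}. Your reduction $\pazocal E^Q(\widehat m)\ge\mathbb E[\mathrm{Var}(f_1\mid\boldsymbol x)]$ with $h=f_1$ is correct. However, the Gaussian heuristic $\mathrm{Var}(f\mid\boldsymbol x)\approx 1/(1+\norm{\boldsymbol B}_2^2)$ fails for the laws you propose.
\begin{itemize}
\item With Rademacher $u_k$, $x_k=b_kf+u_k$, $|b_k|\le 1$ and $f\in(-1,1)$, the sign of $x_k$ reveals $u_k$. So $f$ is recovered exactly and the conditional variance is $0$.
\item With uniform $u_k$ and $k\ge\rho$ unit loadings, the posterior of $f$ is uniform on $[\max_i x_i-1,\min_i x_i+1]\cap[-1,1]$. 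This interval has expected length $O(1/k)$, so $\mathbb E[\mathrm{Var}(f\mid\boldsymbol x)]\asymp\rho^{-2}$, not $\rho^{-1}$. Spreading the loadings only makes it smaller, since the length is $O(1/\sum_i|b_i|)$ and $\sum_i|b_i|\ge\sum_i b_i^2\ge\rho$.
\end{itemize}
The Bayesian Cram\'er--Rao (van Trees) bound cannot rescue these choices. It requires absolutely continuous prior and noise densities with finite Fisher information that vanish at the edge of the support, and uniform or Rademacher laws (including a uniform $f$) violate this.

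The fix is easy. Take all $f_k,u_j$ i.i.d.\ with density $\cos^2(\pi t/2)$ on $[-1,1]$, which has mean zero and Fisher information $\pi^2$. Set $g^P\equiv 0$ and $h(\boldsymbol f,\boldsymbol u_{\pazocal J},\kappa)=f_1$. Choose $\boldsymbol B$ with orthogonal columns, e.g.\ disjoint blocks of $\lceil\rho\rceil$ ones, so that $\boldsymbol B^\top\boldsymbol B\succeq\rho\boldsymbol I_r$ and $\norm{\boldsymbol B_{:,1}}_2^2\lesssim\rho$. The van Trees inequality then gives
\[
\mathbb E(\widehat m(\boldsymbol x)-f_1)^2\ge \frac{1}{\pi^2(\norm{\boldsymbol B_{:,1}}_2^2+1)}\gtrsim \frac{1}{\rho}
\]
for every measurable $\widehat m$. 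This works for the given $r\ge 1$; you cannot simply set $r=1$, because the statement fixes $r$.
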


The fourth term $\frac{1}{\rho}$ in the lower bound \eqref{eq:fineTuningLowerBound} reduces to $\frac{1}{p}$ under the pervasiveness assumption (Assumption \ref{ass:pervasiveness}) where $\lambda_\min((\boldsymbol{B}^P)^\top \boldsymbol{B}^P) \asymp \lambda_\min((\boldsymbol{B}^Q)^\top \boldsymbol{B}^Q) \asymp p$. 
Thus, it matches the term $\frac{1 \land r}{[\nu^2_\min(\boldsymbol{H}^P) \land \nu^2_\min(\boldsymbol{H}^Q)]p}$ in \eqref{eq:fineTuningUpperBound} when $r \neq 0$ and $\nu^2_\min(\boldsymbol{H}^*) \asymp 1$ for $* \in \{P, Q\}$. Even when $r = 0$, the term $\frac{1}{p}$ is typically dominated by other main terms in \eqref{eq:fineTuningLowerBound} under the high-dimensional regime. Therefore, the upper bound we obtained in Theorem \ref{thm:fineTuning2} matches the optimal lower bound up to logarithmic factors in $n^P$ and $n^Q$.

% !TEX root = ../main.tex
\section{Simulation Studies}
\label{sec:sim}

\subsection{Covariate Shift Estimation}
\label{sec:sim-covariate}

\textbf{Candidate methods.}
We compare the reliability of our proposed transfer factor estimator $\widetilde{\boldsymbol{f}}^Q = p^{-1}(\widehat{\boldsymbol{W}}^{\TL})^T \boldsymbol{x}^Q$ with two other simple approaches under covariate shift:
$\widetilde{\boldsymbol{f}}^Q = p^{-1}(\widehat{\boldsymbol{W}}^{Q})^T \boldsymbol{x}^Q$ ({Vanilla-target factor}),
$\widetilde{\boldsymbol{f}}^Q = p^{-1}(\widehat{\boldsymbol{W}}^{P})^T \boldsymbol{x}^Q$  ({Vanilla-source factor}).
As the goal is to learn latent factors in the target domain via a diversified projection, a natural measure is the smallest singular value $\nu_{\mathrm{min}}(p^{-1} (\widehat{\boldsymbol{W}})^T \boldsymbol{B}^Q)$ for a given  diversified projection $\widehat{\boldsymbol{W}}$. 

\textbf{Data generating process.} The source and target covariates both follow a factor model with $r = 4$ factors. For the source data, we generate $b_{ij}^P \overset{\mathrm{i.i.d}}{\sim} \mathrm{Unif}(-\sqrt{3}, \sqrt{3})$ to form the loading matrix $\boldsymbol{B}^P = (b_{i,j}^P) \in \mathbb{R}^{p \times r}$, and draw $f_i^P, u_i^P \overset{\mathrm{i.i.d}}{\sim} \mathrm{Unif}(-1, 1)$ to form the latent factor vector $\boldsymbol{f}^P = (f_i^P) \in \mathbb{R}^{r}$ and the idiosyncratic component $\boldsymbol{u}^P = (u_i^P) \in \mathbb{R}^{p}$. For the target data, we shift the loading matrix to $b_{ij}^Q = b_{ij}^P + \epsilon_{ij}^{B}$ with $\epsilon_{ij}^{B} \sim 0.5 \times \mathrm{Rademacher}$ and again generate $f_i^Q, u_i^Q \overset{\mathrm{i.i.d}}{\sim} \mathrm{Unif}(-1, 1)$.

We evaluate performance at a fixed covariate dimension $p = 1000$. The source sample size is set to $n^P = \{100, 150, 200, 250, 300\}$, and we consider two target sample sizes, $n^Q \in \{7, 10\}$ with $n^{Q}= 10$ for illustrating the performance without transfer.  
We set $\delta = r \sqrt{\frac{\mathrm{log}(p)}{n^P + n^Q}} + r^2 \sqrt{ \frac{\mathrm{log}(r)}{n^P + n^Q}} + \frac{1}{\sqrt{p}}$ and repeat simulation $100$ times for each setting.

\textbf{Results.}
Figure~\ref{fig:sim-covariate} reports $\nu_{\mathrm{min}}(p^{-1} (\widehat{\boldsymbol{W}}^*)^T \boldsymbol{B}^Q)$ across different source sample sizes $n^P$  for $n^Q = 7$, where larger values indicate more stable recovery of the target loading space.  Vanilla-target factor estimation with $n^Q = 10$ is included for comparison.   Our transfer factor estimator attains the largest and most concentrated values. The main findings are as follows:

\begin{itemize}
    \item \textbf{Numerical Stability:} Our proposed $\widehat{\boldsymbol{W}}^{\TL}$ remains numerically stable and robust by adaptively borrowing strength from the source domain, whereas the target-only approach ($\widehat{\boldsymbol{W}}^Q$) is less effective when $n^Q = 7$. The transfered method with $n^{Q} = 7$ outperforms the target-only approach ($\widehat{\boldsymbol{W}}^Q$) even with $n^Q = 10$.
    %, often yielding near-zero singular values.

    \item \textbf{Shift Adaptation:} Our method aligns well the estimated loading space with the target domain by adaptively aggregating data, avoiding the persistent bias inherent in the source-only approach ($\widehat{\boldsymbol{W}}^P$), which fails to account for the distribution shift despite a large $n^P$.  
    
    \item \textbf{Factor Recovery Robustness:} Our model-selection strategy successfully recovers the latent factor space even under severe target sample size constraints, empirically confirming the theoretical guarantees established in Theorem~\ref{thm:factorTransfer}. 
\end{itemize}

\begin{figure}[htbp]
    \centering
    \includegraphics[width=0.5\textwidth]{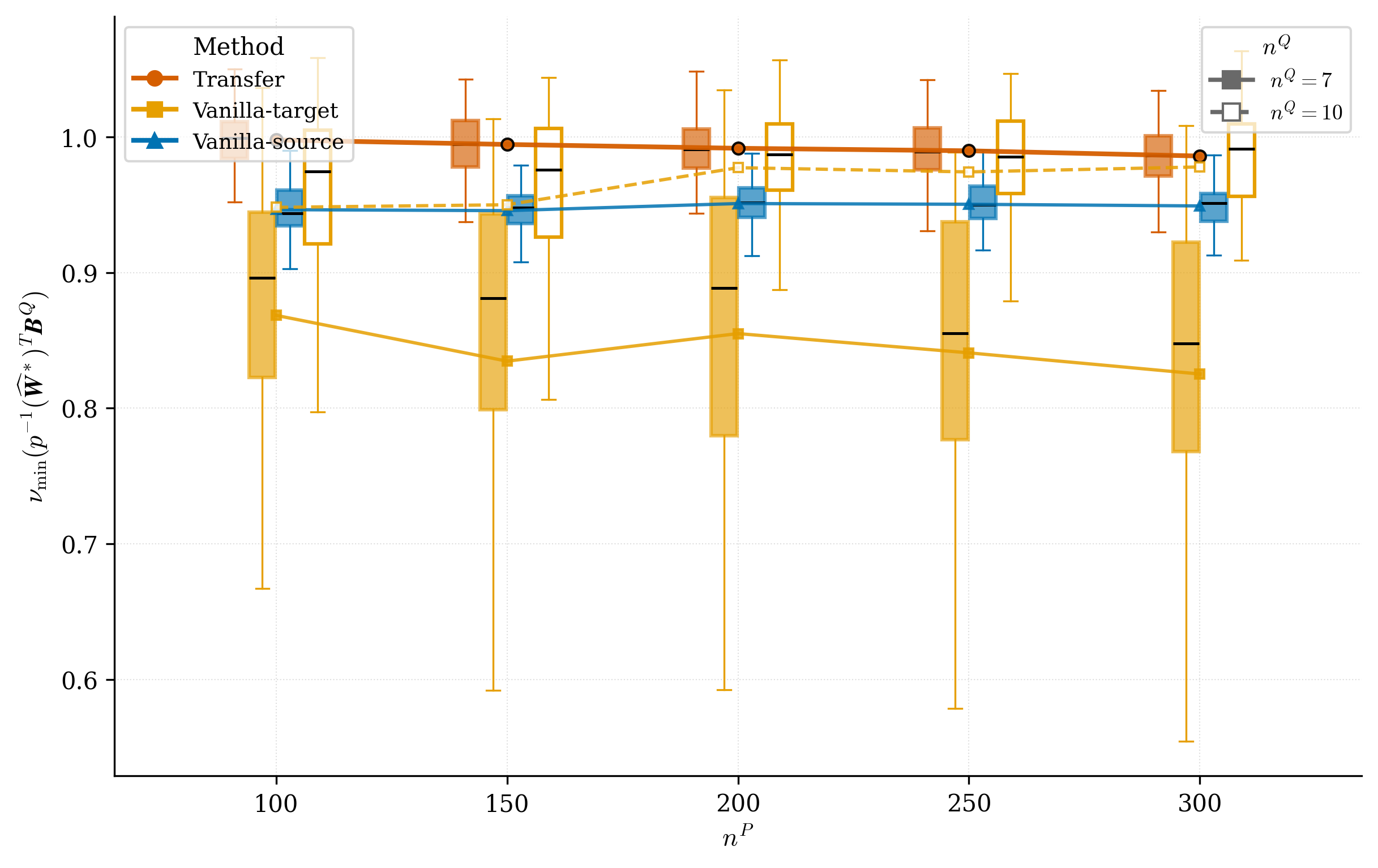}
    \begin{singlespace}
    \caption{Transfer learning of eigenspaces under covariate shift. The space alignment is measured by $\nu_{\min}(p^{-1}(\widehat{\boldsymbol{W}}^*)^T \boldsymbol{B}^Q)$, a critical quantity for eigenspace learning via diversified projection. The plot is based on 100 simulations. It compares transfer factor estimation $\widehat{\boldsymbol{W}}^{\TL}$ with $n^{Q}=7$ (solid blue), target-only factor estimation $\widehat{\boldsymbol{W}}^Q$ (solid yellow for $n^Q = 7$, hollow yellow for $n^Q =10$), and source-only factor estimation $\widehat{\boldsymbol{W}}^P$ (solid green).}
    \label{fig:sim-covariate}
    \end{singlespace}
\end{figure}

\subsection{Posterior Shift Estimation}
\label{sec:sim-posterior}

\textbf{Candidate methods.}
We evaluate the performance of our proposed method alongside several baseline and oracle approaches to quantify the benefits of integrating factor-augmentation with residual fine-tuning:  \textbf{SMART-FAN-Lasso}: The proposed transfer learning framework; \textbf{SMART-FT-Vanilla-NN}: Applies the same transfer learning pipeline as SMART-FAN-Lasso, but uses a vanilla neural network architecture for both source and target, operating on raw covariates $\boldsymbol{x}$; \textbf{FAST-NN (Source or Target Only)}: A single-domain baseline utilizing FAST-NN,  trained exclusively on source data and on target data separately; \textbf{Vanilla-NN (Source or Target Only)}: A  standard deep ReLU network, trained exclusively on source data and on target data separately; \textbf{Oracle-NN}: An idealized benchmark in which only the residual function $h$ is learned, using the true latent factors, true idiosyncratic components, and the true source regression function as inputs. 

\textbf{Data generating process.}
The source and target covariates are generated following the factor model structure $\boldsymbol{x}^* = \boldsymbol{B}^* \boldsymbol{f}^* + \boldsymbol{u}^*$ for $* \in \{P, Q\}$, similar to the previous experiment. In this setting, we introduce a subtle shift in the target loading matrix by setting $\epsilon_{ij}^{B} \sim 0.1 \times \mathrm{Rademacher}$. We evaluate the performance on a fixed dimension of the covariate $p = 10000$ for both the source and the target data. 
We take 
\begin{align*}
	g^P(\boldsymbol{f}^P, \boldsymbol{u}_{\mathcal{J}^P}^P) & = f_1^P + \mathrm{sin}(f_2^P) + f_3^P (f_4^P)^2 + u_1^P + (u_2^P)^2 u_3^P + \mathrm{log}(3 + u_4^P) + \mathrm{tan}(u_5^P) \\
	g^Q(\boldsymbol{f}^Q, \boldsymbol{u}_{\mathcal{J}^Q}^Q) & = h\left(\boldsymbol{f}^Q, \boldsymbol{u}_{\mathcal{J}}^Q, g^P(\boldsymbol{f}^Q, \boldsymbol{u}_{\mathcal{J}^P}^Q)\right) = f_1^Q - f_2^Q + u_3^Q + 0.5 \, g^P(\boldsymbol{f}^Q, \boldsymbol{u}_{\mathcal{J}^P}^Q) 
\end{align*}
and generate $y^P = g^P(\boldsymbol{f}^P, \boldsymbol{u}_{\mathcal{J}^P}^P) + \epsilon^P$ and $y^Q = g^Q(\boldsymbol{f}^Q, \boldsymbol{u}_{\mathcal{J}^Q}^Q) + \epsilon^Q$.

The noises are set to be $\epsilon^P, \epsilon^Q \overset{\mathrm{i.i.d}}{\sim} \pazocal{N}(0, 1/4)$. For the source data, we fix the sample sizes at $n_{\mathrm{train}}^P = 5000$, $n_{\mathrm{valid}}^P = 0.1 n_{\mathrm{train}}^P = 500$, $n_{\mathrm{unlabeled}}^P = 100$, and $n_{\mathrm{test}}^P = 100$, where the validation set is used for neural network tuning. For the target data, we vary the training sample size over $n_{\mathrm{train}}^Q \in \{10, 50, 100, 200, 300, 400, 500, 600, 700, 800, 900, 1000\}$ and fix $n_{\mathrm{valid}}^Q = 0.1 n_{\mathrm{train}}^Q$, $n_{\mathrm{unlabeled}}^Q = 100$, and $n_{\mathrm{test}}^Q = 100$ for each $n_{\mathrm{train}}^Q$. The unlabeled covariate set is used for estimating the transfer projection matrix, and the threshold $\delta$ is set to be the same as in Section~\ref{sec:sim-covariate}.
We replicate the data-generating process for $100$ times for each setting. Neural network architectures and training procedures are detailed in Appendix~\ref{app:numerical}. 

\textbf{Results.}
Figure~\ref{fig:sim-posterior} shows the test RMSE across candidate methods as the target sample size $n_{\mathrm{train}}^Q$ increases. The following points summarize the findings:

\begin{itemize}
    \item \textbf{Efficacy of Fine-Tuning:} SMART-FAN-Lasso consistently attains the lowest RMSE among all practical methods. The significant performance gain over source or target-only models (FAST-NN and Vanilla-NN) underscores the critical role of SMART fine-tuning in adapting to distribution shifts.
    \item \textbf{Benefit of Factor-Augmentation:} By outperforming SMART-Vanilla-NN across all sample regimes, SMART-FAN-Lasso demonstrates that the factor-augmented sparse throughput architecture is superior for high-dimensional transfer learning.
    \item \textbf{Convergence to Oracle:} As the target sample size $n^Q$ grows, the performance of SMART-FAN-Lasso converges toward the Oracle-NN, empirically validating the result in Theorem~\ref{thm:fineTuning2}. Note that $n^P = 5000$ is fixed, so the estimation error of the source model is essentially fixed, independent of $n^Q$. %This is why there is little further improvement once $n^Q \geq 2000$, a regime where transfer learning is less needed.
\end{itemize}

\begin{figure}[htbp]
    \centering
    \includegraphics[width=0.5\textwidth]{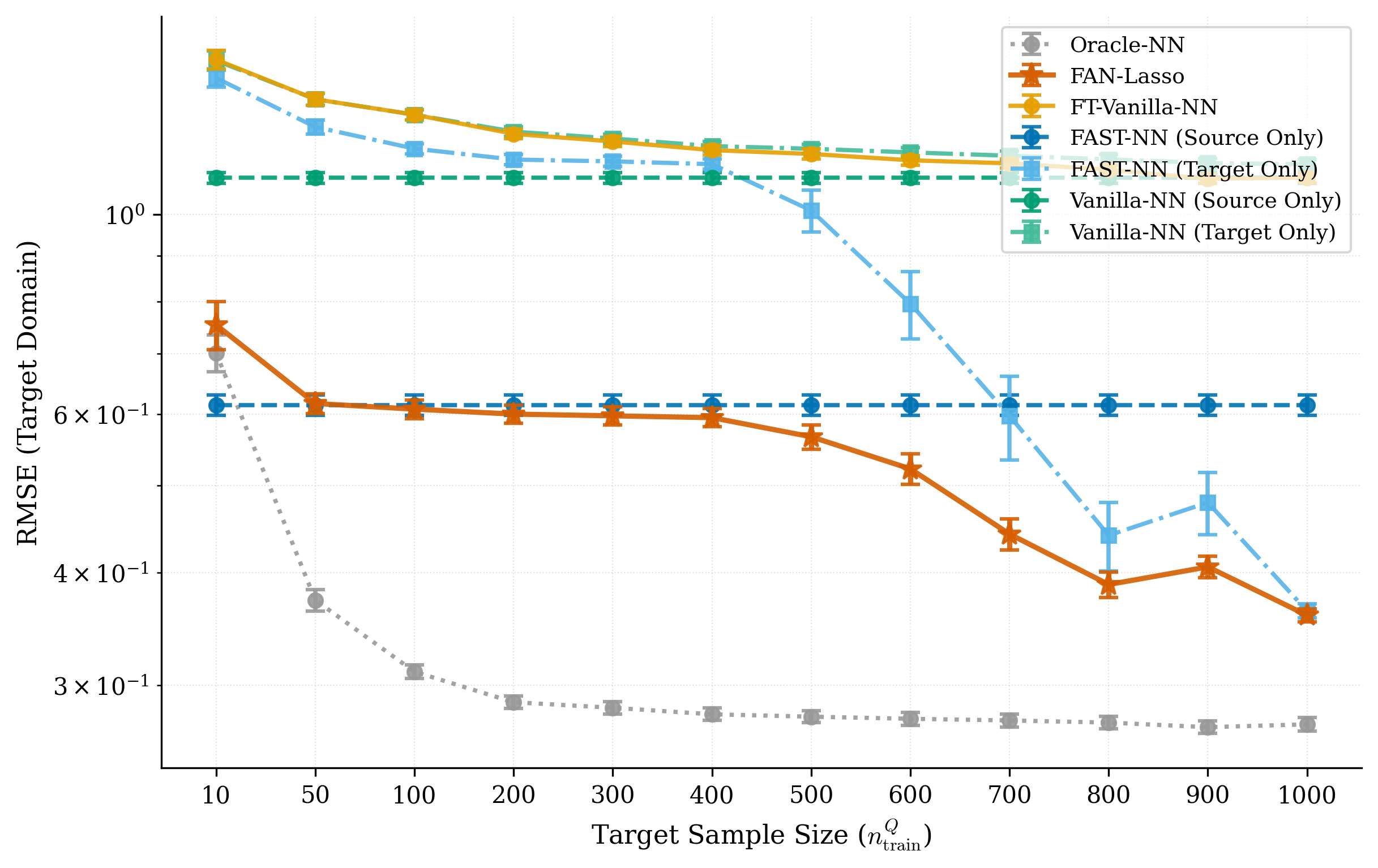}
    \begin{singlespace}
    \caption{Method Comparison: Target RMSE (with 95\% CI) vs.\ Target Sample Size ($n_{\text{train}}^Q$). The plot compares SMART-FAN-Lasso, SMART-Vanilla-NN, source or target-only baseline models (FAST-NN and Vanilla-NN), and the Oracle-NN. }
    \label{fig:sim-posterior}
    \end{singlespace}
\end{figure}

\section{Real Data Application}
\label{sec:application}

We apply the SMART-FAN-Lasso framework to the \emph{Communities and Crime} dataset from the UCI Machine Learning Repository \citep{redmond2002data, uci_crime_dataset}, which integrates socio-economic indicators from the 1990 US Census with law enforcement data and crime statistics from the 1995 FBI reports. The dataset comprises communities characterized by a comprehensive set of features spanning demographic composition, socio-economic conditions, family structure, housing characteristics, and law enforcement resources. The prediction task is to estimate \texttt{ViolentCrimesPerPop}, the per capita rate of violent crimes (murder, rape, robbery, and assault) per 100,000 population, normalized to $[0,1]$.

Following standard preprocessing to remove non-predictive identifiers and attributes with excessive missingness, the cleaned dataset comprises 1,993 communities characterized by 100 features. To construct a realistic transfer learning scenario reflecting spatial heterogeneity in crime patterns, we stratify communities by population density. Since urban-rural boundaries vary across classification schemes \citep{ingram2012nchs, uscensus_urbanrural}, we employ an approximate median split on the normalized population variable to create domains with distinct density profiles. The resulting \textbf{source domain} contains 1,595 higher-density communities (80.03\%), and the \textbf{target domain} contains 398 lower-density communities (19.97\%). This stratification reflects the practical scenario of adapting models trained on data-rich urban areas to predict crime in data-scarce lower-density regions, where limited data availability often necessitates transfer learning approaches.

We compare the same candidate methods except for the Oracle-NN as in Section~\ref{sec:sim-posterior}. For each domain, we randomly partition the data into 80\% training, 10\% validation, and 10\% test sets. For the factor estimation component of SMART-FAN-Lasso and FAST-NN, we reserve 10\% of the training data for diversified projection matrix estimation, with the remaining 90\% used for supervised fitting. For each method, we perform hyperparameter tuning on the validation set, selecting the optimal network depth from $L \in \{4, 5, 6\}$ and width from $N \in \{250, 350, 450\}$ (for transfer methods, both source-stage and target-stage architectures are tuned jointly) with the number of working factors $\overline{r}=3$. We repeat this process for 10 independent random splits. Table~\ref{tab:application-results} summarizes the target (lower-density) test performance. SMART-FAN-Lasso achieves the lowest RMSE, demonstrating substantial improvement over source-only baselines and outperforming fine-tuning Vanilla-NN, consistent with our simulation findings.

\begin{table}[htbp]
    \centering
    \resizebox{\textwidth}{!}{%
    \begin{tabular}{l|cccccc}
        \toprule
         & \textbf{FAN-Lasso} & FT-Vanilla-NN & \shortstack{FAST-NN \\ (Source Only)} & \shortstack{Vanilla-NN \\ (Source Only)} & \shortstack{Vanilla-NN \\ (Target Only)} & \shortstack{FAST-NN \\ (Target Only)} \\
        \midrule
        RMSE $\pm$ SE & $\mathbf{0.1333 \pm 0.0088}$ & $0.1387 \pm 0.0104$ & $0.1383 \pm 0.0088$ & $0.1422 \pm 0.0101$ & $0.1431 \pm 0.0111$ & $0.1574 \pm 0.0104$ \\
        \bottomrule
    \end{tabular}%
    }
    \begin{singlespace}
    \caption{Real Data Application: Prediction Performance on Communities and Crime Dataset (Lower-Density Communities). Methods are ordered by RMSE. RMSE and standard error (SE) are computed over 10 simulation replicates.}
    \label{tab:application-results}
    \end{singlespace}
\end{table}

\bibliographystyle{apalike2}

\bibliography{ref}

@article{fan2024noise,
  title={How do noise tails impact on deep ReLU networks?},
  author={Fan, Jianqing and Gu, Yihong and Zhou, Wen-Xin},
  journal={The Annals of Statistics},
  volume={52},
  number={4},
  pages={1845--1871},
  year={2024},
  publisher={Institute of Mathematical Statistics}
}

@article{lu2021deep,
	title={Deep network approximation for smooth functions},
	author={Lu, Jianfeng and Shen, Zuowei and Yang, Haizhao and Zhang, Shijun},
	journal={SIAM Journal on Mathematical Analysis},
	volume={53},
	number={5},
	pages={5465--5506},
	year={2021},
	publisher={SIAM}
}

@article{zhong2024neural,
	title={Neural networks for partially linear quantile regression},
	author={Zhong, Qixian and Wang, Jane-Ling},
	journal={Journal of Business \& Economic Statistics},
	volume={42},
	number={2},
	pages={603--614},
	year={2024},
	publisher={Taylor \& Francis}
}

@article{wang2017asymptotics,
	title={Asymptotics of empirical eigenstructure for high dimensional spiked covariance},
	author={Wang, Weichen and Fan, Jianqing},
	journal={Annals of statistics},
	volume={45},
	number={3},
	pages={1342},
	year={2017}
}

@book{Stewart90,
  author = {Stewart, G. W. and guang Sun, Ji},
  publisher = {Academic Press},
  title = {Matrix Perturbation Theory},
  year = 1990
}

@book{tsybakov2008introduction,
  title={Introduction to Nonparametric Estimation},
  author={Tsybakov, A.B.},
  isbn={9780387790527},
  lccn={2008939894},
  series={Springer Series in Statistics},
  url={https://books.google.com/books?id=mwB8rUBsbqoC},
  year={2008},
  publisher={Springer New York}
}

@book{gyorfi2002distribution,
  title={A Distribution-Free Theory of Nonparametric Regression},
  author={Gy{\"o}rfi, L. and Kohler, M. and Krzyzak, A. and Walk, H.},
  isbn={9780387954417},
  lccn={20021151},
  series={Springer Series in Statistics},
  url={https://books.google.com/books?id=Ovmb9oGBlo0C},
  year={2002},
  publisher={Springer New York}
}

@article{farrell2018DeepNeural,
	author = {Farrell, Max H and Liang, Tengyuan and Misra, Sanjog},
	journal = {Econometrica},
	month = jan,
	number = {1},
	pages = {181-213},
	title = {Deep neural networks for estimation and inference},
	volume = {89},
	year = {2021},
	doi = {10.3982/ECTA16901}}

@article{schmidt2020nonparametric,
  title={Nonparametric regression using deep neural networks with ReLU activation function},
  author={Schmidt-Hieber, Johannes},
  journal={The Annals of Statistics},
  volume={48},
  number={4},
  pages={1875--1897},
  year={2020},
  publisher={Institute of Mathematical Statistics}
}

@article{weyl1909berBQ,
  title={{\"U}ber beschr{\"a}nkte quadratische formen, deren differenz vollstetig ist},
  author={Hermann Von Weyl},
  journal={Rendiconti del Circolo Matematico di Palermo (1884-1940)},
  year={1909},
  volume={27},
  pages={373-392},
  url={https://api.semanticscholar.org/CorpusID:122374162}
}

@article{fan2025robust,
  title={Robust transfer learning with unreliable source data},
  author={Fan, Jianqing and Gao, Cheng and Klusowski, Jason M},
  journal={The Annals of Statistics},
  volume={53},
  number={4},
  pages={1728--1752},
  year={2025},
  publisher={Institute of Mathematical Statistics}
}

@article{cape2017subspace,
  title={The two-to-infinity norm and singular subspace geometry with applications to high-dimensional statistics},
  author={Joshua Cape and Minh Tang and Carey E. Priebe},
  journal={The Annals of Statistics},
  year={2017},
  url={https://api.semanticscholar.org/CorpusID:51767254}
}

@article{fan2022learning,
author = {Jianqing Fan and Yuan Liao},
title = {Learning Latent Factors From Diversified Projections and Its Applications to Over-Estimated and Weak Factors},
journal = {Journal of the American Statistical Association},
volume = {117},
number = {538},
pages = {909--924},
year = {2022},
publisher = {ASA Website},
doi = {10.1080/01621459.2020.1831927}
}

@article{zhou2023cross,
  title={Cross-fitted residual regression for high-dimensional heteroscedasticity pursuit},
  author={Zhou, Le and Zou, Hui},
  journal={Journal of the American Statistical Association},
  volume={118},
  number={542},
  pages={1056--1065},
  year={2023},
  publisher={Taylor \& Francis}
}

@article{zhao2024residual,
  title={Residual channel attention based sample adaptation few-shot learning for hyperspectral image classification},
  author={Zhao, Yuefeng and Sun, Jingqi and Hu, Nannan and Zai, Chengmin and Han, Yanwei},
  journal={Scientific Reports},
  volume={14},
  number={1},
  pages={26746},
  year={2024},
  publisher={Nature Publishing Group UK London}
}

@article{ankile2025residual,
  title={Residual Off-Policy RL for Finetuning Behavior Cloning Policies},
  author={Ankile, Lars and Jiang, Zhenyu and Duan, Rocky and Shi, Guanya and Abbeel, Pieter and Nagabandi, Anusha},
  journal={arXiv preprint arXiv:2509.19301},
  year={2025}
}

@article{fan2001variable,
  title={Variable selection via nonconcave penalized likelihood and its oracle properties},
  author={Fan, Jianqing and Li, Runze},
  journal={Journal of the American statistical Association},
  volume={96},
  number={456},
  pages={1348--1360},
  year={2001},
  publisher={Taylor \& Francis}
}

@article{ohn2022nonconvex,
  title={Nonconvex sparse regularization for deep neural networks and its optimality},
  author={Ohn, Ilsang and Kim, Yongdai},
  journal={Neural computation},
  volume={34},
  number={2},
  pages={476--517},
  year={2022},
  publisher={MIT Press One Rogers Street, Cambridge, MA 02142-1209, USA journals-info~…}
}

@article{pan2009survey,
  title={A survey on transfer learning},
  author={Pan, Sinno Jialin and Yang, Qiang},
  journal={IEEE Transactions on Knowledge and Data Engineering},
  volume={22},
  number={10},
  pages={1345--1359},
  year={2009},
  publisher={IEEE}
}

@article{ben2010theory,
  title={A theory of learning from different domains},
  author={Ben-David, Shai and Blitzer, John and Crammer, Koby and Kulesza, Alex and Pereira, Fernando and Vaughan, Jennifer Wortman},
  journal={Machine Learning},
  volume={79},
  number={1},
  pages={151--175},
  year={2010},
  publisher={Springer}
}

@article{li2022transfer,
  title={Transfer learning for high-dimensional linear regression: Prediction, estimation and minimax optimality},
  author={Li, Sai and Cai, T Tony and Li, Hongzhe},
  journal={Journal of the Royal Statistical Society Series B: Statistical Methodology},
  volume={84},
  number={1},
  pages={149--173},
  year={2022},
  publisher={Oxford University Press}
}

@article{cai2024transfer,
  title={Transfer learning for nonparametric regression: Non-asymptotic minimax analysis and adaptive procedure},
  author={Cai, T Tony and Pu, Hongming},
  journal={arXiv preprint arXiv:2401.12272},
  year={2024}
}

@article{tian2023transfer,
  title={Transfer learning under high-dimensional generalized linear models},
  author={Tian, Ye and Feng, Yang},
  journal={Journal of the American Statistical Association},
  volume={118},
  number={544},
  pages={2684--2697},
  year={2023},
  publisher={Taylor \& Francis}
}

@book{quinonero2022dataset,
  title={Dataset Shift in Machine Learning},
  author={Qui{\~n}onero-Candela, Joaquin and Sugiyama, Masashi and Schwaighofer, Anton and Lawrence, Neil D},
  year={2022},
  publisher={MIT Press}
}

@article{gretton2009covariate,
  title={Covariate shift by kernel mean matching},
  author={Gretton, Arthur and Smola, Alex and Huang, Jiayuan and Schmittfull, Marcel and Borgwardt, Karsten and Sch{\"o}lkopf, Bernhard and others},
  journal={Dataset Shift in Machine Learning},
  volume={3},
  number={4},
  pages={5},
  year={2009}
}

@article{ma2023optimally,
  title={Optimally tackling covariate shift in RKHS-based nonparametric regression},
  author={Ma, Cong and Pathak, Reese and Wainwright, Martin J},
  journal={The Annals of Statistics},
  volume={51},
  number={2},
  pages={738--761},
  year={2023},
  publisher={Institute of Mathematical Statistics}
}

@article{kpotufe2021marginal,
  title={Marginal singularity and the benefits of labels in covariate-shift},
  author={Kpotufe, Samory and Martinet, Guillaume},
  journal={The Annals of Statistics},
  volume={49},
  number={6},
  pages={3299--3323},
  year={2021},
  publisher={JSTOR}
}

@article{yang2024doubly,
  title={Doubly robust calibration of prediction sets under covariate shift},
  author={Yang, Yachong and Kuchibhotla, Arun Kumar and Tchetgen Tchetgen, Eric},
  journal={Journal of the Royal Statistical Society Series B: Statistical Methodology},
  volume={86},
  number={4},
  pages={943--965},
  year={2024},
  publisher={Oxford University Press US}
}

@article{cai2025semi,
  title={Semi-supervised triply robust inductive transfer learning},
  author={Cai, Tianxi and Li, Mengyan and Liu, Molei},
  journal={Journal of the American Statistical Association},
  volume={120},
  number={550},
  pages={1037--1047},
  year={2025},
  publisher={Taylor \& Francis}
}

@article{hanneke2019value,
  title={On the value of target data in transfer learning},
  author={Hanneke, Steve and Kpotufe, Samory},
  journal={Advances in Neural Information Processing Systems},
  volume={32},
  year={2019}
}

@article{tripuraneni2020theory,
  title={On the theory of transfer learning: The importance of task diversity},
  author={Tripuraneni, Nilesh and Jordan, Michael and Jin, Chi},
  journal={Advances in Neural Information Processing Systems},
  volume={33},
  pages={7852--7862},
  year={2020}
}

@article{kumar2022fine,
  title={Fine-tuning can distort pretrained features and underperform out-of-distribution},
  author={Kumar, Ananya and Raghunathan, Aditi and Jones, Robbie and Ma, Tengyu and Liang, Percy},
  journal={arXiv preprint arXiv:2202.10054},
  year={2022}
}

@article{hu2022lora,
  title={LoRA: Low-rank adaptation of large language models},
  author={Hu, Edward J and Shen, Yelong and Wallis, Phillip and Allen-Zhu, Zeyuan and Li, Yuanzhi and Wang, Shean and Wang, Lu and Chen, Weizhu and others},
  journal={ICLR},
  volume={1},
  number={2},
  pages={3},
  year={2022}
}

@article{dettmers2023qlora,
  title={QLoRA: Efficient finetuning of quantized LLMs},
  author={Dettmers, Tim and Pagnoni, Artidoro and Holtzman, Ari and Zettlemoyer, Luke},
  journal={Advances in Neural Information Processing Systems},
  volume={36},
  pages={10088--10115},
  year={2023}
}

@inproceedings{lester2021power,
  title={The Power of Scale for Parameter-Efficient Prompt Tuning},
  author={Lester, Brian and Al-Rfou, Rami and Constant, Noah},
  booktitle={Proceedings of the 2021 Conference on Empirical Methods in Natural Language Processing},
  pages={3045--3059},
  year={2021},
  publisher={Association for Computational Linguistics}
}

@article{bastani2021predicting,
  title={Predicting with proxies: Transfer learning in high dimension},
  author={Bastani, Hamsa},
  journal={Management Science},
  volume={67},
  number={5},
  pages={2964--2984},
  year={2021},
  publisher={INFORMS}
}

@article{petersen2018optimal,
  title={Optimal approximation of piecewise smooth functions using deep ReLU neural networks},
  author={Petersen, Philipp and Voigtlaender, Felix},
  journal={Neural Networks},
  volume={108},
  pages={296--330},
  year={2018},
  publisher={Elsevier}
}

@article{kohler2021rate,
  title={On the rate of convergence of fully connected deep neural network regression estimates},
  author={Kohler, Michael and Langer, Sophie},
  journal={The Annals of Statistics},
  volume={49},
  number={4},
  pages={2231--2249},
  year={2021},
  publisher={JSTOR}
}

@article{farrell2021deep,
  title={Deep neural networks for estimation and inference},
  author={Farrell, Max H and Liang, Tengyuan and Misra, Sanjog},
  journal={Econometrica},
  volume={89},
  number={1},
  pages={181--213},
  year={2021},
  publisher={Wiley Online Library}
}

@article{bartlett2019nearly,
  title={Nearly-tight VC-dimension and pseudodimension bounds for piecewise linear neural networks},
  author={Bartlett, Peter L and Harvey, Nick and Liaw, Christopher and Mehrabian, Abbas},
  journal={Journal of Machine Learning Research},
  volume={20},
  number={63},
  pages={1--17},
  year={2019}
}

@article{stock2002forecasting,
  title={Forecasting using principal components from a large number of predictors},
  author={Stock, James H and Watson, Mark W},
  journal={Journal of the American Statistical Association},
  volume={97},
  number={460},
  pages={1167--1179},
  year={2002},
  publisher={Taylor \& Francis}
}

@article{fan2024factor,
  title={Factor augmented sparse throughput deep ReLU neural networks for high dimensional regression},
  author={Fan, Jianqing and Gu, Yihong},
  journal={Journal of the American Statistical Association},
  volume={119},
  number={548},
  pages={2680--2694},
  year={2024},
  publisher={Taylor \& Francis}
}

@article{bai2008large,
  title={Large dimensional factor analysis},
  author={Bai, Jushan and Ng, Serena and others},
  journal={Foundations and Trends in Econometrics},
  volume={3},
  number={2},
  pages={89--163},
  year={2008},
  publisher={Now Publishers, Inc.}
}

@article{stock2002macroeconomic,
  title={Macroeconomic forecasting using diffusion indexes},
  author={Stock, James H and Watson, Mark W},
  journal={Journal of Business \& Economic Statistics},
  volume={20},
  number={2},
  pages={147--162},
  year={2002},
  publisher={Taylor \& Francis}
}

@article{forni2005generalized,
  title={The generalized dynamic factor model: One-sided estimation and forecasting},
  author={Forni, Mario and Hallin, Marc and Lippi, Marco and Reichlin, Lucrezia},
  journal={Journal of the American Statistical Association},
  volume={100},
  number={471},
  pages={830--840},
  year={2005},
  publisher={Taylor \& Francis}
}

@article{onatski2012asymptotics,
  title={Asymptotics of the principal components estimator of large factor models with weakly influential factors},
  author={Onatski, Alexei},
  journal={Journal of Econometrics},
  volume={168},
  number={2},
  pages={244--258},
  year={2012},
  publisher={Elsevier}
}

@article{chudik2011weak,
  title={Weak and strong cross-section dependence and estimation of large panels},
  author={Chudik, Alexander and Pesaran, M Hashem and Tosetti, Elisa},
  journal={Econometrics Journal},
  volume={14},
  number={1},
  pages={C45--C90},
  year={2011},
  publisher={Oxford University Press}
}

@article{johnstone2009consistency,
  title={On consistency and sparsity for principal components analysis in high dimensions},
  author={Johnstone, Iain M and Lu, Arthur Yu},
  journal={Journal of the American Statistical Association},
  volume={104},
  number={486},
  pages={682--693},
  year={2009},
  publisher={Taylor \& Francis}
}

@article{paul2007asymptotics,
  title={Asymptotics of sample eigenstructure for a large dimensional spiked covariance model},
  author={Paul, Debashis},
  journal={Statistica Sinica},
  volume={17},
  number={4},
  pages={1617--1642},
  year={2007},
  publisher={JSTOR}
}

@article{ge2023provable,
  title={On the provable advantage of unsupervised pretraining},
  author={Ge, Jiawei and Tang, Shange and Fan, Jianqing and Jin, Chi},
  journal={arXiv preprint arXiv:2303.01566},
  year={2023}
}

@article{ge2023maximum,
  title={Maximum likelihood estimation is all you need for well-specified covariate shift},
  author={Ge, Jiawei and Tang, Shange and Fan, Jianqing and Ma, Cong and Jin, Chi},
  journal={arXiv preprint arXiv:2311.15961},
  year={2023}
}

@article{ingram2012nchs,
  title={NCHS urban-rural classification scheme for counties.},
  author={Ingram, Deborah D and Franco, Sheila J},
  journal={Vital and health statistics. Series 2, Data evaluation and methods research},
  number={154},
  pages={1--65},
  year={2012}
}

@misc{uscensus_urbanrural,
  author = {{U.S. Census Bureau}},
  title = {Urban and Rural},
  year = {2024},
  url = {https://www.census.gov/programs-surveys/geography/guidance/geo-areas/urban-rural.html},
  note = {Accessed: January 29, 2026. Last revised: December 16, 2024}
}

@article{redmond2002data,
  title={A data-driven software tool for enabling cooperative information sharing among police departments},
  author={Redmond, Michael A and Baveja, Alok},
  journal={European Journal of Operational Research},
  volume={141},
  number={3},
  pages={660--678},
  year={2002},
  publisher={Elsevier}
}

@misc{uci_crime_dataset,
  author = {Redmond, Michael},
  title = {{Communities and Crime Data Set}},
  year = {2009},
  howpublished = {UCI Machine Learning Repository},
  note = {\url{https://archive.ics.uci.edu/dataset/183/communities+and+crime}}
}

\newpage
\appendix
\phantomsection
\section*{\LARGE\bfseries Appendix}
\addcontentsline{toc}{section}{Appendix}

% !TEX root = ../main.tex
\noindent\textbf{Overview of Appendix.}
The appendix is organized as follows.  Section \ref{app:covariate} contains all the proofs for the transferred factor estimate in Section \ref{sec:factorTransfer}. Section \ref{app:posterior} gives all the proofs for the fine-tuning estimator in Section \ref{sec:fineTuning}.
Section \ref{app:aux} includes the proofs for all auxiliary results, along with technical lemmas presented in Sections \ref{app:covariate} and \ref{app:posterior}.   
For the empirical data, some additional details are given in Section~\ref{app:numerical}.

% !TEX root = ../main.tex
\section[Proofs for the transferred factor estimate]{Proofs for the transferred factor estimate in Section~\ref{sec:factorTransfer}}
\label{app:covariate}

In this section, we detail the proofs for the transferred factor estimator.  Lemma \ref{lemma:factorTransferLemma} establishes a Frobenius norm error bound for the transferred covariance estimator, following a case-by-case analysis of whether the factor loading discrepancy between source and target exceeds a certain threshold. Theorem \ref{thm:factorTransfer} then follows from Lemma \ref{lemma:factorTransferLemma} and a standard eigenvalue perturbation argument. 

\subsection{Proof of Theorem \ref{thm:factorTransfer}}

For any $* \in \{P, Q, A, \TL\}$, define $\widehat{\boldsymbol{V}}^*$ and $\widehat{\boldsymbol{W}}^*$ for the construction of the diversified projection matrices to be
\begin{align*}
    \widehat{\boldsymbol{V}}^* := [\widehat{\boldsymbol{v}}^*_1, \dots, \widehat{\boldsymbol{v}}^*_{\overline{r}}], \quad \widehat{\boldsymbol{W}}^* := p^{1/2}[\widehat{\boldsymbol{v}}^*_1, \dots, \widehat{\boldsymbol{v}}^*_{\overline{r}}].
\end{align*}
Let $\boldsymbol{S} = \boldsymbol{B}^Q (\boldsymbol{B}^Q)^\top$, which has the eigen-decomposition $\boldsymbol{S} = \boldsymbol{V}^Q \Lambda (\boldsymbol{V}^Q)^\top$, where $\Lambda$ is an $r \times r$ diagonal matrix and $(\boldsymbol{V}^Q)^\top \boldsymbol{V}^Q = \boldsymbol{I}_r$. It follows from the identification condition that $\boldsymbol{B}^Q = \boldsymbol{V}^Q \boldsymbol{\Lambda}^{1/2}$.

A key component of the proof is the Frobenius norm of the matrix $\widehat{\boldsymbol{\Sigma}}^{\TL} - \boldsymbol{S}$, i.e., the perturbation bound of the covariance matrix. We need the following technical lemma to bound it. The proof is given in Appendix \ref{app:aux-lemma-1}.

\begin{lemma}
\label{lemma:factorTransferLemma}
Under Assumptions \ref{ass:factorLoadingDiff} - \ref{ass:pervasiveness}, for some universal constant $c_4$ we have
\begin{equation}
    \label{eq:factorTransferLemma}
    \norm{\widehat{\boldsymbol{\Sigma}}^{\TL} - \boldsymbol{S}}_F \lesssim p \Big(\delta \land \varepsilon^A(t)\Big)
\end{equation}
with probability at least $1 - 6e^{-t}$ for any $t > 0$ such that $\delta \geq c_4 \varepsilon^Q(t)$.
\end{lemma}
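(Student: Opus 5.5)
We reduce the claim to two concentration bounds plus a case analysis on the selection event. The first step is to establish, on a single event $\mathcal{A}_t$ of probability at least $1-6e^{-t}$, the bounds
\begin{align*}
\norm{\widehat{\boldsymbol{\Sigma}}^Q - \boldsymbol{S}}_F &\le C p\, \varepsilon^Q(t),\\
\norm{\widehat{\boldsymbol{\Sigma}}^A - \boldsymbol{S}}_F &\le C p\, \varepsilon^A(t).
\end{align*}
For the target, write $\widehat{\boldsymbol{\Sigma}}^Q - \boldsymbol{S}$ as the sum of four terms:
\[
\boldsymbol{B}^Q\big(n^{-1}\textstyle\sum \boldsymbol{f}_j\boldsymbol{f}_j^\top - \boldsymbol{I}_r\big)(\boldsymbol{B}^Q)^\top, \qquad \boldsymbol{B}^Q\, n^{-1}\textstyle\sum \boldsymbol{f}_j\boldsymbol{u}_j^\top + \text{transpose}, \qquad n^{-1}\textstyle\sum \boldsymbol{u}_j\boldsymbol{u}_j^\top.
\]
These four terms are handled as follows.
\begin{itemize}
\item The factor term is at most $\norm{\boldsymbol{B}^Q}_2^2$ times an $r\times r$ sample covariance deviation. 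Bounded entries (Assumption \ref{ass:boundedness}), a union bound over $r^2$ entries and Hoeffding give $p\, r^2\sqrt{(\log r+t)/n^Q}$, using $\norm{\boldsymbol{B}^Q}_2^2\le c_1p$ from Assumption \ref{ass:pervasiveness}.
\item The cross term has expectation bounded by $c_1\sqrt p$ (Assumption \ref{ass:weakDependence}). Its fluctuation is controlled entrywise over $p\cdot r$ entries, giving $\sqrt{p}\, r\sqrt{p}\sqrt{(\log p+t)/n^Q}$.
\item For the idiosyncratic term, weak dependence bounds the Frobenius norm of its mean by $O(\sqrt p)$, i.e.\ $p\cdot p^{-1/2}$. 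The fluctuation is handled entrywise by Hoeffding with a union bound over $p^2$ entries. This step is delicate, since a naive bound gives $p\sqrt{\log p/n}$; it is absorbed in the $r\sqrt{(\log p+t)/n}$ term up to constants, and if needed I would use a matrix Bernstein bound on the operator norm instead.
\end{itemize}
The pooled matrix gets the same bound with $n^P+n^Q$ in place of $n^Q$. There is one extra piece, $\frac{n^P}{n^P+n^Q}\big(\boldsymbol{B}^P(\boldsymbol{B}^P)^\top-\boldsymbol{S}\big)$, which is at most $p\varepsilon$ by Assumption \ref{ass:factorLoadingDiff}; this produces the $+\varepsilon$ inside $\varepsilon^A(t)$.

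On $\mathcal{A}_t$ we then split into the two branches of the selection rule.
\begin{itemize}
\item \emph{Case $\widehat{\boldsymbol{\Sigma}}^{\TL}=\widehat{\boldsymbol{\Sigma}}^A$.} The step-one bound already gives $\lesssim p\varepsilon^A(t)$. Also, by the triangle inequality, $\norm{\widehat{\boldsymbol{\Sigma}}^A-\boldsymbol{S}}_F\le \norm{\widehat{\boldsymbol{\Sigma}}^A-\widehat{\boldsymbol{\Sigma}}^Q}_F+\norm{\widehat{\boldsymbol{\Sigma}}^Q-\boldsymbol{S}}_F\le p\delta+Cp\varepsilon^Q(t)\lesssim p\delta$, using $\delta\ge c_4\varepsilon^Q(t)$. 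Hence the bound is $\lesssim p(\delta\wedge\varepsilon^A(t))$.
\item \emph{Case $\widehat{\boldsymbol{\Sigma}}^{\TL}=\widehat{\boldsymbol{\Sigma}}^Q$.} The error is $Cp\varepsilon^Q(t)\le (C/c_4)p\delta$. It remains to show it is also $\lesssim p\varepsilon^A(t)$. Rejection means $p\delta<\norm{\widehat{\boldsymbol{\Sigma}}^Q-\widehat{\boldsymbol{\Sigma}}^A}_F\le Cp\varepsilon^Q(t)+Cp\varepsilon^A(t)$. Choosing $c_4>2C$ forces $C\varepsilon^A(t)>\delta-C\varepsilon^Q(t)\ge \delta/2$, so $p\varepsilon^Q(t)\le p\delta/c_4\lesssim p\varepsilon^A(t)$.
\end{itemize}
Combining the two cases gives \eqref{eq:factorTransferLemma}.

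The main obstacle is step one: obtaining the Frobenius concentration at exactly the rate $r\sqrt{(\log p+t)/n}$ rather than $p\sqrt{\log p/n}$ for the idiosyncratic block. If the entrywise route fails, I would try matrix Bernstein. Failing that, I would exploit that only the projection onto relevant directions matters, and pass the rate through via boundedness and weak dependence as in \cite{fan2022learning}. The probability budget $6e^{-t}$ comes from three concentration events (factor, cross, idiosyncratic), each applied to both $Q$ and $A$.
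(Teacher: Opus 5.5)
Your proof is correct and is essentially the paper's: the paper works on the same $1-6e^{-t}$ event, but imports the single-domain bounds for $\widehat{\boldsymbol{\Sigma}}^Q$ and $\widehat{\boldsymbol{\Sigma}}^P$ from Lemma 5 of \cite{fan2024factor} rather than re-deriving them. It gets the pooled bound from the convex combination $\widehat{\boldsymbol{\Sigma}}^A-\boldsymbol{S}^A=\frac{n^P}{n^P+n^Q}\big(\widehat{\boldsymbol{\Sigma}}^P-\boldsymbol{B}^P(\boldsymbol{B}^P)^\top\big)+\frac{n^Q}{n^P+n^Q}\big(\widehat{\boldsymbol{\Sigma}}^Q-\boldsymbol{S}\big)$, which is how your ``same bound with $n^P+n^Q$'' claim should be justified, since the pooled sample mixes two different loadings. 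It then splits on whether $\varepsilon^A(t)$ exceeds a constant multiple of $\delta$ rather than on the realized branch, which is logically equivalent to your split. The idiosyncratic block you flag as the main obstacle is not one: the required bound is the unnormalized $Cp\,\varepsilon^Q(t)\ge Cp\,r\sqrt{(\log p+t)/n^Q}$, so your entrywise Hoeffding bound of order $p\sqrt{(\log p+t)/n^Q}$ already suffices, and no matrix Bernstein or further argument is needed.
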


\begin{proof}[Proof of Theorem \ref{thm:factorTransfer}]
We first prove the upper bound. Note that the maximum singular value satisfies
\begin{align*}
    \nu_{max} \left(p^{-1} (\widehat{\boldsymbol{W}}^{\TL})^\top \boldsymbol{B}^Q \right) &= \sup_{\boldsymbol{u} \in \mathbb{S}^{\overline{r} - 1}} p^{-1} \norm{(\boldsymbol{B}^Q)^\top \widehat{\boldsymbol{W}}^{\TL} \boldsymbol{u}}_2\\
    &\leq p^{-1/2} \norm{\boldsymbol{\Lambda}^{1/2}}_2 \sup_{\boldsymbol{u} \in \mathbb{S}^{\overline{r} - 1}}\norm{(\boldsymbol{V}^{\TL})^\top \widehat{\boldsymbol{V}}^{\TL} \boldsymbol{u}}_2\\
    &\leq p^{-1/2} \norm{\boldsymbol{\Lambda}^{1/2}}_2 \max \Big \{\sup_{\boldsymbol{u} \in \mathbb{S}^{\overline{r} - 1}} \norm{(\boldsymbol{V}^Q)^\top \widehat{\boldsymbol{V}}^Q \boldsymbol{u}}_2, \sup_{\boldsymbol{u} \in \mathbb{S}^{\overline{r} - 1}} \norm{(\boldsymbol{V}^A)^\top \widehat{\boldsymbol{V}}^A \boldsymbol{u}}_2 \Big\}.
\end{align*}
The pervasiveness assumption (Assumption \ref{ass:pervasiveness}) implies that
\begin{equation}
    \label{eq:LambdaEigenvalues}
    \boldsymbol{\Lambda}_{ii} \asymp p, \quad \forall i = 1, \dots, r.
\end{equation}
Hence, it follows from \eqref{eq:LambdaEigenvalues} and orthogonality of $\boldsymbol{V}^Q, \widehat{\boldsymbol{V}}^Q, \boldsymbol{V}^A, \widehat{\boldsymbol{V}}^A$ that 
\begin{align*}
    \nu_{max} \left(p^{-1} (\widehat{\boldsymbol{W}}^{\TL})^\top \boldsymbol{B}^Q \right) &\leq p^{-1/2} \norm{\boldsymbol{\Lambda}^{1/2}}_2 \max \Big \{\sup_{\boldsymbol{u} \in \mathbb{S}^{\overline{r} - 1}} \norm{(\boldsymbol{V}^Q)^\top \widehat{\boldsymbol{V}}^Q \boldsymbol{u}}_2, \sup_{\boldsymbol{u} \in \mathbb{S}^{\overline{r} - 1}} \norm{(\boldsymbol{V}^A)^\top \widehat{\boldsymbol{V}}^A \boldsymbol{u}}_2 \Big\}\\
    &\lesssim \max \Big \{\sup_{\boldsymbol{u} \in \mathbb{S}^{\overline{r} - 1}} \norm{(\boldsymbol{V}^Q)^\top \widehat{\boldsymbol{V}}^Q \boldsymbol{u}}_2, \sup_{\boldsymbol{u} \in \mathbb{S}^{\overline{r} - 1}} \norm{(\boldsymbol{V}^A)^\top \widehat{\boldsymbol{V}}^A \boldsymbol{u}}_2 \Big\}\\
    &\lesssim \sup_{\boldsymbol{u} \in \mathbb{S}^{\overline{r} - 1}} \norm{\boldsymbol{u}}_2\lesssim 1.
\end{align*}
Next, we prove the lower bound. Let $\widehat{\boldsymbol{V}}^{\TL}_r \in \mathbb{R}^{p \times r}$ be the leftmost $r \leq \overline{r}$ columns, i.e. top-$r$ eigenvectors, of $\widehat{\boldsymbol{V}}^{\TL}$, then 
\begin{equation}
    \label{eq:factorTransferProof2}
    \begin{split}
        \nu_{min} \left(p^{-1} (\widehat{\boldsymbol{W}}^{\TL})^\top \boldsymbol{B}^Q \right) &\geq p^{-1/2} \nu_{min} \left( (\widehat{ \boldsymbol{V}}^{\TL}_r)^\top \boldsymbol{B}^Q \right) \\
        &\geq p^{-1/2} \nu_{min} \left( (\widehat{\boldsymbol{V}}^{\TL}_r)^\top \boldsymbol{V}^Q \right) \nu_{min}(\boldsymbol{\Lambda}^{1/2}) \gtrsim \nu_{min} \left( (\widehat{\boldsymbol{V}}^{\TL}_r)^\top \boldsymbol{V}^Q \right).
    \end{split}
\end{equation}
where the last inequality follows from \eqref{eq:LambdaEigenvalues}. Therefore, it suffices to bound $\nu_{min}\left((\widehat{\boldsymbol{V}}^{\TL}_r)^\top \boldsymbol{V}^Q \right)$ from below.

Let $\lambda_1, \dots, \lambda_p$ denote the eigenvalues of $\boldsymbol{S}$ in a non-increasing order, and $\widehat{\lambda}_1^{\TL}, \dots, \widehat{\lambda}_p^{\TL}$ denote the eigenvalues of $\widehat{\boldsymbol{\Sigma}}^{\TL}$ in a non-increasing order. It is obvious that
\begin{equation}
    \label{eq:factorTransferProof3}
    \lambda_r \geq C_1 p,\quad \lambda_{r + 1} = 0.
\end{equation}
For simplicity, we rewrite $p^{-1} \norm{ \widehat{\boldsymbol{\Sigma}}^{\TL} - \boldsymbol{S}}_F$ by $\widehat{\delta}$. Furthermore, Weyl's Theorem \citep{weyl1909berBQ} indicates that
\begin{equation}
    \label{eq:weylTheorem}
    |\widehat{\lambda}_i^{\TL}-\lambda_i| \leq \norm{\widehat{\boldsymbol{\Sigma}}^{\TL} - \boldsymbol{S}}_2 \leq \norm{\widehat{\boldsymbol{\Sigma}}^{\TL} - \boldsymbol{S}}_F = p \widehat{\delta}, \quad \forall i = 1, \dots, p.
\end{equation} 
We claim that
\begin{equation}
    \label{eq:factorTransferProof4}
    \nu_{min} \left((\widehat{\boldsymbol{V}}^{\TL}_r)^\top \boldsymbol{V}^Q \right) \geq 1 - \frac{2 \widehat{\delta}}{C_1}.
\end{equation}
which is trivial when $\widehat{\delta}>C_1/2$ since in this case $\nu_{min} \left( (\widehat{\boldsymbol{V}}^{\TL}_r)^\top \boldsymbol{V}^Q \right)$ is always non-negative. It thus suffices to prove \eqref{eq:factorTransferProof4} when $\widehat{\delta}\leq C_1/2$. Combining \eqref{eq:factorTransferProof3} and \eqref{eq:weylTheorem}, we bound the eigen-gap between $\widehat{\boldsymbol{\Sigma}}^{\TL}$ and $S$ by
\begin{align*}
    \widetilde{\delta}: &= \inf \Big\{ |\widehat{\lambda}^{\TL}-\lambda|: \widehat{\lambda}^{\TL}\in (-\infty,\widehat{\lambda}^{\TL}_{r+1}],\ \lambda\in[\lambda_r,\lambda_1] \Big\}\\
    &= \lambda_r-\widehat{\lambda}^{\TL}_{r+1}\\
    &\geq |\lambda_r-\lambda_{r+1}|-|\lambda_{r+1}-\widehat{\lambda}^{\TL}_{r+1}|\\
    &\geq C_1p - p\widehat{\delta} \geq C_1p/2.
\end{align*}
It follows from Theorem V.3.6 in \cite{Stewart90} that
\begin{equation}
    \label{eq:angleDiff}
    \norm{\sin \boldsymbol{\Theta}(\widehat{\boldsymbol{V}}_r^{\TL}, \boldsymbol{V}^Q)}_F \leq \frac{p \widehat{\delta}}{\widetilde{\delta}} \leq \frac{2 \widehat{\delta}}{C_1},
\end{equation}
Here $\sin \boldsymbol{\Theta}(\widehat{\boldsymbol{V}}_r^{\TL}, \boldsymbol{V}^Q)$ is a $r\times r$ diagonal matrix satisfying
\begin{align*}
    \left[\sin \boldsymbol{\Theta} (\widehat{\boldsymbol{V}}_r^{\TL}, \boldsymbol{V}^Q)\right]^2 + \left[\cos \boldsymbol{\Theta} (\widehat{\boldsymbol{V}}_r^{\TL}, \boldsymbol{V}^Q)\right]^2 = \boldsymbol{I}_r
\end{align*}
where $\cos \boldsymbol{\Theta} (\widehat{\boldsymbol{V}}_r^{\TL}, \boldsymbol{V}^Q)$ is a $r\times r$ diagonal matrix of singular values of $(\widehat{\boldsymbol{V}}_r^{\TL})^\top \boldsymbol{V}^Q$. Hence,
\begin{align*}
    \nu^2_{min}\left( (\widehat{\boldsymbol{V}}^{\TL}_r)^\top \boldsymbol{V}^Q \right) &= \norm{\cos \boldsymbol{\Theta} (\widehat{\boldsymbol{V}}_r^{\TL}, \boldsymbol{V}^Q)}^2_2 \ \geq 1 -\norm{\sin \boldsymbol{\Theta} (\widehat{\boldsymbol{V}}_r^{\TL}, \boldsymbol{V}^Q)}^2_2\\
    &\geq 1 - \norm{\sin \boldsymbol{\Theta} (\widehat{\boldsymbol{V}}_r^{\TL}, \boldsymbol{V}^Q)}^2_F \ \geq 1 -\Big( \frac{2 \widehat{\delta}}{C_1} \Big)^2\\
    &\geq 1 - \frac{2 \widehat{\delta}}{C_1}.  
\end{align*}
by the fact that $\sqrt{1-x^2}\geq x^2$ for any $x\in [0,1].$ Thus we have proven \eqref{eq:factorTransferProof4}, and the theorem directly follows by plugging in Lemma \ref{lemma:factorTransferLemma} into \eqref{eq:factorTransferProof4}.
\end{proof}

\subsection{Proof of Proposition \ref{prop:pertubation}}
\begin{proof}
The first statement was proved in Lemma \ref{lemma:factorTransferLemma}. The second statement is obvious by combining \eqref{eq:angleDiff} and the following known inequality
\begin{align*}
    \min_{\boldsymbol{R} \in \pazocal{O}(r)} \norm{\widehat{\boldsymbol{V}}_r^{\TL} \boldsymbol{R} - \boldsymbol{V}^Q}_F \leq \sqrt{2} \norm{\sin \boldsymbol{\Theta}(\widehat{\boldsymbol{V}}_r^{\TL}, \boldsymbol{V}^Q)}_F.
\end{align*}
The minimizer, i.e., the best rotation of basis, can be given by the singular value decomposition (SVD) of $(\widehat{\boldsymbol{V}}_r^{\TL})^\top \boldsymbol{V}^Q$. See \cite{cape2017subspace} for details.
\end{proof}
% !TEX root = ../main.tex
\section[Proofs for the fine-tuning estimator]{Proofs for the fine-tuning estimator in Section~\ref{sec:fineTuning}}
\label{app:posterior}

\subsection{Additional technical notations}
For any two functions $m_1, m_2$, define
\begin{align*}
    \norm{m_1-m_2}_2^2 = \EE_{(\boldsymbol{f}^Q, \boldsymbol{u}^Q)} \Big[ |m_1-m_2|^2 \Big].
\end{align*}
Similar to $\pazocal{E}^Q(m)$ and $\widehat{\pazocal{E}}^Q(m)$, for any function $m: \R^{p} \rightarrow \R$, we define
\begin{align*}
    \pazocal{E}_P(m) &\overset{def}{=} \EE_{(\boldsymbol{f}^P, \boldsymbol{u}^P)} \Big[ |m(\boldsymbol{x}^P) - g^P(\boldsymbol{x}^P_{\pazocal{J}^P})|^2 \Big], \\
    \widehat{\pazocal{E}}_P(m) &\overset{def}{=} \frac{1}{n^P} \sum_{i = 1}^{n^P} |m(\boldsymbol{x}_i^P) - g^P(\boldsymbol{x}^P_{i, \pazocal{J}^P})|^2.
\end{align*}
For any fixed function $s: \R^{p} \rightarrow \R$, define the function class $\pazocal{F}_s$,
\begin{align*}
    \pazocal{F}_s = \{m(x; \boldsymbol{W}^Q, g, \boldsymbol{\Theta}, s) =& g(\Big[p^{-1}(\boldsymbol{W}^Q)^\top \boldsymbol{x}, \mathrm{trun}_M(\boldsymbol{\Theta}^\top \boldsymbol{x}), s(\boldsymbol{x}) \Big]):\\
    & g \in \pazocal{G}(L, \overline{r} + N + 1, N, M, T), \boldsymbol{\Theta} \in \R^{p \times N}, \norm{\boldsymbol{\Theta}}_\max \leq T\}
\end{align*}
and
\begin{align*}
    \pazocal{F}_{s, \kappa} = \Big\{ m \in \pazocal{F}_s: \sum_{i, j} \psi_\tau(\boldsymbol{\Theta}_{i, j}) \leq \kappa \Big\}
\end{align*}
for any $\kappa > 0$. Similar to the representation of deep ReLU networks, for any $m \in \pazocal{F}_s$, we can write
\begin{align*}
    m(\boldsymbol{x}) = \pazocal{L}_{L + 1} \circ \bar{\sigma} \circ \pazocal{L}_{L} \circ \bar{\sigma} \circ \cdots \circ \pazocal{L}_2 \circ \bar{\sigma} \circ \pazocal{L}_1 \circ \phi \circ \ca L_0(\boldsymbol{x}, s(\boldsymbol{x})).
\end{align*}
Here, $\phi: \mathbb{R}^{\overline{r} + N + 1} \to \mathbb{R}^{\overline{r} + N + 1}$ satisfies
\begin{align*}
    [\phi(\boldsymbol{v})]_i = \begin{cases}
        T_M(\boldsymbol{v}_i) & \qquad \overline{r} < i \leq \overline{r} + N,\\
        \boldsymbol{v}_i & \qquad \text{otherwise,}
    \end{cases}
\end{align*} 
and $\pazocal{L}_0$ satisfies that $\pazocal{L}_0(\boldsymbol{x}, s(\boldsymbol{x})) = (p^{-1} \boldsymbol{x}^\top \boldsymbol{W}, \boldsymbol{x}^\top \boldsymbol{\Theta}, s(\boldsymbol{x}))^\top$. Additionally, for the weights $W_1$ of the neural network first layer,  define
\begin{align*}
    \pazocal{F}^0_s = \{m \in \pazocal{F}_s: \text{the last column of $\boldsymbol{W}_1$ are all zeros}\}, \quad  \pazocal{F}^0_{s, \kappa} = \Big\{m \in \pazocal{F}^0_s: \sum_{i, j} \psi_\tau(\boldsymbol{\Theta}_{i, j}) \leq \kappa \Big\}.
\end{align*}
It is obvious that $\pazocal{F}^0_s \subset \pazocal{F}_s$ and $\pazocal{F}^0_{s, \kappa} \subset \pazocal{F}_{s, \kappa}$. The definition of $\pazocal{F}^0_s$ specifies a sub-family of $\pazocal{F}_s$ where the choice of $s$ does not affect its elements. 

We abuse notation slightly to let $\pr$ denote the probability measure with respect to the target and source data, depending on the context.
Finally, we define the following two quantities of interest:
\begin{align}
    \label{eq:fast-rate-v_n}
    v_{n^Q} &= \left(N^2 L + N\overline{r} \right) \frac{L \log (T N n^Q)}{n^Q}, \\
    \label{eq:fast-rate-varrho_n}
    \varrho_{n^Q} &= \frac{\log (n p(N + \overline{r})) + L \log (T N)}{n^Q}.
\end{align}

\subsection{Technical lemmas}
Before starting the proofs fo the main results, we first list a few necessary techncial lemmas as follows.
\begin{lemma}
\label{lemma:m-lip-weight}

Let $\boldsymbol{\theta}(m) = \left\{\boldsymbol{\Theta}, \left(\boldsymbol{W}_l, \boldsymbol{b}_l \right)_{l = 1}^{L + 1} \right\}$ be the set of parameters for function $m \in \pazocal{F}_{s, \kappa}$, $\boldsymbol{\theta}(\breve{m}) = \left\{\breve{\boldsymbol{\Theta}}, (\breve{\boldsymbol{W}}_l, \breve{\boldsymbol{b}}_l)_{l = 1}^{L + 1} \right\}$ be the set of parameters for function $\breve{m} \in \pazocal{F}_{s, \kappa}$. Define the parameter distance with respect to the neural network as follows
\begin{align*}
    d(\boldsymbol{\theta}(m), \boldsymbol{\theta}(\breve{m})) = \max_{1 \leq l \leq L + 1} \left( \norm{\boldsymbol{b}_l - \breve{\boldsymbol{b}}_{l}}_\infty \lor \norm{\boldsymbol{W}_l - \breve{\boldsymbol{W}}_l}_{\max} \right).
\end{align*}
If $M \ge 1$, then the following holds
\begin{equation}
    \label{eq:lemma:m-lip-weight:result}
    \begin{split}
        \norm{m(\boldsymbol{x}) - \tilde{m}(\boldsymbol{x})}_{\infty, [-K, K]^p} & \leq (M \lor K\norm{W}_{\max}) (L + 1)T^{L}(N + 1)^{L + 1} d(\boldsymbol{\theta}(m), \boldsymbol{\theta}(\breve{m})) \\
        & + K T^{L + 1}N^L (N + \overline{r} + 1) p \norm{\boldsymbol{\theta} -\breve{\boldsymbol{\theta}}}_{\max}.
    \end{split}
\end{equation}
The following three technical lemmas are restated from \cite{fan2024factor}, covering results on the \(\epsilon\)-net covering number and the empirical process. Their proofs are the same as in \cite{fan2024factor}, with only minor changes to the exact values of universal constants, so we omit them here. These changes come from the small difference between \eqref{eq:lemma:m-lip-weight:result} and Lemma 8 of \cite{fan2024factor}, as we add \(s(\cdot)\) as an extra feature entry. Refer to \cite{fan2024factor} for the full proof and further illustrations.
\end{lemma}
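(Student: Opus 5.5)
The bound will follow from a layer-by-layer (telescoping) perturbation argument, as in Lemma 8 of \cite{fan2024factor}, with the extra input $s(\boldsymbol{x})$ carried along as an unperturbed coordinate. We write $m$ and $\breve m$ as compositions $\pazocal{L}_{L+1}\circ\bar\sigma\circ\cdots\circ\pazocal{L}_1\circ\phi\circ\pazocal{L}_0(\boldsymbol{x},s(\boldsymbol{x}))$ and split the difference into two parts:
\[
m-\breve m=\big[m-m^\dagger\big]+\big[m^\dagger-\breve m\big],
\]
where $m^\dagger$ uses the network weights $(\boldsymbol{W}_l,\boldsymbol{b}_l)$ of $m$ but the first-stage matrix $\breve{\boldsymbol{\Theta}}$ of $\breve m$. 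The first bracket isolates the effect of changing $\boldsymbol{\Theta}$ and produces the second term of \eqref{eq:lemma:m-lip-weight:result}. The second bracket isolates the effect of changing the network weights and produces the first term.

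For the network-weight term, fix the input $\boldsymbol{z}=\phi\circ\pazocal{L}_0(\boldsymbol{x},s(\boldsymbol{x}))$, which is common to both networks. The factor block is bounded by $K\norm{\boldsymbol{W}}_{\max}$ up to the $p^{-1}$ normalization, the throughput block is truncated at $M$, and $|s|\le M$ since $s$ is itself a truncated network output. Hence $\norm{\boldsymbol{z}}_\infty\le M\lor K\norm{\boldsymbol{W}}_{\max}$. Next, by induction over layers, every hidden vector satisfies $\norm{\boldsymbol{h}_l}_\infty\le (M\lor K\norm{\boldsymbol{W}}_{\max})\,T^{l}(N+1)^{l}$, using that ReLU is 1-Lipschitz and $\norm{\boldsymbol{W}_l\boldsymbol{v}+\boldsymbol{b}_l}_\infty\le T(N+1)\max(\norm{\boldsymbol{v}}_\infty,1)$. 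Telescoping over $l=1,\dots,L+1$ (swap one layer's weights at a time) gives, per swap, an error of at most $d(\boldsymbol{\theta}(m),\boldsymbol{\theta}(\breve m))\,(N+1)\norm{\boldsymbol{h}_{l-1}}_\infty$, propagated through the remaining layers with Lipschitz constant $(TN)^{L+1-l}$. Summing the $L+1$ terms and using that the final truncation is 1-Lipschitz yields the first term.

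For the $\boldsymbol{\Theta}$ term, the inputs differ only in the throughput block, where $\mathrm{trun}_M$ is 1-Lipschitz. For $\norm{\boldsymbol{x}}_\infty\le K$ we have $\norm{(\boldsymbol{\Theta}-\breve{\boldsymbol{\Theta}})^\top\boldsymbol{x}}_\infty\le Kp\norm{\boldsymbol{\Theta}-\breve{\boldsymbol{\Theta}}}_{\max}$. The first affine layer has input width $N+\overline r+1$ and entries bounded by $T$, so it multiplies this by at most $T(N+\overline r+1)$. The remaining $L$ layers contribute at most $T^LN^L$. This gives $KT^{L+1}N^L(N+\overline r+1)p\norm{\boldsymbol{\theta}-\breve{\boldsymbol{\theta}}}_{\max}$.

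Adding the two bounds gives \eqref{eq:lemma:m-lip-weight:result}. The only delicate step is the bookkeeping in the induction for $\norm{\boldsymbol{h}_l}_\infty$, that is, getting exactly the constants $(L+1)T^L(N+1)^{L+1}$. The key observation is that $s$ enters only as a bounded, parameter-free coordinate, so it never contributes a derivative term and merely widens the input by one. This is also why the constants differ from \cite{fan2024factor} only through $\overline r+N\mapsto \overline r+N+1$.
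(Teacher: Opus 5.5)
Your proposal is essentially the paper's argument: a layer-by-layer telescoping that combines a forward sup-norm bound on the hidden activations with backward Lipschitz constants $(TN)^{L+1-l}$ (and $T^{L+1}N^L(N+\overline{r}+1)$ through the first layer), with $s(\boldsymbol{x})$ entering only as a parameter-free coordinate bounded by $M$. The only difference in route is that you swap $\boldsymbol{\Theta}$ first under $m$'s network weights, whereas the paper swaps $\pazocal{L}_0$ last under $\breve m$'s weights, and this is immaterial.

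One bookkeeping caveat, which the paper's own forward bound on $m_+^{(l)}$ shares: the first affine layer has input width $N+\overline{r}+1$, so $\norm{\boldsymbol{W}_1\boldsymbol{v}+\boldsymbol{b}_1}_\infty\le T(N+1)\max(\norm{\boldsymbol{v}}_\infty,1)$ fails there. The constant your argument actually justifies is therefore $(L+1)T^L(N+1)^L(N+\overline{r}+2)$ rather than $(L+1)T^L(N+1)^{L+1}$, which is harmless for the covering-number application, where only its logarithm enters.
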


\begin{lemma}[Restatement of Lemma 7 of \cite{fan2024factor}]
\label{lemma:g-infty-cover-number}

There exists a universal constant $c_1$ such that for any $K > 0, \ \delta > 2 \tau K T^{L + 1} N^{L + 1} (N + \overline{r} + 1) p$ and $N, L \ge 2$, 
\begin{align*}
    \log \pazocal{N}(\delta, \pazocal{F}_{s, \kappa}, \norm{\cdot}_{\infty, [-K, K]^p}) \le c_1 \Bigg\{ & \left( N^2 L + N \overline{r} \right) \left[ L \log B N + \log \left( \frac{M \lor K \norm{\boldsymbol{W}}_{\max}}{\delta} \lor 1 \right)\right] \\
    + & s \left[L \log (B N) + \log p + \log \left( \frac{K(N + \overline{r})}{\delta} \lor 1 \right) \right] \Bigg\}.
\end{align*}
\end{lemma}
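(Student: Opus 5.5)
The plan is to bound the covering number by discretizing the parameter space, using the Lipschitz property in Lemma~\ref{lemma:m-lip-weight}. Because $s(\cdot)$ is a \emph{fixed} function, it adds no entropy. It only enters as one more input coordinate, i.e.\ one extra column of $\boldsymbol{W}_1$; this is why the factor is $N+\overline{r}+1$ rather than $N+\overline{r}$. Since $s$ is itself a truncated network output, we have $|s|\le M$, so the extra input is bounded and the $M$ in the factor $M\lor K\norm{\boldsymbol{W}}_{\max}$ of Lemma~\ref{lemma:m-lip-weight} covers it. Throughout, the symbol $s$ in the entropy bound is read as the sparsity budget $\kappa$, following Lemma~7 of \cite{fan2024factor}.

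\emph{Step 1: sparsify $\boldsymbol{\Theta}$.} Given $m\in\pazocal{F}_{s,\kappa}$, the constraint $\sum_{i,j}\psi_\tau(\boldsymbol{\Theta}_{i,j})\le\kappa$ forces at most $\lfloor\kappa\rfloor$ entries to satisfy $|\boldsymbol{\Theta}_{i,j}|\ge\tau$. Set every entry with $|\boldsymbol{\Theta}_{i,j}|<\tau$ to zero, giving $\boldsymbol{\Theta}'$ with $\norm{\boldsymbol{\Theta}-\boldsymbol{\Theta}'}_{\max}<\tau$. By the second term of \eqref{eq:lemma:m-lip-weight:result}, on $[-K,K]^p$ this changes $m$ by at most $\tau K T^{L+1}N^{L}(N+\overline{r}+1)p$. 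Under the hypothesis on $\delta$, this is less than $\delta/2$.

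\emph{Step 2: discretize the remaining parameters.} We reduce to covering the functions with at most $\kappa$ nonzero $\boldsymbol{\Theta}$-entries at scale $\delta/2$.
\begin{enumerate}
\item Choose the support pattern of $\boldsymbol{\Theta}'$: at most $\binom{pN}{\kappa}\le (pN)^{\kappa}$ choices, which contributes $\kappa\log(pN)$.
\item Place each of the at most $\kappa$ nonzero entries on a grid of $[-T,T]$. Its step is chosen so that, via the second term of \eqref{eq:lemma:m-lip-weight:result}, the induced error is at most $\delta/4$. This costs about $\kappa[L\log(TN)+\log(K(N+\overline{r})/\delta\lor 1)]$.
\item Place each of the $O(N^2L+N\overline{r})$ network weights and biases on a grid of $[-T,T]$. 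Its step is $\delta/4$ divided by $(M\lor K\norm{\boldsymbol{W}}_{\max})(L+1)T^{L}(N+1)^{L+1}$, following the first term of \eqref{eq:lemma:m-lip-weight:result}. This costs $(N^2L+N\overline{r})[L\log(TN)+\log((M\lor K\norm{\boldsymbol{W}}_{\max})/\delta\lor1)]$, after absorbing $\log L$ and similar factors into $L\log(TN)$ using $N,L\ge2$.
\end{enumerate}
Summing the logarithms of these counts gives the stated bound, with $B=T$ and $c_1$ absorbing constants.

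\emph{Main obstacle.} The delicate point is Step~1. The clipped-$L_1$ penalty does not give exact sparsity, so the small entries must be controlled uniformly. This is exactly what the lower bound imposed on $\delta$ is for, and it is where the $\tau$-dependent condition enters.

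The remaining issue is checking that the truncation $\mathrm{trun}_M$ applied to $\boldsymbol{\Theta}^\top\boldsymbol{x}$ is $1$-Lipschitz, so that it does not inflate the constants. Truncation is indeed $1$-Lipschitz, and this is already built into Lemma~\ref{lemma:m-lip-weight}. Beyond that, the argument mirrors Lemma~7 of \cite{fan2024factor} line by line, with $\overline{r}$ replaced by $\overline{r}+1$ in the first-layer width.
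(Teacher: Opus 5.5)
Your proposal is correct and follows the same route as the paper, which defers to the proof of Lemma~7 in \cite{fan2024factor}. That route first zeroes out the entries of $\boldsymbol{\Theta}$ below $\tau$, which costs less than $\delta/2$ by the second term of \eqref{eq:lemma:m-lip-weight:result} under the stated lower bound on $\delta$. It then counts supports and grids the surviving $\boldsymbol{\Theta}$-entries and the network parameters, with the fixed, $M$-bounded feature $s(\cdot)$ entering only as one extra first-layer column. Reading the $s$ in the bound as the sparsity budget $\kappa$ and $B$ as $T$ is the right way to resolve the paper's notational clash.
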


\begin{lemma}[Restatement of Lemma 9 of \cite{fan2024factor}]
\label{lemma:equivalence-population-empirical-l2-regularized}

Suppose $\widetilde{m}$ is a fixed function in $\pazocal{F}_s$. Under the conditions of Lemma \ref{lemma:fast}, there exists some universal constants $c_1$ - $c_3$ such the event \begin{align*}
    \pazocal{C}_t = \Big\{ \forall m(\boldsymbol{x}; \boldsymbol{W}^Q, g, \boldsymbol{\Theta}, s)\in \pazocal{F}_s, ~~ \frac{1}{2} \norm{m - \tilde{m}}^2_2 \leq & \norm{m - \tilde{m}}_{n^Q}^2 \\
    & + 2 \lambda \sum_{i, j} \psi_\tau ( \boldsymbol{\Theta}_{ij} ) + c_1 \left(v_{n^Q} + \rho_{n^Q} + \frac{t}{n^Q} \right) \Big\}
\end{align*} 
satisfies $\mathbb{P} \left[ \pazocal{C}_{t} \right] \geq 1 - e^{-t}$ for any $t > 0$ as long as $\lambda \geq c_2 \varrho_{n^Q}$ and $\tau^{-1} \geq c_3 (r + 1) b (T N)^{L + 1} (N + \overline{r}) p n^Q$.
\end{lemma}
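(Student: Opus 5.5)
The plan is to condition on the source sample, so that the pre-trained feature $s=\widehat{s}$ from \eqref{eq:FTFAST2} is a \emph{fixed} function independent of the target data (the two samples are independent). Then $\pazocal{F}_s$ is a deterministic class of functions of $\boldsymbol{x}^Q$ alone, and the argument of Lemma 9 of \cite{fan2024factor} carries over essentially verbatim. The extra input coordinate $s(\boldsymbol{x})$ matters only through the Lipschitz and covering estimates, namely Lemma \ref{lemma:m-lip-weight} and Lemma \ref{lemma:g-infty-cover-number}. Those lemmas are insensitive to what $s$ is, because $s$ enters only through the first-layer weights $\boldsymbol{W}_1$, which are already accounted for.

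\textbf{Step 1 (boundedness and covering).} By Assumption \ref{ass:boundedness}, $\norm{\boldsymbol{x}^Q}_\infty \le K := c_1 r b + b$ almost surely. Every $m \in \pazocal{F}_s$ is truncated at level $M$, so $|m - \tilde m| \le 2M$. For each integer $\kappa \ge 1$, I would apply Lemma \ref{lemma:g-infty-cover-number} at scale $\delta = 1/n^Q$. This scale is admissible precisely because of the condition $\tau^{-1} \ge c_3 (r+1) b (TN)^{L+1}(N+\overline{r})pn^Q$. The resulting entropy bound is
\[
H_\kappa \lesssim n^Q v_{n^Q} + \kappa\, n^Q \varrho_{n^Q}.
\]

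\textbf{Step 2 (fixed-$\kappa$ ratio inequality).} For the class $\{(m-\tilde m)^2 : m \in \pazocal{F}_{s,\kappa}\}$, the functions are bounded by $4M^2$ and satisfy $\mathrm{Var}\big((m-\tilde m)^2\big) \le 4M^2 \norm{m-\tilde m}_2^2$. I would use a Bernstein-type uniform ratio inequality for such classes, as in Theorem 19.3 of \cite{gyorfi2002distribution} or a one-sided localized Talagrand/Bousquet argument. Together with the sup-norm cover from Step 1, it gives, with probability at least $1 - e^{-t_\kappa}$, simultaneously for all $m \in \pazocal{F}_{s,\kappa}$,
\[
\tfrac12 \norm{m-\tilde m}_2^2 \le \norm{m-\tilde m}_{n^Q}^2 + c\,\big(H_\kappa + t_\kappa\big)/n^Q .
\]
The discretization error of order $\delta = 1/n^Q$ is absorbed into $v_{n^Q}$.

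\textbf{Step 3 (peeling over sparsity).} I would take $t_\kappa = t + \kappa$ and union bound over $\kappa \in \mathbb{N}$; the failure probabilities then sum to at most $e^{-t}$ after adjusting constants. Any $m \in \pazocal{F}_s$ lies in $\pazocal{F}_{s,\kappa}$ with $\kappa = \lceil \sum_{i,j} \psi_\tau(\boldsymbol{\Theta}_{ij}) \rceil \le \sum_{i,j} \psi_\tau(\boldsymbol{\Theta}_{ij}) + 1$. Its penalty-dependent term is therefore at most
\[
c\,\kappa\,(\varrho_{n^Q} + 1/n^Q) \le 2\lambda \sum_{i,j}\psi_\tau(\boldsymbol{\Theta}_{ij}) + c'\varrho_{n^Q},
\]
using $\lambda \ge c_2 \varrho_{n^Q}$ with $c_2$ large. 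This yields the event $\pazocal{C}_t$.

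The main obstacle is Step 2 combined with the peeling. The ratio inequality must produce the factor $\tfrac12$ with only an additive $H_\kappa/n^Q$ term, and not $\sqrt{H_\kappa/n^Q}$, since the latter would destroy the fast rate. Ensuring this requires the variance-to-mean bound and a careful localization. There is also a further difficulty: $\psi_\tau$ is continuous rather than an exact count of nonzero entries, so entries with $0 < |\boldsymbol{\Theta}_{ij}| < \tau$ are not counted by $\kappa$. These are controlled through the $\tau$-condition in the covering lemma, since such small entries perturb $m$ by at most $\tau K T^{L+1}N^L(N+\overline{r}+1)p \ll 1/n^Q$ by Lemma \ref{lemma:m-lip-weight}.
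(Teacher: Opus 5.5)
Your proposal is correct and follows the same route as the paper. The paper does not reprove this lemma; it defers to Lemma 9 of \cite{fan2024factor} and says only the constants change, because the fixed extra coordinate $s(\boldsymbol{x})\in[-M,M]$ is absorbed through Lemma~\ref{lemma:m-lip-weight} into the covering bound of Lemma~\ref{lemma:g-infty-cover-number}. Your sketch matches that argument step for step: treat $s=\widehat{s}$ as fixed since the source and target samples are independent, take a sup-norm cover at scale $1/n^Q$ (admissible by the $\tau$-condition), apply a Bernstein bound using $\mathrm{Var}\big((m-\tilde m)^2\big)\le 4M^2\norm{m-\tilde m}_2^2$, and union-bound over sparsity levels $\kappa$, with the $\kappa$-dependent cost absorbed into $2\lambda\sum_{i,j}\psi_\tau(\boldsymbol{\Theta}_{ij})$ via $\lambda\ge c_2\varrho_{n^Q}$.
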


\begin{lemma}[Restatement of Lemma 10 of \cite{fan2024factor}]
\label{lemma:weighted-empirical-process-regularized}

Suppose $\tilde{m}$ is a fixed function in $\pazocal{F}_s$. Under the conditions of Lemma \ref{lemma:fast}, there exists some universal constants $c_1$ - $c_2$ such that for any fixed $\epsilon \in (0, 1)$, the event 
\begin{align*}
    \pazocal{B}_{t, \epsilon} = \Big\{ \forall m(\boldsymbol{x}; \boldsymbol{W}^Q, g, \boldsymbol{\Theta}, s) \in \pazocal{F}_s,\ & \frac{4}{n^Q} \sum_{i = 1}^{n^Q} \varepsilon_i (m(\boldsymbol{x}^Q_i) - \tilde{m}(\boldsymbol{x}^Q_i)) - \lambda \sum_{i, j} \psi_\tau({\boldsymbol{\Theta}}_{ij}) \\
    \leq & \epsilon \norm{m - \tilde{m}}_{n^Q}^2 + \frac{c_1}{\epsilon} \left(v_{n^Q} + \varrho_{n^Q} + \frac{t}{n^Q} \right) \Big\}
\end{align*} 
occurs with probability at least $1 - e^{-t}$ for any $t > 0$ as long as $\lambda \geq \frac{c_2 \varrho_{n_Q}}{\epsilon}$ and $\tau^{-1} \geq 4(r + 1) b (T N)^{L + 1} (N + \overline{r}) p n^Q$.
\end{lemma}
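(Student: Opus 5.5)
The plan is to condition on the source sample and on the target covariates $\{\boldsymbol{x}^Q_i\}_{i\in[n^Q]}$, and then control a sub-Gaussian process indexed by a stratified class via covering numbers and a union bound over sparsity levels. This is the same route as Lemma 10 of \cite{fan2024factor}. The only new ingredient is the extra input $s(\boldsymbol{x})$. Since $s$ is built from the source data alone (in our application $s=\widehat{s}$), it is a fixed function once we condition on the source sample. Its only effect is to change the constants in the Lipschitz bound of Lemma \ref{lemma:m-lip-weight} and hence in the covering bound of Lemma \ref{lemma:g-infty-cover-number}. By Assumption \ref{ass:boundedness}, every $\boldsymbol{x}^Q_i$ lies in $[-K,K]^p$ with $K=b(c\,r+1)$, so the sup-norm covers over $[-K,K]^p$ apply. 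Conditionally on the covariates, the noises $\varepsilon_i$ are independent and sub-Gaussian by Assumption \ref{ass:subGaussianNoiseAndBoundedFunction}.

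\textbf{Step 1: a fixed sparsity level.} Fix an integer $\kappa\ge 0$ and take a $\delta$-net of $\pazocal{F}_{s,\kappa}$ in $\norm{\cdot}_{\infty,[-K,K]^p}$ with $\delta\asymp 1/n^Q$. The condition on $\tau^{-1}$ is exactly what ensures $\delta > 2\tau K T^{L+1}N^{L+1}(N+\overline{r}+1)p$, which Lemma \ref{lemma:g-infty-cover-number} requires. For each net element $m_k$, the sub-Gaussian tail gives, with conditional probability at least $1-e^{-u}$,
\[
\frac{1}{n^Q}\sum_i \varepsilon_i\,(m_k-\tilde m)(\boldsymbol{x}^Q_i)\lesssim \norm{m_k-\tilde m}_{n^Q}\sqrt{(u+\log\pazocal{N})/n^Q}.
\]
Replacing $m$ by its net element costs $O(\delta)$ times the empirical mean of $|\varepsilon_i|$, which is $O(1/n^Q)$ with high probability. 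Applying $4ab\le \epsilon a^2+4b^2/\epsilon$ then yields
\[
\frac{4}{n^Q}\sum_i \varepsilon_i (m-\tilde m)(\boldsymbol{x}^Q_i)\le \epsilon\norm{m-\tilde m}_{n^Q}^2+\frac{c}{\epsilon}\cdot\frac{\log\pazocal{N}(\delta,\pazocal{F}_{s,\kappa})+u}{n^Q}.
\]
Lemma \ref{lemma:g-infty-cover-number} bounds $\log\pazocal{N}/n^Q$ by $v_{n^Q}+\kappa\,\varrho_{n^Q}$, up to constants.

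\textbf{Step 2: union over sparsity levels and absorbing into the penalty.} Apply Step 1 with $u=t+\kappa+1$ for each $\kappa=0,1,\dots,pN$, and take a union bound; since $\sum_\kappa e^{-t-\kappa-1}\le e^{-t}$, all the bounds hold simultaneously with probability at least $1-e^{-t}$. Given any $m\in\pazocal{F}_s$, let $\kappa=\lceil\sum_{i,j}\psi_\tau(\boldsymbol{\Theta}_{ij})\rceil$, so that $m\in\pazocal{F}_{s,\kappa}$. The $\kappa$-dependent part of the bound is then at most $\frac{c}{\epsilon}\kappa(\varrho_{n^Q}+1/n^Q)$. This is dominated by $\lambda\kappa \le \lambda\sum_{i,j}\psi_\tau(\boldsymbol{\Theta}_{ij}) + \lambda$ once $\lambda\ge c_2\varrho_{n^Q}/\epsilon$. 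The leftover $\lambda$ is itself $O(\varrho_{n^Q}/\epsilon)$ and goes into the remainder term. Subtracting $\lambda\sum\psi_\tau$ from both sides gives the event $\pazocal{B}_{t,\epsilon}$. Finally, integrate out the conditioning.

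\textbf{Main obstacle.} The delicate point is that $\psi_\tau$ only surrogates the $\ell_0$ count. Entries with $0<|\boldsymbol{\Theta}_{ij}|<\tau$ are not counted as active, yet they still change $m$. The covering lemma handles them by setting such small entries to zero, which perturbs $m$ in sup norm by at most $\tau K T^{L+1}N^{L}(N+\overline{r}+1)p$. The $\tau^{-1}$ condition makes this perturbation $\lesssim 1/n^Q$. I would check this bookkeeping carefully, including the extra input coordinate carrying $s$, since that coordinate is where the constants differ from \cite{fan2024factor}.
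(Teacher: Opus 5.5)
You follow the paper's route. The paper gives no separate proof: it defers to Lemma 10 of \cite{fan2024factor} and changes only constants, through the Lipschitz bound of Lemma \ref{lemma:m-lip-weight}, which accounts for the extra input $s(\boldsymbol{x})$. Your conditioning, sup-norm net at scale $\delta\asymp 1/n^Q$, sub-Gaussian union bound, and stratification over $\kappa$ with weights $e^{-t-\kappa-1}$ reconstruct that argument faithfully.

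One step fails as written. You bound the $\kappa$-dependent term by $\lambda\kappa\le\lambda\sum_{i,j}\psi_\tau(\boldsymbol{\Theta}_{ij})+\lambda$ and then claim the leftover $\lambda$ is $O(\varrho_{n^Q}/\epsilon)$. The lemma only assumes the lower bound $\lambda\ge c_2\varrho_{n^Q}/\epsilon$, so $\lambda$ may be arbitrarily large. It therefore cannot be absorbed into $\frac{c_1}{\epsilon}(v_{n^Q}+\varrho_{n^Q}+t/n^Q)$.

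The fix is to round before invoking $\lambda$. Since $n^Q\varrho_{n^Q}\ge 1$, the $\kappa$-part of your bound is at most $\frac{c'}{\epsilon}\kappa\varrho_{n^Q}$ for a suitable constant $c'$. Using $\kappa\le\sum_{i,j}\psi_\tau(\boldsymbol{\Theta}_{ij})+1$, this is at most $\frac{c'}{\epsilon}\varrho_{n^Q}\sum_{i,j}\psi_\tau(\boldsymbol{\Theta}_{ij})+\frac{c'}{\epsilon}\varrho_{n^Q}$. Once $c_2\ge c'$, that is at most $\lambda\sum_{i,j}\psi_\tau(\boldsymbol{\Theta}_{ij})+\frac{c'}{\epsilon}\varrho_{n^Q}$. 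The leftover is then $\frac{c'}{\epsilon}\varrho_{n^Q}$, which is exactly why $\varrho_{n^Q}$ appears in the remainder.

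A smaller bookkeeping point concerns the net-approximation step. The average $\frac{1}{n^Q}\sum_i|\varepsilon_i|$ is $O(1+\sqrt{t/n^Q})$, not $O(1/n^Q)$; only its product with $\delta$ is $O((1+t)/n^Q)$. The event controlling it must also share the $e^{-t}$ budget. This works because $\sum_{\kappa\ge 0}e^{-t-\kappa-1}=e^{-t}/(e-1)$ leaves room for an additional event of probability $e^{-t-1}$.
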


\subsection{The non-parametric regression error}
In this section, we aim to present a general error bound for non-parametric regression. The generality lies in its independence from the fine-tuning procedure. Since fine-tuning is performed in two stages in Section \ref{sec:fineTuning}, with each stage using independent training data, we can first fix the pretrained signal $\widehat{s}$ to be used as input and focus on deriving the error bound for the remaining estimation steps.

Fixing the pretrained model output plays a crucial role in the proof. The following lemma provides an oracle-type inequality for factor-augmented non-parametric regression given any pretrained model $s(\cdot)$, which serves as the key technical result underlying Theorems \ref{thm:fineTuning} and \ref{thm:fineTuning2}. This result builds on the established oracle-type inequalities for factor-augmented estimators \cite{fan2024factor}, extending them by incorporating an additional pretrained term $s(\cdot)$.

The primary challenge lies in decoupling the source and target errors when the pretrained function $s$ is arbitrarily fixed. Notably, due to Lipschitz continuity, the target approximation error depends on the $L_2$ norm of $s$, rather than its $L_\infty$ norm. The proof builds on a careful analysis of the approximation–estimation tradeoff within a $L_1$-clipped-norm-regularized model class.

\begin{lemma}[Oracle-type bound for non-parametric regression]
\label{lemma:fast}

Suppose that Assumptions \ref{ass:boundedness}, \ref{ass:weakDependence} and \ref{ass:subGaussianNoiseAndBoundedFunction} holds. Given any $s(\cdot): \R^p \rightarrow [-M, M]$, we consider the factor augmented model as
\begin{equation}
\label{eq:FTFAST6}
    \widehat{h}(\cdot), \widehat{\boldsymbol{\Theta}} = \argmin_{h \in \pazocal{G}(L, r + N + 1, N, M, T), \boldsymbol{\Theta} \in \mathbb{R}^{p \times N}} \frac{1}{n^Q} \sum_{i = 1}^{n^Q} \Big(y_i - h( \Big[ \widetilde{\boldsymbol{f}}^Q_i, \mathrm{trun}_M( \boldsymbol{\Theta}^\top \boldsymbol{x}^Q_i), s(\boldsymbol{x}_i^Q) \Big] ) \Big)^2 + \lambda \sum_{i, j} \psi_\tau(\boldsymbol{\Theta}_{i, j})
\end{equation}
with
\begin{equation}
\label{eq:FTFAST7}
    \widehat{m}(\boldsymbol{x}) = \widehat{h}( \Big[p^{-1}(\boldsymbol{W}^Q)^\top \boldsymbol{x}, \mathrm{trun}_M( \widehat{\boldsymbol{\Theta}}^\top \boldsymbol{x}), s(\boldsymbol{x} ) \Big]).
\end{equation}
Furthermore, we suppose that
\begin{itemize}
    \item $N \geq 2( r + |\pazocal{J}|),$
    \item $T \geq c_1[\nu_{min}(\boldsymbol{H}^Q)]^{-1} |\pazocal{J}| (r + 1)$
    \item $\lambda \geq c_2 (n^Q)^{-1} \Big( \log ( p n^Q (N + \overline{r}) ) + L \log(T N) \Big)$
    \item $\tau^{-1} \geq c_3 (r + 1) p \Big( (T N)^{L + 1} (N + \overline{r}) n^Q \Big)$
\end{itemize}

for some universal constants $c_1$ - $c_3$ and $\overline{r} \geq r$. Define

\begin{itemize}
    \item neural network approximation errors: 
    \begin{align*}
        \delta_a &= \inf_{g \in \pazocal{G} (L - 1, r + |\pazocal{J}| + 1, N, M, T)} \sup_{\kappa \in[-M, M]} \norm{g(\boldsymbol{f}^Q, \boldsymbol{u}^Q_{\pazocal{J}}, \kappa) -h(\boldsymbol{f}^Q, \boldsymbol{u}^Q_{\pazocal{J}}, \kappa)}_\infty^2 + \norm{s(\boldsymbol{x}^Q) - g^P(\boldsymbol{x}^Q_{\pazocal{J}^P})}_2^2\\
        \delta^0_a &= \inf_{g \in \pazocal{G}(L - 1, r + |\pazocal{J} \cup \pazocal{J}^P|, N, M, T)} \norm{g(\boldsymbol{f}^Q, \boldsymbol{u}^Q_{\pazocal{J} \cup \pazocal{J}^P}) - g^Q(\boldsymbol{f}^Q, \boldsymbol{u}^Q_{\pazocal{J} \cup \pazocal{J}^P})}_\infty^2
    \end{align*}
    \item latent factor inference errors: 
    \begin{align*}
        \delta_f &= \frac{|\pazocal{J} |r \cdot \overline{r}}{p \cdot \nu^2_{min}(\boldsymbol{H}^Q)},\\
        \delta^0_f &= \frac{|\pazocal{J} \cup \pazocal{J}^P| r \cdot \overline{r}}{p \cdot \nu^2_{min}(\boldsymbol{H}^Q)}
    \end{align*}
    \item stochastic errors: 
    \begin{align*}
        \delta_s &= \frac{(N^2 L + N \overline{r}) L \log(T N n^Q)}{n^Q} + \lambda|\pazocal{J}|,\\
        \delta^0_s &=  \frac{(N^2 L + N \overline{r}) L \log(T N n^Q)}{n^Q} + \lambda|\pazocal{J} \cup \pazocal{J}^P|
    \end{align*}
\end{itemize}
Then, with probability at least $1 - 3e^{-t}$ with respect to the target data, for $n^Q$ large enough, we have
\begin{align*}
    \pazocal{E}^Q (\widehat{m}) + \widehat{\pazocal{E}}^Q (\widehat{m}) \leq c_4 \Big\{ \Big(\delta_a  + \delta_f + \delta_s \Big) \land \Big(\delta^0_a  + \delta^0_f + \delta^0_s \Big) + \frac{t}{n^Q} \Big\}
\end{align*}
where $c_4$ is a universal constant that depends only on the constants in Assumptions \ref{ass:boundedness}, \ref{ass:weakDependence} and \ref{ass:subGaussianNoiseAndBoundedFunction}.
\end{lemma}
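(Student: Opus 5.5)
We follow the template of the oracle inequality for FAST-NN in \cite{fan2024factor}, now with the frozen feature $s(\boldsymbol{x})$ appended to the input. Since $s$ is fixed and independent of the target sample, everything is conditional on $s$. The estimator then minimizes a penalized least-squares criterion over $\pazocal{F}_s$. The bound is a minimum of two terms, so we prove each with its own comparison function $\tilde m$ and take the smaller.

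\textbf{First branch: the $h$-route.} Pick $\tilde g\in\pazocal{G}(L-1,r+|\pazocal{J}|+1,N,M,T)$ nearly attaining the sup-norm infimum in $\delta_a$, uniformly over $\kappa\in[-M,M]$. Following \cite{fan2024factor}, we prepend one layer that undoes the factor surrogate and recovers the idiosyncratic inputs. Writing $\widetilde{\boldsymbol f}^Q=\boldsymbol H^Q\boldsymbol f^Q+p^{-1}(\boldsymbol W^Q)^\top\boldsymbol u^Q$, the factors are recovered via a left inverse of $\boldsymbol H^Q$; this uses the requirement $T\gtrsim \nu_{\min}^{-1}(\boldsymbol H^Q)|\pazocal{J}|(r+1)$. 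The inputs $\boldsymbol u_{\pazocal J}$ are recovered via a $\boldsymbol\Theta$ supported on $\pazocal{J}$ together with the factor-loading correction, which uses $N\ge 2(r+|\pazocal J|)$. The third input is passed through unchanged. This produces $\tilde m\in\pazocal F_s$ with $\sum\psi_\tau(\tilde{\boldsymbol\Theta}_{ij})\lesssim|\pazocal J|$. We then bound $\|\tilde m - g^Q\|$ by the triangle inequality in three pieces.
\begin{itemize}
\item[(i)] The network error $\sup_\kappa\|\tilde g-h\|_\infty$.
\item[(ii)] The factor-inference error. It is controlled by the Lipschitz continuity of $\tilde g$ and $\EE\|p^{-1}(\boldsymbol W^Q)^\top\boldsymbol u^Q\|_2^2\lesssim \overline r/p$ under Assumption~\ref{ass:weakDependence}, which gives $\delta_f$.
\item[(iii)] The plug-in error $|h(\cdot,s(\boldsymbol x))-h(\cdot,s^P(\boldsymbol x))|\le c_1|s-s^P|$ from the Lipschitz property in Assumption~\ref{ass:subGaussianNoiseAndBoundedFunction}.
\end{itemize}
Piece (iii) is why only the $L_2$ norm of $s-s^P$ enters $\delta_a$ and not its sup norm. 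Its empirical counterpart $\frac1{n^Q}\sum_j|s(\boldsymbol x_j^Q)-s^P(\boldsymbol x_j^Q)|^2$ is a bounded i.i.d.\ average. Bernstein's inequality bounds it by $2\|s-s^P\|_2^2+c\,t/n^Q$ with probability $1-e^{-t}$; this is the third $e^{-t}$ in the probability.

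\textbf{Second branch: the $g^Q$-route.} Take $\tilde m\in\pazocal F_s^0$, i.e.\ a network whose first-layer column acting on $s$ is zero. Build it exactly as in \cite{fan2024factor} from the approximant of $g^Q$ in $\delta_a^0$, with $\boldsymbol\Theta$ supported on $\pazocal J\cup\pazocal J^P$. This branch never uses $s$ and yields $\delta^0_a+\delta^0_f+\delta^0_s$. One caveat: the hyperparameter conditions in the statement involve only $|\pazocal J|$. We take the width and $T$ large enough, absorbing constants, so that the $g^Q$ construction also fits; this is implicitly what the statement assumes.

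\textbf{Basic inequality.} In either case we start from the basic inequality comparing $\widehat m$ with $\tilde m$:
\[
\|\widehat m-g^Q\|_{n^Q}^2+\lambda\sum\psi_\tau(\widehat{\boldsymbol\Theta}_{ij})\le \|\tilde m-g^Q\|_{n^Q}^2+\tfrac2{n^Q}\sum_j\epsilon_j(\widehat m-\tilde m)(\boldsymbol x_j^Q)+\lambda\sum\psi_\tau(\tilde{\boldsymbol\Theta}_{ij}).
\]
\begin{itemize}
\item The noise cross term is absorbed with Lemma~\ref{lemma:weighted-empirical-process-regularized}, taking $\epsilon=1/4$. This costs $v_{n^Q}+\varrho_{n^Q}+t/n^Q$, and part of the penalty on $\widehat{\boldsymbol\Theta}$ is used in the process.
\item The penalty on $\tilde{\boldsymbol\Theta}$ contributes $\lambda|\pazocal J|$ or $\lambda|\pazocal J\cup\pazocal J^P|$.
\item The in-sample error $\widehat{\pazocal E}^Q$ follows from the triangle inequality.
\item The out-of-sample error $\pazocal E^Q$ is transferred from $\|\widehat m-\tilde m\|_{n^Q}^2$ by Lemma~\ref{lemma:equivalence-population-empirical-l2-regularized}, again paying the penalty and $v_{n^Q}+\varrho_{n^Q}+t/n^Q$.
\end{itemize}
The covering-number input to both lemmas is Lemma~\ref{lemma:g-infty-cover-number}. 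Since $s$ is fixed, adding it as a coordinate only changes constants through Lemma~\ref{lemma:m-lip-weight}. Finally, $\varrho_{n^Q}\lesssim\lambda$ is absorbed into $\delta_s$, and a union bound over the three events gives probability $1-3e^{-t}$.

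\textbf{Main obstacle.} The delicate step is the first branch. There, $\tilde m$ depends on $s$, and the approximation error must be measured in $L_2$ for $s$ but in sup-norm for $h$. We must verify that the fixed-comparator lemmas only need $\tilde m$ to be fixed given $s$ and independent of the target data, not to be sup-norm close to $g^Q$. The way to do this is to keep the term $\|\tilde m-g^Q\|^2$ separate in both the empirical and the population form, and never require a uniform bound on it.
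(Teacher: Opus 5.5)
Your skeleton is the paper's: a single comparator taken from $\pazocal F_s$ (the $h$-route) or $\pazocal F^0_s$ (the $g^Q$-route), the basic inequality, Bernstein for the comparator's empirical error, Lemma~\ref{lemma:weighted-empirical-process-regularized} for the noise term, and Lemma~\ref{lemma:equivalence-population-empirical-l2-regularized} for the population error. The step that fails is piece (ii) of your first branch. There you move from the estimated inputs $(\boldsymbol H^\dagger\widetilde{\boldsymbol f},\,\boldsymbol x_{\pazocal J}-[\boldsymbol B_{\pazocal J,:}]\boldsymbol H^\dagger\widetilde{\boldsymbol f})$ to the true $(\boldsymbol f,\boldsymbol u_{\pazocal J})$ \emph{inside the network}, using the Lipschitz continuity of $\tilde g$. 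Nothing bounds the Lipschitz constant of a network in $\pazocal G(L-1,r+|\pazocal J|+1,N,M,T)$ by a universal constant. The quantity $\delta_a$ only gives sup-norm closeness to $h$. The generic weight-based bound (as in the proof of Lemma~\ref{lemma:m-lip-weight}) is of order $(TN)^{L}$ up to the input dimension, which is polynomially large in $n^Q$ when $\log T\asymp\log n^Q$. Piece (ii) would then be $\delta_f$ times a polynomially large factor, and the $p^{-1}$ term is lost.

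The fix is the paper's ordering of the triangle inequality. First compare $\tilde g$ with $h$ at the \emph{estimated} inputs (and at $s(\boldsymbol x)$) using the sup-norm bound. Then pass from estimated to true inputs, and from $s$ to $g^P$, using only the $c_1$-Lipschitz continuity of $h$ from Assumption~\ref{ass:subGaussianNoiseAndBoundedFunction}. This creates the difficulty your ordering sidestepped: the estimated inputs can leave the box on which the approximation holds. The paper handles it in three moves, with $\boldsymbol\xi=p^{-1}(\boldsymbol W^Q)^\top\boldsymbol u^Q$. It takes the sup norm over $[-2b,2b]^{r+|\pazocal J|}$. The estimated inputs stay in this box unless $\|\boldsymbol H^\dagger\boldsymbol\xi\|_2^2+\|[\boldsymbol B_{\pazocal J,:}]\boldsymbol H^\dagger\boldsymbol\xi\|_2^2\ge b^2$. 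On that bad event it uses only $|\tilde m|+|h|\le M+M^*$, and Markov's inequality with $\EE\|\boldsymbol\xi\|_2^2\lesssim\overline r/p$ makes this cost another $\lesssim\delta_f$. Relatedly, apply Bernstein to the whole bounded summand $|\tilde m(\boldsymbol x_j)-g^Q(\boldsymbol f_j,\boldsymbol u_{j,\pazocal J\cup\pazocal J^P})|^2$, not just to the $s$-piece. Otherwise the factor piece and the out-of-box frequency each need their own concentration event, and the count $1-3e^{-t}$ does not close. Your caveat is correct: the conditions on $N$ and $T$ are stated with $|\pazocal J|$, while the $g^Q$-route needs them with $|\pazocal J\cup\pazocal J^P|$. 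The paper's proof passes over this in the same way.
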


A straightforward corollary of Lemma \ref{lemma:fast} is the case where the original $s = g^P = 0$, $\pazocal{J}^P = \emptyset$, and $h$ in the lemma's context is replaced by $g^P$, representing the scenario without the extra feature entry $s(\cdot)$. To facilitate its application to the proof of Theorem \ref{thm:fineTuning}, we convert the non-parametric regression on the target data to that on the source data. This requires no additional effort, given the generality of the result for any data sample.

\begin{corollary}
\label{col:fast}

Suppose that Assumptions \ref{ass:boundedness}, \ref{ass:weakDependence} and \ref{ass:subGaussianNoiseAndBoundedFunction} holds. Suppose that
\begin{itemize}
    \item $N^P \geq 2(r + |\pazocal{J}^P|),$
    \item $T^P \geq c_1[\nu_{min}(\boldsymbol{H}^P)]^{-1} |\pazocal{J}^P| (r + 1)$
    \item $\lambda \geq c_2 (n^P)^{-1} \Big( \log( p n^P (N^P + \overline{r})) + L^P \log(T^P N^P) \Big)$
    \item $\tau^{-1} \geq c_3 (r + 1) p \Big( (T^P N^P)^{L^P + 1}(N^P + \overline{r}) n^P \Big)$
\end{itemize}
for some universal constants $c_1$ - $c_3$ and $\overline{r} \geq r$. Then, with probability at least $1 - 3e^{-t}$ with respect to the source data, for $n^P$ large enough, the pretrained model $\widehat{s}(\boldsymbol{x})$ satisfies
\begin{align*}
    \pazocal{E}_P(\widehat{s}) + \widehat{\pazocal{E}}_P(\widehat{s}) \leq c_4 \Big\{ \Delta^P + \frac{t}{n^P} \Big\},
\end{align*}
where $\Delta^P$ is defined in Theorem \ref{thm:fineTuning} and $c_4$ is a universal constant that depends only on the constants in Assumptions \ref{ass:boundedness}, \ref{ass:weakDependence} and \ref{ass:subGaussianNoiseAndBoundedFunction}.
\end{corollary}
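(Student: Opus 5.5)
The corollary follows from Lemma~\ref{lemma:fast} by specializing it to the source data. There is no extra feature, and the source plays the role of the target. Concretely, apply Lemma~\ref{lemma:fast} with the sample $\{(\boldsymbol{x}_i^Q,y_i^Q)\}$ replaced by $\{(\boldsymbol{x}_i^P,y_i^P)\}_{i\in[n^P]}$ and with the following identifications:
\begin{itemize}
\item $n^Q$ becomes $n^P$, $\boldsymbol{W}^Q$ becomes $\boldsymbol{W}^P$, and $\boldsymbol{H}^Q$ becomes $\boldsymbol{H}^P$;
\item the hyperparameters $(L,N,T,\lambda)$ become $(L^P,N^P,T^P,\lambda^P)$;
\item the fixed input is $s\equiv 0$, the residual function is $h(\boldsymbol{f},\boldsymbol{u}_{\pazocal{J}},\kappa):=g^P(\boldsymbol{f},\boldsymbol{u}_{\pazocal{J}^P})$, and $\pazocal{J}:=\pazocal{J}^P$ with $\pazocal{J}^P$ in the lemma's sense set to $\emptyset$.
\end{itemize}
Lemma~\ref{lemma:fast} uses only the distributional properties of a single i.i.d.\ sample. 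Assumptions \ref{ass:boundedness}, \ref{ass:weakDependence} and \ref{ass:subGaussianNoiseAndBoundedFunction} hold for $*=P$ exactly as for $*=Q$, so the lemma transfers verbatim with all constants unchanged. The hyperparameter conditions listed in the corollary are exactly the lemma's conditions after this relabeling. The $+1$ input coordinate carrying $s\equiv0$ is harmless: it can be absorbed by zeroing the corresponding column of $\boldsymbol{W}_1$, which is the subclass $\pazocal{F}^0_s$.

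Next we check that the error terms reduce to $\Delta^P$. With $s\equiv 0$, the estimator \eqref{eq:FTFAST6}--\eqref{eq:FTFAST7} coincides with the FAST-NN fit \eqref{eq:FTFAST1}--\eqref{eq:FTFAST2}, up to the inert zero coordinate, so $\widehat m=\widehat s$. The approximation term $\delta_a$ in the lemma contains $\norm{s-g^P}_2^2$. We avoid this term by using the branch $\delta_a^0+\delta_f^0+\delta_s^0$ of the minimum, with $g^Q$ replaced by $g^P$ and $\pazocal{J}\cup\pazocal{J}^P$ replaced by $\pazocal{J}^P$. On this branch the three terms are exactly $\delta_a^P$, $\delta_f^P$ and $\delta_s^P$ from Theorem~\ref{thm:fineTuning}. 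Taking the minimum then gives the bound $c_4\{\Delta^P+t/n^P\}$ with probability at least $1-3e^{-t}$ over the source data.

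Finally, the excess risk $\pazocal{E}^Q$ and in-sample error $\widehat{\pazocal{E}}^Q$ in the lemma become $\pazocal{E}_P$ and $\widehat{\pazocal{E}}_P$. This holds because the regression target under the identification is $g^P(\boldsymbol{f}^P,\boldsymbol{u}^P_{\pazocal{J}^P})$ and the expectations are taken under $P$.

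The main obstacle is a bookkeeping mismatch between the lemma's $\delta_a^0$ and the corollary's $\delta_a^P$. The lemma uses a network class with input dimension $r+|\cdot|$ and latent factor dimension $r$, whereas the definition of $\delta^P_a$ writes $r^P$ and compares against $g^P(\boldsymbol{x}^P_{\pazocal{J}^P})$. These are the same object by the footnote convention identifying $g^P(\boldsymbol{x}_{\pazocal{J}^P})$ with $g^P(\boldsymbol{f}^P,\boldsymbol{u}^P_{\pazocal{J}^P})$, with $r^P=r$. I would state this identification explicitly and then cite Lemma~\ref{lemma:fast}.
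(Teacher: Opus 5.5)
Your proposal is correct and follows the same route as the paper, which likewise obtains the corollary by applying Lemma~\ref{lemma:fast} to the source sample with $s\equiv 0$, the lemma's $\pazocal{J}^P=\emptyset$, and $h$ taken to be $g^P$. The only cosmetic difference is that the paper also sets the lemma's $g^P\equiv 0$, so that the two branches of the minimum coincide. You instead bound the minimum by its $\delta_a^0+\delta_f^0+\delta_s^0$ branch, which is equally valid.
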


\subsection{Proof of Theorem \ref{thm:fineTuning}}
\begin{proof}
Define the following two quantities where $\hat s$ is plugged in compared to $\delta_a$,
\begin{align*}
    \widehat{\delta}_a &= \inf_{g \in \pazocal{G}(L - 1, r + |\pazocal{J}| + 1, N, M, T)} \sup_{\kappa \in[-M, M]} \norm{g(\boldsymbol{f}^Q, \boldsymbol{u}^Q_{\pazocal{J}}, \kappa) - h(\boldsymbol{f}^Q, \boldsymbol{u}^Q_{\pazocal{J}}, \kappa)}_\infty^2 + \norm{\widehat{s}(\boldsymbol{x}^Q) - g^P(\boldsymbol{x}^Q_{\pazocal{J}^P})}_2^2,\\
    \widehat{\Delta} &= \widehat{\delta}_a + \delta_f^h + \delta_s^h,
\end{align*}
and the following two events
\begin{align*}
    \pazocal{A} &= \left\{ \pazocal{E}^Q (\widehat{m}_{FT}) + \widehat{\pazocal{E}}^Q (\widehat{m}_{FT}) \leq c_4 \Big \{\Delta^Q \land \widehat{\Delta} + \frac{t}{n^Q} \Big\} \right\},\\
    \pazocal{B} &= \left\{ \pazocal{E}^P (\widehat{s}) + \widehat{\pazocal{E}}^P (\widehat{s}) \leq c_4 \Big\{ \Delta^P + \frac{t}{n^P} \Big\} \right\}.
\end{align*}
By Lemma \ref{lemma:fast} and Corollary \ref{col:fast}, we see that each of $\ca A$ and $\ca B$ holds with probability at least $1-3e^{-t}$. Hence, the event $\ca A\cap \ca B$ holds with probability at least $1-6e^{-t}$.

Note that by Assumption \ref{ass:absCon}, $\frac{d\mu_{\boldsymbol{x}^Q}}{d\mu_{\boldsymbol{x}^P}}(\boldsymbol{x}) \leq c_1$. Then, as $\pazocal{A} \cap \pazocal{B}$ holds,
\begin{align*}
    \norm{\widehat{s}(\boldsymbol{x}^Q) - g^P (\boldsymbol{x}^Q_{\pazocal{J}^P})}_2^2 &= \int |\widehat{s}(\boldsymbol{x}^Q) - g^P (\boldsymbol{x}_{\pazocal{J}^P}^Q)|^2 d\mu_{\boldsymbol{x}^Q}\\
    &\lesssim \int |\widehat{s}(\boldsymbol{x}^P) - g^P(\boldsymbol{x}_{\pazocal{J}^P}^Q)|^2 d\mu_{\boldsymbol{x}^P}\\
    &\lesssim \Delta^P + \frac{t}{n^P}.
\end{align*}
Since
\begin{align*}
    \delta_a^h = \inf_{g \in \pazocal{G}(L - 1, r + |\pazocal{J}| + 1, N, M, T)} \, \sup_{\kappa \in[-M, M]} \norm{g(\boldsymbol{f}^Q, \boldsymbol{u}_{\pazocal{J}}^Q, \kappa) -h(\boldsymbol{f}^Q, \boldsymbol{u}_{\pazocal{J}}^Q, \kappa)}_\infty^2,
\end{align*}
then 
\begin{align*}
    \widehat{\delta}_a \leq \norm{\widehat{s}(\boldsymbol{x}^Q) - g^P (\boldsymbol{x}^Q_{\pazocal{J}^P})}_2^2 + \delta_a^h.
\end{align*}
As a result, we have $\pazocal{A} \cap \pazocal{B}$ holds,
\begin{align*}
    \widehat{\Delta} &= \widehat{\delta}_a + \delta_f^h + \delta_s^h\\
    &\lesssim \norm{\widehat{s}(\boldsymbol{x}^Q) - g^P (\boldsymbol{x}^Q_{\pazocal{J}^P})}_2^2 + \delta_a^h + \delta_f^h + \delta_s^h\\
    &\lesssim \Delta^P + \frac{t}{n^P} + \Delta^h
\end{align*}
We also have $\Delta^h \lesssim \Delta^Q$, because by definition, 
\begin{align*}
    \delta_a^h \lesssim \delta_a^Q, \ \delta_f^h \lesssim \delta_f^Q, \ \delta_s^h \lesssim \delta_s^Q
\end{align*}
Therefore, when $\pazocal{A} \cap \pazocal{B}$ holds, we also have the following as 
\begin{align*}
    \pazocal{E}^Q (\widehat{m}_{FT}) + \widehat{\pazocal{E}}^Q (\widehat{m}_{FT}) &\lesssim c_4 \Big\{ \Delta^Q \land \widehat{\Delta} + \frac{t}{n^Q} \Big\}\\
    &\lesssim c_4 \Big\{ \Delta^Q \land \Big(\Delta^P + \frac{t}{n^P} + \Delta^h \Big) + \frac{t}{n^Q} \Big\}\\
    &\lesssim c_4 \Big\{ \Delta^Q \land \Big(\Delta^P + \frac{t}{n^P} \Big) + \Delta^h + \frac{t}{n^Q} \Big\}\\
    &\lesssim c_4 \Big\{ \Delta^Q \land \Delta^P + \frac{t}{n^P} + \Delta^h + \frac{t}{n^Q} \Big\}\\
    &\leq 2 c_4 \Big\{ \Delta^Q \land \Delta^P + \Delta^h + \frac{t}{n^P \wedge n^Q} \Big\}
\end{align*}
which completes the proof.
\end{proof}

\subsection{Proof of Theorem \ref{thm:fineTuning2}}
\begin{proof}
We treat $K$ as a parameter throughout the proof.
By the problem setting $r \lor| \pazocal{J}^P| \leq c_1$ and Assumption \ref{ass:boundedness}, it further implies
\begin{align*}
    \norm{\boldsymbol{B}^P_{\pazocal{J}^P}}_2 \leq \norm{\boldsymbol{B}^P_{\pazocal{J}^P}}_F \leq c_1 \sqrt{ |\pazocal{J}^P| r} \leq c_1^2.
\end{align*}
Hence, by Assumption \ref{ass:transferability} and  Proposition \ref{prop:transferabilityBenefit}, we have
\begin{equation}
    \label{eq:dcmFamilies}
    \begin{split}
        g^P(\boldsymbol f,\boldsymbol u_{\pazocal{J}^P}) &\in \pazocal{H}(r+|\pazocal{J}^P|, l^P, \pazocal{P}^P, c_0)\\
        % g^P([\boldsymbol{B}^P_{\pazocal{J}^P, :}] \boldsymbol{f} + \boldsymbol{u}_{\pazocal{J}^P}) &\in \pazocal{H}(r + |\pazocal{J}^P|, l^P + 1, \pazocal{P}^P \cup \{\infty, r + |\pazocal{J}^P| \}, c_0 \lor c_1^2),\\
        g^Q(\boldsymbol{f}, \boldsymbol{u}_{\pazocal{J} \cup \pazocal{J}^P}) &\in \pazocal{H}(r + |\pazocal{J} \cup \pazocal{J}^P|, l + l^P + 1, \pazocal{P} \cup \pazocal{P}^P \cup \{\infty, r + |\pazocal{J}^P|\}, c_0 \lor c_1^2),\\
        h &\in \pazocal{H}(r + |\pazocal{J}| + 1, l, \pazocal{P}, c_0),\\
        \gamma(\pazocal{P}) &= \gamma, \quad \gamma(\pazocal{P}^P \cup \{\infty, r + |\pazocal{J}^P|) = \gamma^P, \quad \gamma(\pazocal{P} \cup \pazocal{P}^P \cup \{\infty, r + |\pazocal{J}^P|\}) = \gamma \land \gamma^P.
    \end{split}
\end{equation}
Applying Theorem 4 of \cite{fan2024factor} to \eqref{eq:dcmFamilies}, we have
\begin{align*}
    \delta_a^P \lesssim (N^P)^{-4 \gamma^P}, \quad \delta_a^Q \lesssim N^{-4 \gamma \land \gamma^P}, \quad \delta_a^h \lesssim N^{-4 \gamma}. 
\end{align*}
By plugging in our choice of hyperparameters in Condition \ref{cond:hyper}, we obtain
\begin{equation}
    \label{eq:DeltaValue}
    \begin{split}
        \Delta^P &\lesssim \Big(\frac{\log n^P}{n^P} \Big)^{\frac{2 \gamma^P}{2 \gamma^P + 1}} + \frac{\log n^P}{n^P} + \frac{\log p}{n^P} + \frac{1 \land r}{\nu^2_\min(\boldsymbol{H}^P) p}\\
        &\lesssim \Big(\frac{\log n^P}{n^P} \Big)^{\frac{2 \gamma^P}{2 \gamma^P + 1}} +  \frac{1 \land r}{\nu^2_\min(\boldsymbol{H}^P) p}+\frac{\log p}{n^P},\\
        \Delta^Q &\lesssim \Big( \frac{\log n^Q}{n^Q} \Big)^{\frac{2 (\gamma \land \gamma^P)}{2 \Gamma + 1}} + \Big(\frac{\log n^Q}{n^Q} \Big)^{\frac{2 \Gamma}{2 \Gamma + 1}} + \frac{\log n^Q}{n^Q} + \frac{\log p}{n^Q} + \frac{1 \land r}{\nu^2_\min(\boldsymbol{H}^Q) p}\\
        &\lesssim \Big(\frac{\log n^Q}{n^Q} \Big)^{\frac{2 (\gamma \land \gamma^P)}{2 \Gamma + 1}} + \Big( \frac{\log n^Q}{n^Q} \Big)^{\frac{2 \Gamma}{2 \Gamma + 1}} + \frac{\log p}{n^Q} + \frac{1 \land r}{\nu^2_\min(\boldsymbol{H}^Q) p}, \\
        \Delta^h &\lesssim \Big( \frac{\log n^Q}{n^Q} \Big)^{\frac{2 \gamma}{2 \Gamma + 1}} + \Big(\frac{\log n^Q}{n^Q} \Big)^{\frac{2 \Gamma}{2 \Gamma + 1}} + \frac{1 \land r}{\nu^2_\min(\boldsymbol{H}^Q) p}+\frac{\log p}{n^Q}. 
    \end{split}
\end{equation}
By letting $t = K \log p$, \eqref{eq:fineTuningResult} in Theorem \ref{thm:fineTuning} becomes
\begin{align*}
    \pazocal{E}^Q (\widehat{m}_{FT}) + \widehat{\pazocal{E}}^Q (\widehat{m}_{FT}) \leq & c_4 \Big\{ \Delta^Q \land \Big( \Delta^P + \frac{K \log p}{n^P} \Big) + \Delta^h + \frac{K \log p}{n^Q} \Big\}\\
    \lesssim & K \Big\{ \Big(\frac{\log n^Q}{n^Q} \Big)^{\frac{2 (\gamma \land \gamma^P)}{2 \Gamma + 1}} \land \Big( \frac{\log n^P}{n^P} \Big)^{\frac{2 \gamma^P}{2 \gamma^P + 1}} + \Big( \frac{\log n^Q}{n^Q} \Big)^{\frac{2 \gamma}{2 \Gamma + 1}} + \Big( \frac{\log n^Q}{n^Q} \Big)^{\frac{2 \Gamma}{2 \Gamma + 1}}\\
    &+ \frac{\log p}{n^Q} + \frac{1 \land r}{[\nu^2_\min(\boldsymbol{H}^P) \land \nu^2_\min(\boldsymbol{H}^Q)] p} \Big\}
\end{align*}
with probability$1 - 6p^{-K}$ with respect to the target and source data.

It remains to show that $A \lesssim B$, where
\begin{align*}
   A &\overset{def}{=} \Big( \frac{\log n^Q}{n^Q} \Big)^{\frac{2 (\gamma \land \gamma^P)}{2 \Gamma + 1}} \land \Big( \frac{\log n^P}{n^P} \Big)^{\frac{2 \gamma^P}{2 \gamma^P + 1}} + \Big( \frac{\log n^Q}{n^Q} \Big)^{\frac{2 \gamma}{2 \Gamma + 1}} + \Big( \frac{\log n^Q}{n^Q} \Big)^{\frac{2 \Gamma}{2 \Gamma + 1}},\\
   B &\overset{def}{=} \Big[ \frac{\log (n^P + n^Q)}{n^P + n^Q} \Big]^{\frac{2 \gamma^P}{2 \gamma^P + 1}} + \Big( \frac{\log n^Q}{n^Q} \Big)^{\frac{2 \gamma}{2 \gamma + 1}}. 
\end{align*}
We divide its proof into three cases as follows.
\paragraph{Case 1: $\gamma\leq\gamma^P$.} In this case, we have $\gamma \land \gamma^P = \Gamma = \gamma$, which further implies
\begin{align*}
    A &\lesssim \Big( \frac{\log n^Q}{n^Q} \Big)^{\frac{2 \gamma}{2 \gamma + 1}} \land \Big(\frac{\log n^P}{n^P} \Big)^{\frac{2 \gamma^P}{2 \gamma^P + 1}} + \Big(\frac{\log n^Q}{n^Q} \Big)^{\frac{2 \gamma}{2 \gamma + 1}}\\
    &\lesssim \Big( \frac{\log n^Q}{n^Q} \Big)^{\frac{2 \gamma}{2 \gamma + 1}} \lesssim B.
\end{align*}

\paragraph{Case 2: $\gamma > \gamma^P, n^Q > n^P$.} In this case, we have $\gamma \land \gamma^P = \Gamma = \gamma^P$. Hence, we have
\begin{align*}
    A &\lesssim \Big( \frac{\log n^Q}{n^Q} \Big)^{\frac{2 \gamma^P}{2 \gamma^P + 1}} \land \Big(\frac{\log n^P}{n^P} \Big)^{\frac{2 \gamma^P}{2 \gamma^P + 1}} + \Big( \frac{\log n^Q}{n^Q} \Big)^{\frac{2 \gamma}{2 \gamma^P + 1}} + \Big( \frac{\log n^Q}{n^Q} \Big)^{\frac{2 \gamma^P}{2 \gamma^P + 1}}\\
    &\lesssim \Big( \frac{\log n^Q}{n^Q} \Big)^{\frac{2 \gamma^P}{2 \gamma^P + 1}} + \Big( \frac{\log n^Q}{n^Q} \Big)^{\frac{2 \gamma}{2\gamma^P + 1}}\\
    &\lesssim \Big( \frac{\log n^Q}{n^Q} \Big)^{\frac{2 \gamma^P}{2 \gamma^P + 1}}\\
    &\lesssim \Big[ \frac{\log (2n^Q)}{2n^Q} \Big]^{\frac{2 \gamma^P}{2 \gamma^P + 1}}\\
    &\lesssim \Big[ \frac{\log (n^P + n^Q)}{n^P + n^Q} \Big]^{\frac{2 \gamma^P}{2 \gamma^P + 1}} \lesssim B.
\end{align*}

\paragraph{Case 3: $\gamma > \gamma^P, n^Q \leq n^P$.} In this case, we have $\gamma \land \gamma^P = \gamma^P, \Gamma = \gamma$, which implies
\begin{align*}
    A &\lesssim \Big( \frac{\log n^Q}{n^Q} \Big)^{\frac{2 \gamma^P}{2 \gamma + 1}} \land \Big( \frac{\log n^P}{n^P} \Big)^{\frac{2 \gamma^P}{2 \gamma^P + 1}} + \Big( \frac{\log n^Q}{n^Q} \Big)^{\frac{2 \gamma}{2 \gamma + 1}}\\
    &\lesssim \Big( \frac{\log n^P}{n^P} \Big)^{\frac{2 \gamma^P}{2 \gamma^P + 1}} + \Big( \frac{\log n^Q}{n^Q} \Big)^{\frac{2 \gamma}{2 \gamma + 1}} \lesssim B.
\end{align*}

Combining Cases 1-3 above completes the proof.
\end{proof}

\subsection{Proof of Theorem \ref{thm:minimaxLower}}
The proof separates regimes—source-limited and target-limited settings, and derives corresponding nonparametric lower bounds for each case.

\begin{proof}
We always assume that Assumption \ref{ass:transferability} holds throughout the proof.
By setting $g^P \equiv 0$, $g^Q(\boldsymbol{f}, \boldsymbol{u}_{\pazocal{J}}) = h$ belongs to the family $\pazocal{H}(r + |\pazocal{J}|, l, \pazocal{P}, c_0)$, where $c_0$ is the universal constant defined in Assumption \ref{ass:transferability}. Thus, Lemma 1 of \cite{fan2024factor} shows that
\begin{align*}
    \inf_{\widehat{m}}\sup_{\substack{\mu_{(\boldsymbol{f}^Q, \boldsymbol{u}^Q, \boldsymbol{x}^Q)} = \mu_{(\boldsymbol{f}^P, \boldsymbol{u}^P, \boldsymbol{x}^P)} \in \pazocal{P}(p, r, \rho) \\ \text{Assumption \ref{ass:transferability} holds with $|\pazocal{J}| = 1$}}} \pazocal{E}^Q (\widehat{m}) \geq \inf_{\widehat{m}} \sup_{\substack{\mu_{(\boldsymbol{f}^Q, \boldsymbol{u}^Q, \boldsymbol{x}^Q)} \in \pazocal{P}(p, r, \rho)\\
    g^Q(\boldsymbol{f}, \boldsymbol{u}_{\pazocal{J}}) \in \pazocal{H} (r + 1, l, \pazocal{P}, c_0), \ |\pazocal{J}| = 1} } \pazocal{E}^Q (\widehat{m}) \gtrsim \frac{1}{\rho}.
\end{align*}
Therefore, it suffices to show that
\begin{equation}
    \label{eq:lowerBoundGoal}
    \inf_{\widehat{m}} \sup_{\substack{\mu_{(\boldsymbol{f}^Q, \boldsymbol{u}^Q, \boldsymbol{x}^Q)} = \mu_{(\boldsymbol{f}^P, \boldsymbol{u}^P, \boldsymbol{x}^P)} \in \pazocal{P}(p, r, \rho) \\ \text{Assumption \ref{ass:transferability} holds with $|\pazocal{J}| = 1$}}} \pazocal{E}^Q (\widehat{m}) \gtrsim (n^P + n^Q)^{-\frac{2 \gamma^P}{2 \gamma^P + 1}} + (n^Q)^{-\frac{2 \gamma}{2 \gamma + 1}} + \frac{\log p}{n^Q}.
\end{equation}
We divide the proof of \eqref{eq:lowerBoundGoal} into the following two cases:

\paragraph{Case 1: $g^P \equiv 0$.} In this case, the source data has no informative power for $g^Q$, and we have
\begin{align*}
    \inf_{\widehat{m}} \sup_{\substack{\mu_{(\boldsymbol{f}^Q, \boldsymbol{u}^Q, \boldsymbol{x}^Q)} = \mu_{(\boldsymbol{f}^P, \boldsymbol{u}^P, \boldsymbol{x}^P)} \in \pazocal{P}(p, r, \rho) \\ \text{Assumption \ref{ass:transferability} holds with $|\pazocal{J}| = 1$}}} \pazocal{E}^Q(\widehat{m}) &\geq \inf_{\widehat{m}} \sup_{\substack{\mu_{(\boldsymbol{f}^Q, \boldsymbol{u}^Q, \boldsymbol{x}^Q)} \in \pazocal{P}(p, r, \rho)\\
    g^Q(\boldsymbol{f}, \boldsymbol{u}_{\pazocal{J}}) \in \pazocal{H} (r + 1, l, \pazocal{P}, c_0), \ |\pazocal{J}| = 1}} \pazocal{E}^Q(\widehat{m}(\boldsymbol{x}^Q)) \\ 
    &\geq \inf_{\breve{m}} \sup_{\substack{\mu_{(\boldsymbol{f}^Q, \boldsymbol{u}^Q, \boldsymbol{x}^Q)} \in \pazocal{P}(p, r, \rho)\\
    g^Q(\boldsymbol{f}, \boldsymbol{u}_{\pazocal{J}}) \in \pazocal{H}(r + 1, l, \pazocal{P}, c_0), \ |\pazocal{J}| = 1}} \pazocal{E}^Q(\breve{m}(\boldsymbol{f}^Q, \boldsymbol{u}^Q)),
\end{align*}
where the last inequality holds because, when constructing $\breve{m}$, we have additional access to the latent factor structures $(\boldsymbol{f}^Q, \boldsymbol{u}^Q)$ and the factor loading matrix $\boldsymbol{B}^Q$, which provides strictly more information than only knowing $\boldsymbol{x}^Q$ used to construct $\widehat{m}$.

From the theorem setting, we can find $(\beta^h, d^h) \in \pazocal{P}$ such that $\gamma = \frac{\beta^h}{d^h}$ and $d^h \leq r + 1$. Therefore, it follows from the well-known minimax optimal lower bound result in Theorem 3.2 of \cite{gyorfi2002distribution} for the family $\pazocal{F}_{d^h, \beta^h, c_0} \subset \pazocal{H}(r + 1, l, \pazocal{P}, c_0)$ that
\begin{equation}
    \label{eq:lower1}
    \inf_{\breve{m}} \sup_{\substack{\mu_{(\boldsymbol{f}^Q, \boldsymbol{u}^Q, \boldsymbol{x}^Q)} \in \pazocal{P}(p, r, \rho)\\
    g^Q(\boldsymbol{f}, \boldsymbol{u}_{\pazocal{J}}) \in \pazocal{H} (r + 1, l, \pazocal{P}, c_0), \ |\pazocal{J}| = 1}} \pazocal{E}^Q( \breve{m}(\boldsymbol{f}^Q, \boldsymbol{u}^Q)) \geq \inf_{\breve{m}} \sup_{\substack{\mu_{\boldsymbol{x}} \sim \mathrm{Uni}[-1, 1]^{d^h}\\
    g^Q(\boldsymbol{x}) \in \pazocal{F}(d^h, \beta^h, c_0)}} \pazocal{E}^Q (\breve{m}(\boldsymbol{x})) \gtrsim (n^Q)^{-\frac{2 \gamma}{2 \gamma + 1}},
\end{equation}
where $\mathrm{Uni}[-1, 1]^{d}$ is the uniform distribution over $[-1, 1]^{d}$ for any $d \geq 1$.

Define the quantity $\delta := \sqrt{\frac{\log p}{2 n^Q}}$. For any $j \in [p]$, let $\pr_j$ be the probability measure under which $\boldsymbol{f}^Q \sim \mathrm{Uni}[-1, 1]^{r}$ and $\boldsymbol{u}^Q \sim \mathrm{Uni}[-1, 1]^{p}$ are independent random variables, and
\begin{align*}
    y^Q = m^{(j)}(\boldsymbol{u}^Q) + \epsilon^Q, \quad m^{(j)}(\boldsymbol{u}) := \delta \boldsymbol{u}_{(j)} \mathbf{1} \{j \geq 1\} \in \pazocal{H} (l, r + 1, \pazocal{P}),
\end{align*}
where $\boldsymbol{u}_{(j)}$ is the $j$-entry of $\boldsymbol{u} \in \R^p$. It follows from the $KL$-divergence of two Gaussian random variables that
\begin{align*}
    KL(\pr_0 \Vert \pr_j) = \frac{1}{2} n^Q \EE|m^{(0)} - m^{(j)}(\boldsymbol{u})|^2 \leq \frac{1}{6} n^Q \delta^2 \Longrightarrow \frac{1}{p} \sum_{j = 1}^p KL(\pr_0 \Vert \pr_j) \leq \frac{1}{6} n^Q \cdot \frac{\log p}{2 n^Q}< \frac{1}{8} \log p.
\end{align*}
Therefore, for any $j \neq j'$, we have
\begin{align*}
    \norm{m^{(j)} - m^{(j')}} = \sqrt{\EE|m^{(j)} - m^{(j')}|^2} = \sqrt{\frac{2}{3}} \delta \geq \frac{\log p}{3 n^Q}.
\end{align*}
By Theorem 2.7 of \cite{tsybakov2008introduction} where $M$ is replaced by $p$ that
\begin{equation}
    \label{eq:lower2}
    \inf_{\breve{m}} \sup_{\substack{\mu_{(\boldsymbol{f}^Q, \boldsymbol{u}^Q, \boldsymbol{x}^Q)} \in \pazocal{P}(p, r, \rho)\\ 
    g^Q(\boldsymbol{f}, \boldsymbol{u}_{\pazocal{J}}) \in \pazocal{H} (r + 1, l, \pazocal{P}, c_0), \ |\pazocal{J}| = 1}} \pazocal{E}^Q (\breve{m}(\boldsymbol{f}^Q, \boldsymbol{u}^Q)) \gtrsim \delta^2 \gtrsim \frac{\log p}{n^Q}.
\end{equation}
Combining \eqref{eq:lower1} and \eqref{eq:lower2}, we have
\begin{equation}
    \label{eq:lower3}
    \inf_{\widehat{m}} \sup_{\substack{\mu_{(\boldsymbol{f}^Q, \boldsymbol{u}^Q, \boldsymbol{x}^Q)} = \mu_{(\boldsymbol{f}^P, \boldsymbol{u}^P, \boldsymbol{x}^P)} \in \pazocal{P}(p, r, \rho) \\ \text{Assumption \ref{ass:transferability} holds with $|\pazocal{J}| = 1$}}} \pazocal{E}^Q (\widehat{m}) \gtrsim (n^Q)^{-\frac{2 \gamma}{2 \gamma + 1}} + \frac{\log p}{n^Q}.
\end{equation}

\paragraph{Case 2: $h(\boldsymbol{f}, \boldsymbol{u}_{\pazocal{J}}, s) \equiv s$.} In this case, it holds that $g^Q \equiv g^P$, and $(\boldsymbol{x}^Q, \boldsymbol{y}^Q) \sim (\boldsymbol{x}^P, \boldsymbol{y}^P)$ when $\mu_{(\boldsymbol{f}^Q, \boldsymbol{u}^Q, \boldsymbol{x}^Q)} = \mu_{(\boldsymbol{f}^P, \boldsymbol{u}^P, \boldsymbol{x}^P)} \in \pazocal{P}(p, r, \rho)$. 
Note that
\begin{align*}
    \inf_{\widehat{m}} \sup_{\substack{\mu_{(\boldsymbol{f}^Q, \boldsymbol{u}^Q, \boldsymbol{x}^Q)} = \mu_{(\boldsymbol{f}^P, \boldsymbol{u}^P, \boldsymbol{x}^P)} \in \pazocal{P}(p, r, \rho) \\ \text{Assumption \ref{ass:transferability} holds with $|\pazocal{J}| = 1$}}} \pazocal{E}^Q (\widehat{m}) &\geq \inf_{\widehat{m}} \sup_{\substack{\mu_{(\boldsymbol{f}^Q, \boldsymbol{u}^Q, \boldsymbol{x}^Q)} = \mu_{(\boldsymbol{f}^P, \boldsymbol{u}^P, \boldsymbol{x}^P)} \in \pazocal{P}(p, r, \rho) \\ g^Q = g^P, \ g^P(\boldsymbol{x}_{\pazocal{J}^P}) \in \pazocal{H} (|\pazocal{J}^P|, l^P, \pazocal{P}^P, c_0)}} \pazocal{E}^Q (\widehat{m}) \\ 
    &\geq \inf_{\breve{m}} \sup_{\substack{\mu_{(\boldsymbol{f}^Q, \boldsymbol{u}^Q, \boldsymbol{x}^Q)} = \mu_{(\boldsymbol{f}^P, \boldsymbol{u}^P, \boldsymbol{x}^P)} \in \pazocal{P}(p, r, \rho) \\ g^Q = g^P, \ g^P(\boldsymbol{x}_{\pazocal{J}^P}) \in \pazocal{H} (|\pazocal{J}^P|, l^P, \pazocal{P}^P, c_0)}} \pazocal{E}^Q (\breve{m}(\boldsymbol{f}^Q, \boldsymbol{u}^Q)) \\ 
    &\geq \inf_{\breve{m}} \sup_{\substack{\mu_{(\boldsymbol{f}^Q, \boldsymbol{u}^Q, \boldsymbol{x}^Q)} = \mu_{(\boldsymbol{f}^P, \boldsymbol{u}^P, \boldsymbol{x}^P)} \in \pazocal{P}(p, r, \rho) \\ g^Q = g^P,\ g^P(\boldsymbol{x}_{\pazocal{J}^P}) \in \pazocal{H} (|\pazocal{J}^P|, l^P, \pazocal{P}^P, c_0) \\ \boldsymbol{B}^Q = 0, \boldsymbol{f}^Q \equiv 0}} \pazocal{E}^Q (\breve{m}(\boldsymbol{f}^Q, \boldsymbol{u}^Q)) \\ 
    &\geq \inf_{\breve{m}} \sup_{\substack{\boldsymbol{x}^Q, \boldsymbol{x}^P \sim \mathrm{Uni}[-1, 1]^p \\ g^Q = g^P,\ g^P(\boldsymbol{x}_{\pazocal{J}^P}) \in \pazocal{H} (|\pazocal{J}^P|, l^P, \pazocal{P}^P, c_0) \\ }} \pazocal{E}^Q (\breve{m}(\boldsymbol{x}^Q))
\end{align*}
where the second inequality holds due to the same reasoning applied in Case 1. 

From the theorem setting, we can find $(\beta^P, d^P) \in \pazocal{P}^P$ such that $\gamma^P = \frac{\beta^P}{d^P}$ and $d^P \leq r + 1$. Applying Theorem 3.2 of \cite{gyorfi2002distribution} again for the family $\pazocal{F}_{d^P, \beta^P, c_0}$, we see that
\begin{equation}
    \label{eq:lower4}
    \begin{split}
        \inf_{\widehat{m}} \sup_{\substack{\mu_{(\boldsymbol{f}^Q, \boldsymbol{u}^Q, \boldsymbol{x}^Q)} = \mu_{(\boldsymbol{f}^P, \boldsymbol{u}^P, \boldsymbol{x}^P)} \in \pazocal{P}(p ,r, \rho) \\ \text{Assumption \ref{ass:transferability} holds with $|\pazocal{J}| = 1$}}} \pazocal{E}^Q (\widehat{m}) &\geq \inf_{\breve{m}} \sup_{\substack{\boldsymbol{x}^Q, \boldsymbol{x}^P \sim \mathrm{Uni}[-1, 1]^p \\ g^Q = g^P, \ g^P(\boldsymbol{x}_{\pazocal{J}^P}) \in \pazocal{H} (|\pazocal{J}^P|, l^P, \pazocal{P}^P, c_0) \\ }} \pazocal{E}^Q(\breve{m}(\boldsymbol{x}^Q))\\
        &\geq \inf_{\breve{m}} \sup_{\substack{\boldsymbol{x}^Q, \boldsymbol{x}^P \sim \mathrm{Uni}[-1, 1]^p \\ g^Q = g^P, \ g^P(\boldsymbol{x}_{\pazocal{J}^P}) \in \pazocal{H} (|\pazocal{J}^P|, l^P, \pazocal{P}^P, c_0) \\ \pazocal{J}^P = [d^P]}} \pazocal{E}^Q(\breve{m}(\boldsymbol{x}^Q))\\
        &\geq \inf_{\breve{m}} \sup_{\substack{\boldsymbol{x}^Q, \boldsymbol{x}^P \sim \mathrm{Uni}[-1, 1]^p \\ g^Q(\boldsymbol{x}) = g^P(\boldsymbol{x}) \in \pazocal{F}_{d^P, \beta^P, c_0}}} \pazocal{E}^Q(\breve{m}(\boldsymbol{x}^Q))\\
        &\gtrsim (n^P + n^Q)^{-\frac{2 \gamma^P}{2 \gamma^P + 1}}.
    \end{split}
\end{equation}
Therefore, we obtain \eqref{eq:lowerBoundGoal} by combining \eqref{eq:lower3} and \eqref{eq:lower4}, thereby completing the proof.
\end{proof}
% !TEX root = ../main.tex
\section{Proofs of auxiliary results}
\label{app:aux}

\subsection{Proof of Proposition \ref{prop:transferabilityBenefit}}
\begin{proof}[Proof of Proposition \ref{prop:transferabilityBenefit}]

Given the definition of $h$, there exists $\Big\{h^{(1)}_0, \dots, h^{(t)}_0 \Big\} \in \pazocal{H}(r+|\pazocal{J}| + 1, 1, \pazocal{P}, c_0)$ and $h_1 \in \pazocal{H}(t, l - 1, \pazocal{P}, c_0)$ for some positive integer $t$ such that
\begin{align*}
    h(\boldsymbol{f}, \boldsymbol{u}_{\pazocal{J}}, \boldsymbol{z}) = h_1 \Big(h^{(1)}_0(\boldsymbol{f}, \boldsymbol{u}_{\pazocal{J}}, \boldsymbol{z}),\dots,h^{(t)}_0(\boldsymbol{f}, \boldsymbol{u}_{\pazocal{J}}, \boldsymbol{z})\Big)
\end{align*}
Recall that we treat $(\boldsymbol{f}, \boldsymbol{u}_{\pazocal{J} \cup \pazocal{J}^P})$ as the function input. Thus, each coordinate of $(\boldsymbol{f}, \boldsymbol{u}_{\pazocal{J}^P})$ (as a subset of $(\boldsymbol{f}, \boldsymbol{u}_{\pazocal{J} \cup \pazocal{J}^P})$) can be viewed as an element in $\pazocal{H}(r + |\pazocal{J} \cup \pazocal{J}^P|, l^P, \pazocal{P} \cup \pazocal{P}^P, c_0)$. 

Note that 
\begin{align*}
    [\boldsymbol{B}^P_{\pazocal{J}^P, :}]\boldsymbol{f} + u_{\pazocal{J}^P}\in \pazocal{F}_{r + |\pazocal{J}^P|, \infty, \norm{\boldsymbol{B}^P_{\pazocal{J}^P,:}}_2}
\end{align*}
by the definition of the H\"older's smoothness. If we composite a function of $g^P(\cdot)$ outside, we will have
\begin{align*}
    g^P([\boldsymbol{B}^P_{\pazocal{J}^P, :}] \boldsymbol{f} + u_{\pazocal{J}^P}) \in & \pazocal{H} (r + |\pazocal{J}^P|, l^P + 1, \pazocal{P}^P \cup \{\infty, r + |\pazocal{J}^P|\}, c_0 \lor \norm{\boldsymbol{B}^P_{\pazocal{J}^P, :}}_2)\\
    \subset & \ca{H}(r + |\pazocal{J} \cup \pazocal{J}^P|, l^P + 1, \pazocal{P} \cup \pazocal{P}^P \cup \{\infty, r + |\pazocal{J}^P|\}, c_0 \lor \norm{B^P_{\pazocal{J}^P,:}}_2).  
\end{align*}
Thus, each coordinate of $\boldsymbol{f}, \boldsymbol{u}_{\pazocal{J}}, g^P([\boldsymbol{B}^Q_{\pazocal{J}^P, :}]\boldsymbol{f} + \boldsymbol{u}_{\pazocal{J}^P})$ is an element in $\pazocal{H}(r + |\pazocal{J} \cup \pazocal{J}^P|, l^P + 1, \pazocal{P} \cup \pazocal{P}^P \cup \{\infty, r + |\pazocal{J}J^P|\}, c_0 \lor \norm{\boldsymbol{B}^P_{\pazocal{J}^P,:}}_2).$ We introduce a parametrization of $\boldsymbol{f}$ and $\boldsymbol{u}_{\pazocal{J} \cup \pazocal{J}^P}$, 
\begin{align*}
    m^{(j)} \Big(\boldsymbol{f}, \boldsymbol{u}_{\pazocal{J} \cup \pazocal{J}^P} \Big) := & h^{(j)} \Big(\boldsymbol{f}, \boldsymbol{u}_{\pazocal{J}}, g^P([B^Q_{\pazocal{J}^P, :}] \boldsymbol{f} + \boldsymbol{u}_{\pazocal{J}^P}) \Big)\\
    \in & \pazocal{H}(r + |\pazocal{J} \cup \pazocal{J}^P|, l^P + 2, \pazocal{P} \cup \pazocal{P}^P \cup \{\infty, r + |\pazocal{J}^P|\}, c_0 \lor \norm{\boldsymbol{B}^P_{\pazocal{J}^P, :}}_2).  
\end{align*}
with $j = 1, \cdots, t$. By Definition \ref{def:hcm}, we can rewrite $g^Q$ as
\begin{align*}
    g^Q\Big(\boldsymbol{f}, \boldsymbol{u}_{\pazocal{J}}, g^P([\boldsymbol{B}^Q_{\pazocal{J}^P, :}] \boldsymbol{f} + \boldsymbol{u}_{\pazocal{J}^P})\Big) = h_1 \Big(m^{(1)}(\boldsymbol{f}, \boldsymbol{u}_{\pazocal{J} \cup \pazocal{J}^P}), \dots, m^{(t)}(\boldsymbol{f}, \boldsymbol{u}_{\pazocal{J} \cup \pazocal{J}^P})\Big),
\end{align*}
Since $h_1 \in \pazocal{H}(t, l - 1, \pazocal{P}, c_0)$, and we have
\begin{align*}
    g^Q\Big(\boldsymbol{f}, \boldsymbol{u}_{\pazocal{J}}, & g^P([\boldsymbol{B}^Q_{\pazocal{J}^P, :}] \boldsymbol{f} + \boldsymbol{u}_{\pazocal{J}^P})\Big) \in\\ & \pazocal{H} (r + |\pazocal{J} \cup \pazocal{J}^P|,l + l^P + 1, \pazocal{P} \cup \pazocal{P}^P \cup \{\infty, r + |\pazocal{J}^P|\}, c_0 \lor \norm{\boldsymbol{B}^P_{\pazocal{J}^P,:}}_2)
\end{align*}
The equality $\gamma(\pazocal{P} \cup \pazocal{P}^P \cup \{\infty, r + |\pazocal{J}^P|\}) = \min\{\gamma, \gamma^P\}$ is trival by the definition of $\gamma(\cdot)$.
\end{proof}

\subsection{Proof of Lemma \ref{lemma:factorTransferLemma}}
\label{app:aux-lemma-1}
\begin{proof}[Proof of Lemma \ref{lemma:factorTransferLemma}]

Define $\varepsilon^P(t) = r \sqrt{\frac{\log p + t}{n^P}} + r^2 \sqrt{\frac{\log r + t}{n^P}} + \frac{1}{\sqrt{p}}$. Firstly, from Lemma 5 of \cite{fan2024factor}, we already obtain that there exists a universal constant $C_1$ such that, by the simple union bound, with probability at least $1-6e^{-t}$, both of the following inequalities hold with $\boldsymbol{S} =  \boldsymbol{B}^Q (\boldsymbol{B}^Q)^\top$
\begin{align}
    \label{eq:fanLemma5eq1}
    \norm{\widehat{\boldsymbol{\Sigma}}^Q - \boldsymbol{S}}_F & \leq C_1 \cdot p \cdot \varepsilon^Q(t),\\
    \label{eq:fanLemma5eq2}
    \norm{\widehat{\boldsymbol{\Sigma}}^P - \boldsymbol{B}^P (\boldsymbol{B}^P)^\top }_F & \leq C_1 \cdot p \cdot \varepsilon^P(t).
\end{align}
Denote the event that both \eqref{eq:fanLemma5eq1} and \eqref{eq:fanLemma5eq2} hold by $E_0$. We have that the probability that $E_0$ holds is at least $1-6e^{-t}$. Write $\boldsymbol{S}^A = \frac{n^P}{n^P + n^Q} \boldsymbol{B}^P (\boldsymbol{B}^P)^\top + \frac{n^Q}{n^P + n^Q}\boldsymbol{S}$. By Assumption \ref{ass:factorLoadingDiff}, it is easy to see that
\begin{equation}
    \label{eq:factorTransferLemmaProof1}
    \norm{\boldsymbol{S}^A - \boldsymbol{S}}_F \leq p \cdot \varepsilon.
\end{equation}
Then, it follows from \eqref{eq:fanLemma5eq1} and \eqref{eq:fanLemma5eq2} that under the event $E_0$, we have
\begin{equation}
    \label{eq:factorTransferLemmaProof2}
    \begin{aligned}
    \norm{\widehat{\boldsymbol{\Sigma}}^A - \boldsymbol{S}^A}_F &= \norm{\frac{n^P}{n^P + n^Q}(\widehat{\boldsymbol{\Sigma}}^P - \boldsymbol{B}^P (\boldsymbol{B}^P)^\top) + \frac{n^Q}{n^P + n^Q}(\widehat{\boldsymbol{\Sigma}}^Q - \boldsymbol{S})}_F\\
    &\leq \frac{n^P}{n^P + n^Q}\norm{(\widehat{\boldsymbol{\Sigma}}^P -\boldsymbol{B}^P (\boldsymbol{B}^P)^\top)}_F + \frac{n^Q}{n^P + n^Q} \norm{(\widehat{\boldsymbol{\Sigma}}^Q - \boldsymbol{S})}_F\\
    &\leq C_1 \cdot p \cdot (\frac{n^P}{n^P + n^Q} \varepsilon^P(t) + \frac{n^Q}{n^P + n^Q} \varepsilon^Q(t))\\
    &\leq C_1 \cdot p \cdot \Big( r \sqrt{\frac{\log p + t}{n^P + n^Q}} + r^2\sqrt{\frac{\log r + t}{n^P + n^Q}} + \frac{1}{\sqrt{p}} \Big).
    \end{aligned}  
\end{equation}
From \eqref{eq:factorTransferLemmaProof1} and \eqref{eq:factorTransferLemmaProof2}, we obtain that under the event $E_0$ we have
\begin{equation}
    \label{eq:factorTransferLemmaProof3}
    \norm{\widehat{\boldsymbol{\Sigma}}^A - \boldsymbol{S}}_F \ \leq \norm{\widehat{\boldsymbol{\Sigma}}^P - \boldsymbol{S}^A}_F \ + \ \norm{\boldsymbol{S}^A - \boldsymbol{S}}_F \ \leq (C_1 \lor 1 ) \cdot p \cdot \varepsilon^A(t)   .
\end{equation}
Fix $c_4 = 2C_1$, then $\delta \geq 2C_1 \varepsilon^Q(t)$ gives $C_1 \varepsilon^Q(t)\leq \delta/2$. We continue the proof by considering the following two cases separately.

\noindent\textbf{Case I:} 

Suppose $\varepsilon^A(t)> \frac{1}{2(C_1\lor 1)}\delta$.
By definition of $\widehat{\Sigma}^{\TL}$ we have
\begin{align*}
    \norm{\widehat{\boldsymbol{\Sigma}}^{\TL} - \widehat{\boldsymbol{\Sigma}}^Q}_F \ \leq \frac{n^P}{n^P + n^Q} \norm{\widehat{\boldsymbol{\Sigma}}^Q - \widehat{\boldsymbol{\Sigma}}^P}_F \mathbbm{1} \{p^{-1} \norm{\widehat{\boldsymbol{\Sigma}}^Q - \widehat{\boldsymbol{\Sigma}}^P}_F \leq \delta\} \leq p \delta,
\end{align*}
so from \eqref{eq:fanLemma5eq1} we have that with probability at least $1-6e^{-t}$
\begin{equation}
    \label{eq:factorTransferLemmaProof4}
    \norm{\widehat{\boldsymbol{\Sigma}}^{\TL} - \boldsymbol{S}}_F \leq \norm{\widehat{\boldsymbol{\Sigma}}^{\TL} - \widehat{\boldsymbol{\Sigma}}^Q}_F + \norm{\widehat{\boldsymbol{\Sigma}}^Q - \boldsymbol{S}}_F \ \leq p \delta + C_1 \cdot p \cdot \varepsilon^Q(t) \leq \frac{3}{2} p\delta \lesssim p \Big(\delta\land \varepsilon^A(t)\Big).
\end{equation}

\noindent\textbf{Case II:} 

Suppose $\varepsilon^A(t)\leq \frac{1}{2(C_1\lor 1)}\delta$. From \eqref{eq:factorTransferLemmaProof3}, we obtain that under the event $E_0$ we have
\begin{equation}
    \label{eq:factorTransferLemmaProof5}
    \begin{split}
        \norm{\widehat{\boldsymbol{\Sigma}}^A - \widehat{\boldsymbol{\Sigma}}^Q}_F & \leq \norm{\widehat{\boldsymbol{\Sigma}}^A - \boldsymbol{S}}_F + \norm{\boldsymbol{S} - \widehat{\boldsymbol{\Sigma}}^Q}_F\\
        & \leq (C_1 \lor 1) \cdot p \cdot \varepsilon^A(t) + C_1 \cdot p \cdot \varepsilon^Q(t)\\
        & \leq \delta/2 + \delta/2 = \delta,
    \end{split}
\end{equation}
which implies $\widehat{\boldsymbol{\Sigma}}^{\TL} = \widehat{\boldsymbol{\Sigma}}^{A}$. Therefore, it follows from \eqref{eq:factorTransferLemmaProof3} that, under the event $E_0$ we have
\begin{equation}
    \label{eq:factorTransferLemmaProof6}
    \norm{\widehat{\boldsymbol{\Sigma}}^{\TL} - \boldsymbol{S}}_F = \norm{\widehat{\boldsymbol{\Sigma}}^{A} - \boldsymbol{S}}_F \leq (C_1 \lor 1) \cdot p \cdot \varepsilon^A(t) \lesssim p \Big(\delta \land \varepsilon^A(t) \Big).
\end{equation}
Combining two inequalities \eqref{eq:factorTransferLemmaProof4} and \eqref{eq:factorTransferLemmaProof6} in the two different cases leads to the result.
\end{proof}

\subsection{Proof of Lemma \ref{lemma:m-lip-weight}}
The proof follows the same approach as Lemma 8 of \cite{fan2024factor}, with modifications only to the first layer \(\pazocal{L}_0\), where this paper additionally accounts for the \(s(x)\) entry. For the convenience of the reader, we provide the complete proof below.

\begin{proof}
For $m \in \pazocal{F}_{s, \kappa}$, we first recursively define
\begin{align*}
    m^{(l)}_+(\boldsymbol{x}) = \begin{cases}
        \phi \circ \pazocal{L}_0(\boldsymbol{x}, s(\boldsymbol{x})) & \qquad l = 1 \\
        \pazocal{L}_1 \circ m_+^{(l-1)}(\boldsymbol{x}) & \qquad l = 2 \\
        \pazocal{L}_{l-1} \circ \bar{\sigma} \circ m_+^{(l-1)}(\boldsymbol{x}) & \qquad l \in \{3, \cdots, L + 1\} 
    \end{cases},
\end{align*} 
and
\begin{align*}
    m^{(l)}_{-}(\boldsymbol{z}) = \begin{cases}
        \pazocal{L}_{L+1} \circ \bar{\sigma}(\boldsymbol{z}) & \qquad l = L + 1\\
        m_-^{(l + 1)} \circ \pazocal{L}_{l} \circ \bar{\sigma}(\boldsymbol{z}) & \qquad l \in \{2, \cdots, L\} \\
        m_-^{(l + 1)} \circ \pazocal{L}_{l}(\boldsymbol{z}) & \qquad l =1 
    \end{cases}.
\end{align*}
Similarly, we can define the corresponding functions $\breve{m}^{(l)}_+(\boldsymbol{x}): \mathbb{R}^p\to \mathbb{R}^{d_{l - 1}}$ and $\breve{m}^{(l)}_-(\boldsymbol{z}): \mathbb{R}^{d_{l-1}} \to \mathbb{R}$ for $\breve{m}$ that 
\begin{align*}
    \breve{m}(x) = \breve{\pazocal{L}}_{L + 1} \circ \bar{\sigma} \circ \breve{\pazocal{L}}_{L} \circ \bar{\sigma} \circ \cdots \circ \breve{\pazocal{L}}_2 \circ \bar{\sigma} \circ \breve{\pazocal{L}}_1 \circ \phi \circ \breve{\pazocal{L}}_0(\boldsymbol{x}, s(\boldsymbol{x})).
\end{align*}
We make the following two claims:
\begin{align}
    \label{eq:m-lip-weight-claim1}
    \norm{m_-^{(l)}(\boldsymbol{u}) - m_-^{(l)}(\boldsymbol{v})}_\infty \leq \begin{cases}
    (T N)^{L + 2 - l} \norm{\boldsymbol{u} - \boldsymbol{v}}_\infty & \qquad l \in \{2, \cdots, L + 1\} \\
    T^{L + 1} N^{L} (N + \overline{r} + 1) \norm{\boldsymbol{u} - \boldsymbol{v}}_\infty & \qquad l = 1
    \end{cases},
\end{align} 
and
\begin{align}
    \label{eq:m-lip-weight-claim2}
    \norm{m_+^{(l)}(\boldsymbol{x})}_\infty \leq (M \lor K\norm{\boldsymbol{W}}_\max) (T(N + 1))^{l - 1} \qquad \forall l \in \{1, \cdots, L + 1\}.
\end{align}

\begin{proof}[Proof of Claim \eqref{eq:m-lip-weight-claim1}]

For the first claim \eqref{eq:m-lip-weight-claim1}, note that for any $l \in \{1, \cdots, L\}$,
\begin{align}
    \label{eq:m-lip-weight-claim1:proof-induction-ineq}
    \norm{\pazocal{L}_{l}(\boldsymbol{u}) - \pazocal{L}_{l}(\boldsymbol{v})}_\infty \le \norm{\boldsymbol{W}_l^\top (\boldsymbol{u} - \boldsymbol{v})}_\infty \leq d_{l - 1} T \norm{\boldsymbol{u} - \boldsymbol{v}}_\infty,
\end{align} 
provided that $\norm{\boldsymbol{W}_l}_{\max} \leq T$. We prove the claim \eqref{eq:m-lip-weight-claim1} by induction. For the base case $l = L + 1$, the result follows directly from \eqref{eq:m-lip-weight-claim1:proof-induction-ineq} as
\begin{align*}
    \norm{m^{(l)}_{-}(\boldsymbol{u}) - m^{(l)}_{-}(\boldsymbol{v})}_\infty = \Norm{\pazocal{L}_{L + 1} \left(\bar{\sigma}(\boldsymbol{u}) - \bar{\sigma}(\boldsymbol{v})\right)}_\infty \le d_L T \norm{\boldsymbol{u} - \boldsymbol{v}} = T N \norm{\boldsymbol{u} - \boldsymbol{v}}.
\end{align*}
If the claim \eqref{eq:m-lip-weight-claim1} holds for $l + 1$ with $l \in \{2, 3,\cdots, L\}$, then we further have
\begin{align*}
    \norm{m^{(l)}_{-}(\boldsymbol{u}) - m^{(l)}_{-}(\boldsymbol{v})}_\infty &= \norm{m^{(l + 1)}_{-}(\boldsymbol{u}) \circ \pazocal{L}_{l} \circ \bar{\sigma}(\boldsymbol{u}) - m^{(l + 1)}_{-}(\boldsymbol{u}) \circ \pazocal{L}_{l} \circ \bar{\sigma}(\boldsymbol{v})}_\infty \\
    &{\le} (TN)^{L + 1 - l} \norm{\pazocal{L}_{l} \circ \bar{\sigma}(\boldsymbol{u}) - \pazocal{L}_{l} \circ \bar{\sigma}(\boldsymbol{v})}_\infty \\
    &{\le} (TN)^{L + 1 - l} \norm{\boldsymbol{W}^\top_{l - 1} (\bar{\sigma}(\boldsymbol{u}) - \bar{\sigma}(\boldsymbol{v}))}_\infty \\
    &{\le} (TN)^{L+1-l} (TN) \norm{\bar{\sigma}(\boldsymbol{u}) - \bar{\sigma}(\boldsymbol{v})}_\infty \\
    &{\le} (TN)^{L+2-l} \norm{\boldsymbol{u} - \boldsymbol{v}}_\infty.
\end{align*} Here, the first inequality follows from the induction hypothesis, the second inequality follows from \eqref{eq:m-lip-weight-claim1:proof-induction-ineq}, and the third inequality follows from the fact that \( |\sigma(x)| \leq |x| \) for the ReLU activation function. The case for \(l = 1\) proceeds by
\begin{align*}
    \norm{m^{(1)}_{-}(\boldsymbol{u}) - m^{(1)}_{-}(\boldsymbol{v})}_\infty &= \norm{m^{(2)}_{-}(\boldsymbol{u}) \circ \pazocal{L}_{1} \circ \bar{\sigma}(\boldsymbol{u}) - m^{(2)}_{-}(\boldsymbol{u}) \circ \pazocal{L}_{1} \circ \bar{\sigma}(\boldsymbol{v})}_\infty \\
    &{\le} (TN)^{L} \norm{\pazocal{L}_{1} \circ \bar{\sigma}(\boldsymbol{u}) - \pazocal{L}_{1} \circ \bar{\sigma}(\boldsymbol{v})}_\infty \\
    &{\le} (TN)^{L} \norm{\boldsymbol{W}^\top_{1} (\bar{\sigma}(\boldsymbol{u}) - \bar{\sigma}(\boldsymbol{v}))}_\infty \\
    &{\le} (TN)^{L} T(N + \overline{r} + 1) \norm{\bar{\sigma}(\boldsymbol{u}) - \bar{\sigma}(\boldsymbol{v})}_\infty \\
    &{\le} (TN)^{L} T(N + \overline{r} + 1) \norm{\boldsymbol{u} - \boldsymbol{v}}_\infty
\end{align*}
thereby concluding the proof.
\end{proof}

\begin{proof}[Proof of Claim \eqref{eq:m-lip-weight-claim2}]
Our proof will use the fact that
\begin{align}
\label{eq:m-lip-weight-claim2:proof-induction-ineq}
    \norm{\pazocal{L}(\boldsymbol{x})}_\infty = \norm{\boldsymbol{W}_l \boldsymbol{x} + \boldsymbol{b}_l}_\infty \le B + d_{l-1}B \norm{\boldsymbol{x}}_\infty
\end{align} 
provided $l\in \{1,\cdots, L\}$ repeatedly. We also prove \eqref{eq:m-lip-weight-claim2} by induction. For the base case \(l = 1\), it follows from the definition of \(\pazocal{L}_0\) and \(\phi\) that
\begin{align*}
    \norm{m^{(1)}_+ (\boldsymbol{x})}_\infty \le \norm{\pazocal{L}_0 (\boldsymbol{x}, s(\boldsymbol{x}))}_\infty \leq K \norm{\boldsymbol{W}}_{\max} \lor M.
\end{align*} 
where $\boldsymbol{x} \in [-K, K]^p$.
If \eqref{eq:m-lip-weight-claim2} holds for \(l - 1\) with \(l \in \{2, \dots, L + 1\}\), then combining \eqref{eq:m-lip-weight-claim2:proof-induction-ineq} with the fact \( |\sigma(x)| \leq |x| \) gives
\begin{align*}
    \norm{m_+^{(l)}(\boldsymbol{x})}_\infty \leq T + d_{l - 1} T \norm{m^{(l - 1)}_+ (\boldsymbol{x})}_\infty &\leq T + (T N) (T(N+1))^{l - 2} (K \norm{\boldsymbol{W}}_{\max} \lor M) \\
    &\leq (K \norm{\boldsymbol{W}}_{\max} \lor M) (T(N + 1))^{l - 1}.
\end{align*} 
which completes the proof of claim \eqref{eq:m-lip-weight-claim2}.
\end{proof}

Now, we return to the proof of Lemma \ref{lemma:m-lip-weight}. The above construction of \(m^{(l)}_+\) and \(m^{(l)}_-\) implies that
\begin{align*}
    m(\boldsymbol{x}) = m^{(l)}_- \circ m^{(l)}_+(\boldsymbol{x}) =  m^{(l)}_- \circ \pazocal{L}_{l-1} \circ \bar{\sigma} \circ m^{(l-1)}_+(\boldsymbol{x})
\end{align*}
Consequently, it follows from a series of triangle inequalities that
\begin{align}
\label{eq:proof:lemma:m-lip-weight:telescope}
\begin{split}
    |m(\boldsymbol{x}) - \breve{m}(\boldsymbol{x})| = &\left|\pazocal{L}_{L+1} \circ \bar{\sigma}  \circ m_+^{(L+1)}(\boldsymbol{x}) - \breve{\pazocal{L}}_{L+1} \circ \bar{\sigma} \circ m_+^{(L+1)}(\boldsymbol{x})\right| \\
    +& \sum_{l=1}^{L} \left|\breve{m}_-^{(l+1)} \circ \pazocal{L}_l \circ \bar{\sigma} \circ m^{(l)}_+(\boldsymbol{x}) - \breve{m}_-^{(l+1)} \circ \breve{\pazocal{L}}_l \circ \bar{\sigma} \circ m^{(l)}_+(\boldsymbol{x})  \right|\\
    +& \left|\breve{m}_-^{(1)} \circ \phi \circ \pazocal{L}_0(\boldsymbol{x}, s(\boldsymbol{x})) - \breve{m}_-^{(1)} \circ \phi \circ \breve{\pazocal{L}}_0(\boldsymbol{x}, s(\boldsymbol{x}))\right|.
\end{split}
\end{align}
Using Claim \eqref{eq:m-lip-weight-claim1} with $l = 1$ gives
\begin{align*}
    \norm{\breve{m}_-^{(1)} \circ \phi \circ \pazocal{L}_0(\boldsymbol{x}, s(\boldsymbol{x})) - \breve{m}_-^{(1)} \circ \phi \circ \breve{\pazocal{L}}_0(\boldsymbol{x}, s(\boldsymbol{x}))} &\le T^{L+1} N^{L} (N+\overline{r}+1) \norm{\phi \circ \pazocal{L}_0(\boldsymbol{x}, s(\boldsymbol{x})) - \phi \circ \breve{\pazocal{L}}_0(\boldsymbol{x}, s(\boldsymbol{x}))}_\infty \\
    &\le T^{L+1} N^{L} (N+\overline{r}+1) \norm{\pazocal{L}_0(\boldsymbol{x}, s(\boldsymbol{x})) - \breve{\pazocal{L}}_0(\boldsymbol{x}, s(\boldsymbol{x}))}_\infty.
\end{align*}
For any $i \in \{1, \cdots, N\}$ and $\boldsymbol{x} \in [-K, K]^p$, we have
\begin{align}
\label{eq:proof:lemma:m-lip-weight:derivation1}
\begin{split}
    \norm{[\pazocal{L}_0(\boldsymbol{x}, s(\boldsymbol{x})]_{i + \overline{r}} - [\breve{\pazocal{L}}_0(\boldsymbol{x}, s(\boldsymbol{x}))]_{i + \overline{r}}} &= \norm{[\boldsymbol{\Theta}]_{:, i}^\top \boldsymbol{x} - [\breve{\boldsymbol{\Theta}}]_{:, i}^\top \boldsymbol{x}} \leq \norm{\boldsymbol{x}}_\infty \norm{[\boldsymbol{\Theta}]_{:, i} - [\breve{\boldsymbol{\Theta}}]_{:, i}}_1 \leq K p \norm{\boldsymbol{\Theta} - \breve{\boldsymbol{\Theta}}}_{\max},
\end{split}
\end{align}
$$$$
which further indicates
\begin{align}
\label{eq:proof:lemma:m-lip-weight:decomp1}
    \norm{\breve{m}_-^{(1)} \circ \phi \circ \pazocal{L}_0(\boldsymbol{x}, s(\boldsymbol{x})) - \breve{m}_-^{(1)} \circ \phi \circ \breve{\pazocal{L}}_0(\boldsymbol{x}, s(\boldsymbol{x}))} \le K T^{L+1} N^{L} (N + \overline{r} + 1) p\norm{\boldsymbol{\Theta} - \breve{\boldsymbol{\Theta}}}_{\max}.
\end{align} 
For any $l \in \{1, \cdots, L\}$, it follows from a similar argument that
\begin{align*}
    \norm{\pazocal{L}_l \circ \bar{\sigma}(\boldsymbol{z}) - \breve{\pazocal{L}}_l \circ \bar{\sigma}(\boldsymbol{z})}_\infty &= \norm{\boldsymbol{W}_l \bar{\sigma}(\boldsymbol{z}) - \breve{\boldsymbol{W}}_l \bar{\sigma}(\boldsymbol{z}) + \boldsymbol{b}_l - \breve{\boldsymbol{b}}_l}_\infty \\
    &\leq \norm{\boldsymbol{W}_l \bar{\sigma}(\boldsymbol{z}) - \breve{\boldsymbol{W}}_l \bar{\sigma}(\boldsymbol{z})}_\infty + \norm{\boldsymbol{b}_l - \breve{\boldsymbol{b}}_l}_\infty \\
    &= \max_{i\in \{1,\cdots,d_l\}} \left|[\boldsymbol{W}_l]_{i,:} \bar{\sigma}(\boldsymbol{z}) - [\breve{\boldsymbol{W}}_l]_{i,:} \bar{\sigma}(\boldsymbol{z}) \right| + \norm{\boldsymbol{b}_l - \breve{\boldsymbol{b}}_l}_\infty \\
    &\leq \max_{i\in \{1,\cdots,d_l\}} \norm{[\boldsymbol{W}_l]_{i,:} - [\breve{\boldsymbol{W}}_l]_{i,:} }_1 \norm{\bar{\sigma}(\boldsymbol{z})}_\infty + \norm{\boldsymbol{b}_l - \breve{\boldsymbol{b}}_l}_\infty \\
    &\leq \norm{\boldsymbol{W}_l - \breve{\boldsymbol{W}}_l}_{\max} d_{l-1} \norm{\boldsymbol{z}}_\infty + \norm{\boldsymbol{b}_l - \breve{\boldsymbol{b}}_l}_\infty,
\end{align*} 
which implies
\begin{align}
\label{eq:m-lip-weight-lell-sigma-bound}
    \norm{\pazocal{L}_l \circ \bar{\sigma}(\boldsymbol{z}) - \breve{\pazocal{L}}_l \circ \bar{\sigma}(\boldsymbol{z})}_\infty \le d(\theta(m), \theta(\breve{m})) (1 + d_{l-1} \norm{\boldsymbol{z}}_\infty).
\end{align}
It follows from \eqref{eq:m-lip-weight-claim2} and \eqref{eq:m-lip-weight-lell-sigma-bound} with $l = L + 1$ that
\begin{align}
\label{eq:proof:lemma:m-lip-weight:decomp2}
\begin{split}
    \left| \pazocal{L}_{L + 1} \circ \bar{\sigma}  \circ m_+^{(L + 1)}(\boldsymbol{x}) - \breve{\pazocal{L}}_{L+1} \circ \bar{\sigma} \circ m_+^{(L + 1)}(\boldsymbol{x}) \right| 
    &\leq d(\boldsymbol{\theta}(m), \boldsymbol{\theta}(\breve{m})) (1 + N \norm{m_+^{(L + 1)}(\boldsymbol{x})}_\infty) \\
    &\leq (M\lor K\norm{\boldsymbol{W}}_{\max}) T^L(N + 1)^{L + 1} d(\boldsymbol{\theta}(m), \boldsymbol{\theta}(\breve{m})),
\end{split}
\end{align} 
given that $(T(N + 1))^L (M \lor K \norm{\boldsymbol{W}}_{\max}) \geq 1$.

In addition, for any $l\in \{1, \cdots, L\}$, combing \eqref{eq:m-lip-weight-claim1}, \eqref{eq:m-lip-weight-claim2} and \eqref{eq:m-lip-weight-lell-sigma-bound} gives
\begin{align}
\label{eq:proof:lemma:m-lip-weight:decomp3}
\begin{split}
    &\Big|\breve{m}_-^{(l + 1)} \circ \pazocal{L}_l \circ \bar{\sigma} \circ m^{(l)}_+(\boldsymbol{x}) - \breve{m}_-^{(l+1)} \circ \breve{\pazocal{L}}_l \circ \bar{\sigma} \circ m^{(l)}_+ (\boldsymbol{x}) \Big|\\
    \leq& (TN)^{L + 1 - l} \norm{\pazocal{L}_l \circ \bar{\sigma} \circ m^{(l)}_+(\boldsymbol{x}) - \breve{\pazocal{L}}_l \circ \bar{\sigma} \circ m^{(l)}_+(\boldsymbol{x})}_\infty \\
    \leq& (TN)^{L + 1 - l} d(\theta(m), \theta(\breve{m})) (1 + d_{l - 1} \norm{m^{(l)}_+(\boldsymbol{x})}_\infty) \\
    \leq& (TN)^{L + 1 - l} d(\theta(m), \theta(\breve{m})) \left(1 + N(M \lor K \norm{\boldsymbol{W}}_{\max}) (T(N + 1))^{l-1}\right) \\
    \leq& (M\lor K\norm{\boldsymbol{W}}_{\max}) T^L (N + 1)^L d(\boldsymbol{\theta}(m), \boldsymbol{\theta}(\breve{m})).
\end{split}
\end{align} 
Plugging \eqref{eq:proof:lemma:m-lip-weight:decomp1}, \eqref{eq:proof:lemma:m-lip-weight:decomp2} and \eqref{eq:proof:lemma:m-lip-weight:decomp3} into \eqref{eq:proof:lemma:m-lip-weight:telescope} completes the proof.

\end{proof}

\subsection{Proofs of Lemma \ref{lemma:fast} and Corollary \ref{col:fast}}

\begin{proof}[Proof of Lemma \ref{lemma:fast}] 
Since the conditions and results do not involve any source data, we omit the superscript $Q$ on $\boldsymbol{W}^Q, \boldsymbol{B}^Q, \boldsymbol{H}^Q, y^Q, \boldsymbol{f}^Q, \boldsymbol{u}^Q, n^Q, \widetilde{f}^Q, \epsilon^Q$ and the subscript $Q$ on $\pazocal{E}_Q, \widehat{\pazocal{E}}_Q$, and $n^Q$ for notational simplicity, emphasizing that this general result is independent of any fine-tuning setting. Furthermore, we abbreviate $\boldsymbol{x}^Q$ as $\boldsymbol{x}$ for simplicity.

\paragraph{Step I: Bound the approximation error for $g^Q$.} 
The goal of this step is to show that there exists some $m(\boldsymbol{x}; \boldsymbol{W}, \widetilde{\boldsymbol{\Theta}}, \widetilde{g}, s) \in \pazocal{F}_s$ with $$\norm{\widetilde{\boldsymbol{\Theta}}}_0\leq |\pazocal{J}|\mathbf{1}\{\delta_f + \delta_a \leq \delta^0_f + \delta^0_a\}+ |\pazocal{J} \cup \pazocal{J}^P|\mathbf{1}\{\delta_f + \delta_a > \delta^0_f + \delta^0_a\}$$ such that $$\pazocal{E}(m)\lesssim (\delta_f + \delta_a)\land (\delta^0_f + \delta^0_a).$$ From Theorem 2 of \cite{fan2024factor} and the first step of its proof, we observe that there always exists some $m^0(\boldsymbol{x}; \boldsymbol{W}, \widetilde{\boldsymbol{\Theta}}, \widetilde{g}, s) \in \pazocal{F}^0_s \subset \pazocal{F}_s$ with $\norm{\widetilde{\boldsymbol{\Theta}}}_0 \leq |\pazocal{J} \cup \pazocal{J}^P|$ such that 
$$\pazocal{E}(m) \lesssim \delta^0_f + \delta^0_a.$$
This result corresponds to the approximation error bound without involving $s$ when we approximate $g^Q(\boldsymbol{f}^Q, \boldsymbol{u}^Q_{\pazocal{J} \cup \pazocal{J}^P})$. Therefore, it suffices to show that
there exists some $m(\boldsymbol{x}; \boldsymbol{W}, \widetilde{\boldsymbol{\Theta}}, \widetilde{g}, s) \in \pazocal{F}_s$ with $\norm{\widetilde{\boldsymbol{\Theta}}}_0\leq |\pazocal{J}|$ such that 
$$\pazocal{E}(m)\lesssim \delta_f + \delta_a$$
to obtain the goal of this step. We divide the derivation into two cases regarding whether $r = 0$.

\paragraph{Case 1: $r \geq 1$.} Let 
$$
\begin{aligned}
 \widetilde{m}^*(\boldsymbol{x}) &= h(\boldsymbol{H}^\dagger \widetilde{\boldsymbol{f}}, \boldsymbol{x}_{\pazocal{J}}-[\boldsymbol{B}_{\pazocal{J},:}] \boldsymbol{H}^\dagger \widetilde{\boldsymbol{f}},g^P(\boldsymbol{x}_{\pazocal{J}^P})),\\
 m^*(\boldsymbol{x}) &= h(\boldsymbol{H}^\dagger \widetilde{\boldsymbol{f}}, \boldsymbol{x}_{\pazocal{J}}-[\boldsymbol{B}_{\pazocal{J},:}] \boldsymbol{H}^\dagger \widetilde{\boldsymbol{f}},s(\boldsymbol{x})),
\end{aligned}
$$
where $\boldsymbol{H}^\dagger$ is the Moore–Penrose inverse of $\boldsymbol{H}$. For any $\boldsymbol{f},\boldsymbol{u}$, from the definition of $\boldsymbol{W}$ and $\boldsymbol{H}$,
we have
$$\widetilde{f} = \boldsymbol{H}\boldsymbol{f}+\boldsymbol{\xi},\quad \boldsymbol{\xi} = p^{-1}\boldsymbol{W}^\top \boldsymbol{u},$$
which indicates that
\begin{equation}
\label{eq:appEq1}
\begin{aligned}
|\widetilde{m}^*(\boldsymbol{x})-g^Q(\boldsymbol{f},\boldsymbol{u}_{\pazocal{J} \cup \pazocal{J}^P})|&\lesssim \norm{\boldsymbol{H}^\dagger \boldsymbol{\xi}}_2 + \norm{[\boldsymbol{B}_{\pazocal{J},:}] \boldsymbol{H}^\dagger \boldsymbol{\xi}}_2\\
&\leq (\norm{[\boldsymbol{B}_{\pazocal{J},:}]}_2 + 1)\norm{\boldsymbol{H}^\dagger}_2 \norm{\boldsymbol{\xi}}_2 \\
&\leq (\norm{[\boldsymbol{B}_{\pazocal{J},:}]}_2 + 1)\norm{\boldsymbol{H}^\dagger}_2 \norm{\boldsymbol{\xi}}_2 / \nu_\min( \boldsymbol{H})\\
&\leq (\norm{[\boldsymbol{B}_{\pazocal{J},:}]}_F + 1)\norm{\boldsymbol{H}^\dagger}_2 \norm{\boldsymbol{\xi}}_2 / \nu_\min( \boldsymbol{H})\\
&\lesssim \frac{\sqrt{|\pazocal{J}|r}}{\nu_\min(\boldsymbol{H})}\norm{\boldsymbol{\xi}}_2
\end{aligned}
\end{equation}

where the first inequality is due to the Lipschitz condition of $g^Q$ in Assumption \ref{ass:subGaussianNoiseAndBoundedFunction}, and the last inequality comes from the boundedness of $\norm{\boldsymbol{B}}_{\max}$ in Assumpiton \ref{ass:boundedness}. Following the linearity of expectation, we can bound the expectation of $\norm{\boldsymbol{\xi}}_2^2$ by
\begin{equation}
\label{eq:appEq2}
\begin{aligned}
\EE \norm{\boldsymbol{\xi}}_2^2 &= p^{-2}\EE\Big[\sum_{k = 1}^{\overline{r}} \Big(\sum_{j = 1}^p \boldsymbol{W}_{jk} \boldsymbol{u}_j \Big)^2 \Big]\\
&= p^{-2}\sum_{k = 1}^{\overline{r}}\sum_{j = 1}^p \boldsymbol{W}^2_{jk}\EE[\boldsymbol{u}_j^2] + \sum_{j \neq j'} \boldsymbol{W}_{jk}\boldsymbol{W}_{j'k} \EE[\boldsymbol{u}_j \boldsymbol{u}_{j'}]\\
&\leq \frac{\overline{r}}{p} \max_{j,k}| \boldsymbol{W}_{jk}|\max_j \EE[\boldsymbol{u}_j^2] + \frac{\overline{r}}{p^2} \max_{j,k}|\boldsymbol{W}_{jk}|^2 \sum_{j \neq j'} |\EE[\boldsymbol{u}_j\boldsymbol{u}_{j'}]|\lesssim \frac{\overline{r}}{p},
\end{aligned}
\end{equation}
where the last inequality applies Assumptions \ref{ass:boundedness} and \ref{ass:weakDependence}. From \eqref{eq:appEq1}, \eqref{eq:appEq2}, and the Lipschitz continuity of $h$, we have
\begin{equation}
\label{eq:appEq3}
\begin{aligned}
\pazocal{E}(m^*) &= \EE|m^*(\boldsymbol{x})-g^Q(\boldsymbol{f},\boldsymbol{u}_{\pazocal{J} \cup \pazocal{J}^P})|^2 \\
&\lesssim \EE|m^*(\boldsymbol{x}) - \widetilde{m}^*(\boldsymbol{x})|^2 + \EE|\widetilde{m}^*(\boldsymbol{x}) -g^Q(\boldsymbol{f},\boldsymbol{u}_{\pazocal{J} \cup \pazocal{J}^P})|^2\\
&\lesssim \norm{s(\boldsymbol{x}) - g^P(\boldsymbol{x}_{\pazocal{J}^P})}_2^2 + \frac{|\pazocal{J}|r \cdot \overline{r}}{(\nu_\min(\boldsymbol{H}))^2 p}\\
&\lesssim \delta_a + \delta_f.
\end{aligned}  
\end{equation}
It remains to find some $m \in \pazocal{F}_s$ that approximates $m^*$ well. We choose $g\in\pazocal{G}(L-1,r+|\pazocal{J}|+1,N,M,T)$ that minimzes $\sup_{\kappa\in[-M,M]}\norm{g(\boldsymbol{f},\boldsymbol{u}_{\pazocal{J}},\kappa)-h(\boldsymbol{f},\boldsymbol{u}_{\pazocal{J}},\kappa)}_\infty^2$. Then, we have
\begin{equation}
\label{eq:appEq4}
|g(\boldsymbol{f},\boldsymbol{u}_{\pazocal{J}},s(\boldsymbol{x}))-h(\boldsymbol{f},\boldsymbol{u}_{\pazocal{J}},s(\boldsymbol{x})|\lesssim \sqrt{\delta_a},\quad \forall (\boldsymbol{f},\boldsymbol{u}_{\pazocal{J}})\in [-2b, 2b]^r \times [-2b, 2b]^{|\pazocal{J}|}
\end{equation}
following the definition of $\delta_a$. Since $g$ is a ReLU network, we can write $g$ as
$$g(\boldsymbol{f}, \boldsymbol{u}_{\pazocal{J}}, s(\boldsymbol{x})) = \pazocal{L}^g_{L + 1}\circ \bar{\sigma}\circ \pazocal{L}^g_{L}\circ \bar{\sigma}\circ\cdots \circ \bar{\sigma}\circ \pazocal{L}^g_{3} \circ \bar{\sigma} \circ \pazocal{L}^g_{2}(f, \boldsymbol{u}_{\pazocal{J}}, s(\boldsymbol{x})).$$ 

Denote $\pazocal{J} = \{l_1, \dots, l_{|\pazocal{J}|} \}\subset \{1, \dots, p\}$. Then, we construct the approximation $m$ as
$$m(\boldsymbol{x}) = \pazocal{L}^g_{L + 1} \circ \bar{\sigma} \circ \pazocal{L}^g_{L} \circ \bar{\sigma} \circ \cdots \circ \bar{\sigma} \circ \pazocal{L}^g_{3} \circ \bar{\sigma} \circ \pazocal{L}^g_{2}\circ \bar{\sigma} \circ \pazocal{L}^g_{1}\circ \phi \circ \pazocal{L}^g_0(\boldsymbol{x}, s(\boldsymbol{x})),$$
where the $\pazocal{L}_0,\pazocal{L}_1$ and $\pazocal{L}_2$ are constructed as follows

\begin{enumerate}
\item For $\pazocal{L}_0: \R^{p + 1} \rightarrow \R^{\overline{r} + |\pazocal{J}| + 1}$, let
$$\pazocal{L}_0(\boldsymbol{x}, s(\boldsymbol{x})) = (p^{-1} \boldsymbol{x}^\top W, \boldsymbol{x}^\top \widetilde{\Theta}, s(\boldsymbol{x}))^\top = (\widetilde{f}^\top, \boldsymbol{x}_{\pazocal{J}}^\top, s(\boldsymbol{x}))^\top, \quad \widetilde{\Theta}_{ij} = \mathbf{1}\{i \leq |\pazocal{J}|, \ j = l_i\}.$$ 
Given the definition of $\phi$, we have $\phi \circ \pazocal{L}_0(\boldsymbol{x}, s(\boldsymbol{x}))=\pazocal{L}_0(\boldsymbol{x}, s(\boldsymbol{x}))$ given $M \geq r(b + 1) \geq \norm{\boldsymbol{x}_{\pazocal{J}}}_{\infty}$. Moreover, it is trivial that $\norm{\widetilde{\boldsymbol{\Theta}}}_0 = |\pazocal{J}|$.

\item For $\pazocal{L}_1: \R^{\overline{r} + |\pazocal{J}| + 1} \rightarrow \R^{2(\overline{r} + |\pazocal{J}| + 1)}$, let
$$
\pazocal{L}_1 \begin{bmatrix}
\widetilde{f}\\
\boldsymbol{x}_{\pazocal{J}}\\
s(\boldsymbol{x})
\end{bmatrix} =
\begin{bmatrix}
\boldsymbol{H}^\dagger & 0 & 0\\
-[\boldsymbol{B}_{\pazocal{J}, :}] \boldsymbol{H}^\dagger & \boldsymbol{I} & 0 \\
-\boldsymbol{H}^\dagger & 0 & 0\\
[\boldsymbol{B}_{\pazocal{J}, :}] \boldsymbol{H}^\dagger & -\boldsymbol{I} & 0\\
0 & 0 & 1\\
0 & 0 & -1
\end{bmatrix}
\begin{bmatrix}
\widetilde{\boldsymbol{f}}\\
\boldsymbol{x}_{\pazocal{J}}\\
s(\boldsymbol{x})
\end{bmatrix} + 0 =
\begin{bmatrix}
\boldsymbol{H}^\dagger \widetilde{\boldsymbol{f}}\\
\boldsymbol{x}_{\pazocal{J}} - [\boldsymbol{B}_{\pazocal{J}, :}] \boldsymbol{H}^\dagger \widetilde{\boldsymbol{f}}\\
-\boldsymbol{H}^\dagger \widetilde{\boldsymbol{f}}\\
-(\boldsymbol{x}_{\pazocal{J}} - [\boldsymbol{B}_{\pazocal{J}, :}] \boldsymbol{H}^\dagger \widetilde{\boldsymbol{f}})\\
s(\boldsymbol{x})\\
-s(\boldsymbol{x})
\end{bmatrix}
$$
\item Suppose that the weights $\pazocal{L}_2^g$ are $\boldsymbol{W}^g_2$ and $\boldsymbol{b}^g_2$. For $\pazocal{L}_2: \R^{2(\overline{r} + |\pazocal{J}| + 1)}\rightarrow \R^N$, given $\boldsymbol{u}\in \R^r,\ \boldsymbol{v} \in \R^{|\pazocal{J}|}$, let
$$
\pazocal{L}_2\begin{bmatrix}
    \boldsymbol{u}\\
    \boldsymbol{v}
\end{bmatrix}
= \begin{bmatrix}
    \boldsymbol{W}^g_2 & -\boldsymbol{W}^g_2
\end{bmatrix}
\begin{bmatrix}
    \boldsymbol{u}\\
    \boldsymbol{v}
\end{bmatrix} + \boldsymbol{b}^g_2.
$$
\end{enumerate}

It follows from the above construction that 
$$
\begin{aligned}
m(\boldsymbol{x}) &= g\Big(\sigma(\boldsymbol{H}^\dagger \widetilde{\boldsymbol{f}}) - \sigma(-\boldsymbol{H}^\dagger \widetilde{\boldsymbol{f}}),\sigma(\boldsymbol{x}_{\pazocal{J}} - [\boldsymbol{B}_{\pazocal{J},:}] \boldsymbol{H}^\dagger \widetilde{\boldsymbol{f}}) - \sigma(-(\boldsymbol{x}_{\pazocal{J}} - [\boldsymbol{B}_{\pazocal{J},:}] \boldsymbol{H}^\dagger \widetilde{\boldsymbol{f}})), \sigma(s(\boldsymbol{x})) - \sigma(-s(\boldsymbol{x}))\Big)\\
&= g(\boldsymbol{H}^\dagger \widetilde{\boldsymbol{f}}, \boldsymbol{x}_{\pazocal{J}} - [\boldsymbol{B}_{\pazocal{J}, :}] \boldsymbol{H}^\dagger \widetilde{\boldsymbol{f}}, s(\boldsymbol{x})).
\end{aligned}
$$
Moreover, all weights of $\pazocal{L}_1, \pazocal{L}_2, \dots, \pazocal{L}_{L + 1}$ is bounded by $T \lor (C_1 \frac{|\pazocal{J}|r}{\nu_\min(\boldsymbol{H})})$ for some constant $C_1$ as $\norm{\boldsymbol{\cdot}}_\max \leq \norm{\boldsymbol{\cdot}}_2$.

We are now in the position to upper bound $\EE|m(\boldsymbol{x}) - m^*(\boldsymbol{x})|^2$. Define the event
$$E = \Big\{\boldsymbol{H}^\dagger \widetilde{\boldsymbol{f}} \in [-2b, 2b]^r, \boldsymbol{x}_{\pazocal{J}} - [\boldsymbol{B}_{\pazocal{J}, :}] \boldsymbol{H}^\dagger \widetilde{\boldsymbol{f}}\in[-2b, 2b]^{|\pazocal{J}|} \Big\}.$$ 

We have that
\begin{equation*}
\begin{aligned}
\EE |m(\boldsymbol{x}) - m^*(\boldsymbol{x})|^2 &= \EE |m(\boldsymbol{x}) - m^*(\boldsymbol{x})|^2 \mathbf{1}_E + \EE |m(\boldsymbol{x}) - m^*(\boldsymbol{x})|^2 \mathbf{1}_{E^c}\\
&\leq \EE |g(\boldsymbol{H}^\dagger \widetilde{\boldsymbol{f}}, \boldsymbol{x}_{\pazocal{J}} - [\boldsymbol{B}_{\pazocal{J},:}] \boldsymbol{H}^\dagger \widetilde{\boldsymbol{f}}, s(\boldsymbol{x})) - h(\boldsymbol{H}^\dagger \widetilde{\boldsymbol{f}}, \boldsymbol{x}_{\pazocal{J}} - [\boldsymbol{B}_{\pazocal{J}, :}] \boldsymbol{H}^\dagger \widetilde{\boldsymbol{f}}, s(\boldsymbol{x}))|^2 \mathbf{1}_E\\
&+ \sup \Big(|m(\boldsymbol{x})| + |m^*(\boldsymbol{x})|\Big)^2\pr(E^c)\\
&\lesssim \delta_a + (M + M^*)^2 \mathbb{P}(E^c)\\
&\lesssim \delta_a + \pr(\sqrt{\norm{\boldsymbol{H}^\dagger \boldsymbol{\xi}}_2^2 + \norm{[\boldsymbol{B}_{\pazocal{J}, :}] \boldsymbol{H}^\dagger \boldsymbol{\xi}}_2^2}\geq b)
\end{aligned}    
\end{equation*}

By Markov's inequality, we can further bound as
\begin{equation}
\label{eq:appEq5}
\begin{aligned}
\EE |m(\boldsymbol{x}) - m^*(\boldsymbol{x})|^2&\lesssim \delta_a + \frac{1}{b^2} \EE[\norm{\boldsymbol{H}^\dagger \boldsymbol{\xi}}_2^2 + \norm{[\boldsymbol{B}_{\pazocal{J},:}] \boldsymbol{H}^\dagger \boldsymbol{\xi}}_2^2]\\
&\lesssim \delta_a + \delta_f.
\end{aligned}
\end{equation}
Combining \eqref{eq:appEq3} and \eqref{eq:appEq5} yields  
$$\mathbb{E} |m(\boldsymbol{x}) - g^Q(\boldsymbol{f}, \boldsymbol{u}_{\pazocal{J} \cup \pazocal{J}^P})|^2 \lesssim \delta_a + \delta_f.$$  
To complete Case 1, it remains to verify that $C_1 \frac{|\pazocal{J}|r}{\nu_{\min}(\boldsymbol{H})} \leq T$ when $2(|\pazocal{J}| + r) \leq N$ to ensure that $m$ belongs to $\pazocal{F}_s$. This condition is established through the padding argument in Section I.1 of \cite{fan2024factor}. 

\paragraph{Case 1: $r = 0$.} We have $\delta_f = 0$ and $\boldsymbol{x} = \boldsymbol{u}$ in this case. We choose $g \in \pazocal{G}(L - 1, r + |\pazocal{J}| + 1, N, M, T)$ that minimizes $\sup_{\kappa \in [-M, M]} \norm{g(\boldsymbol{x}_{\pazocal{J}}, \kappa) - h(\boldsymbol{x}_{\pazocal{J}}, \kappa)}_\infty^2$. Then, we have
$$
|g(\boldsymbol{x}_{\pazocal{J}}, s(\boldsymbol{x}))-h(\boldsymbol{x}_{\pazocal{J}}, s(\boldsymbol{x}))| \lesssim \sqrt{\delta_a}, \quad \forall \boldsymbol{x}_{\pazocal{J}} \in [-2b, 2b]^{|\pazocal{J}|}
$$
following the definition of $\delta_a$. The construction of $m$ is similar to Case 1. If $|\pazocal{J}| \leq N$, it follows from the padding argument in Section I.1 of \cite{fan2024factor} that there exists some $m \in \pazocal{F}_s$ and $\widetilde{g} \in \pazocal{G}(L, \overline{r} + N + 1, N, M, T)$ such that
$$m(\boldsymbol{x}) = \widetilde{g}(\widetilde{\boldsymbol{f}}, \widetilde{\boldsymbol{\Theta}}^\top \boldsymbol{x}, s(\boldsymbol{x})) = g(\boldsymbol{x}_{\pazocal{J}}, s(\boldsymbol{x})), \quad \widetilde{\boldsymbol{\Theta}}_{ij} = \mathbf{1}\{i \leq |\pazocal{J}|, j = l_i\}.$$ 
Therefore, we have
$$\EE |m(\boldsymbol{x}) - g^Q(\boldsymbol{f}, \boldsymbol{u}_{\pazocal{J} \cup \pazocal{J}^P})|^2 = \EE |m(\boldsymbol{x}) - h(\boldsymbol{x}_{\pazocal{J}}, s(\boldsymbol{x}))| \lesssim \delta_a,$$
which completes the proof of Case 2. We denote this $m$ as $\widetilde{m}$ for the rest of the proof.

Combining Cases 1 and 2 above, we know that there must exist some $\widetilde{m}(\boldsymbol{x}; \boldsymbol{W}, \widetilde{\Theta}, \widetilde{g}, s) \in \pazocal{F}_s$ with 
$$\norm{\widetilde{\boldsymbol{\Theta}}}_0\leq |\pazocal{J}|\mathbf{1}\{\delta_f + \delta_a \leq \delta^0_f + \delta^0_a\}+ |\pazocal{J} \cup \pazocal{J}^P|\mathbf{1}\{\delta_f + \delta_a > \delta^0_f + \delta^0_a\}$$ 
such that 
$$\pazocal{E}(\widetilde{m})\lesssim (\delta_f + \delta_a) \land (\delta^0_f + \delta^0_a).$$ 

\paragraph{Step II: Derive the basic inequality.}
Define the quantity
\begin{equation}
\label{eq:fast-rate-Pi}
\Pi := |\pazocal{J} | \mathbf{1} \{\delta_f + \delta_a\leq \delta^0_f + \delta^0_a\} + |\pazocal{J} \cup \pazocal{J}^P| \mathbf{1} \{\delta_f + \delta_a > \delta^0_f + \delta^0_a\}.
\end{equation}
It follows from \eqref{eq:FTFAST6} and \eqref{eq:FTFAST7} that
$$
\frac{1}{n}\sum_{i = 1}^n (y_i - \widehat{m}(\boldsymbol{x}_i))^2 + \lambda \sum_{i,j} \psi_\tau(\widehat{\boldsymbol{\Theta}}_{ij}) \leq \frac{1}{n} \sum_{i = 1}^n (y_i - \widetilde{m}(\boldsymbol{x}_i))^2 + \lambda \Pi.
$$
Plugging in the formula $y_i = g^Q(\boldsymbol{f}, \boldsymbol{u}_{\pazocal{J} \cup \pazocal{J}^P}) + \epsilon_i$, we further have
\begin{equation}
\label{eq:estEq1}
\norm{\widehat{m} - g^Q}_n^2 + \lambda \sum_{i, j} \psi_\tau (\widehat{\boldsymbol{\Theta}}_{ij}) \leq \norm{\widetilde{m} - g^Q}_n^2 + \frac{2}{n}\sum_{i = 1}^n \epsilon_i (\widehat{m}(\boldsymbol{x}_i) - \widetilde{m}(\boldsymbol{x}_i)) + \lambda \Pi.
\end{equation}
From the triangle inequality, we have
\begin{equation}
\label{eq:estEq2}
\norm{\widehat{m} - \widetilde{m}}_{n}^2 \leq 2\norm{\widehat{m} -g^Q}_{n}^2 + 2\norm{\widetilde{m} - g^Q}_{n}^2.
\end{equation}
Combining \eqref{eq:estEq1} and \eqref{eq:estEq2}, we derive the basic inequality
\begin{equation}
\label{eq:estBasic}
\norm{\widehat{m} - \widetilde{m}}_{n}^2 + 2\lambda \sum_{i, j} \psi_\tau (\widehat{\boldsymbol{\Theta}}_{ij}) \leq 4\norm{\widetilde{m} - g^Q}_{n}^2 + \frac{4}{n}\sum_{i = 1}^n \epsilon_i (\widehat{m}(\boldsymbol{x}_i) - \widetilde{m}(\boldsymbol{x}_i)) + 2 \lambda \Pi.
\end{equation}

\paragraph{Step III: Bound the empirical approximation error.}
Let $z_i = |\widetilde{m}(\boldsymbol{x}_i) - g^Q(\boldsymbol{f}_i, \boldsymbol{u}_{\pazocal{J} \cup \pazocal{J}^P})|$. It is obvious that $\{z_i\}_{i = 1, \dots, n}$ are i.i.d. samples with
$$|z_i| \leq (M + M^*)^2, \quad \text{var}(z_i) \leq \EE|z_i|^2 = \pazocal{E}(\widetilde{m}).$$
Since $\boldsymbol{W}$ is independent of $\{x_i\}_{i = 1}^n$, $\{\widetilde{m}(\boldsymbol{x}_i)\}_{i = 1}^n$ are also mutually independent. Therefore, the Bernstein inequality shows that for any $t > 0$, with probability at least $1-e^{-t}$ with respect to the target data, we have
$$
\begin{aligned}
\norm{\widetilde{m}-g^Q}_{n}^2=&\frac{1}{n}\sum_{i=1}^n |z_i|^2
\lesssim (\frac{1}{n}\sum_{i=1}^n \EE[z_i])^2+C_2\Big(\sqrt{\ca E(\widetilde{m})\frac{t}{n}}+\frac{t}{n}\Big)\\
\lesssim & \ca E(\widetilde{m}) + \frac{t}{n}\le C_3\Big( (\delta_f + \delta_a)\land (\delta^0_f + \delta^0_a) + \frac{t}{n}\Big)
\end{aligned}
$$
for some universal constants $C_2$ and $C_3$. Define the event
\begin{equation}
\label{eq:estE1}
E_1(t) = \Big\{\norm{\widetilde{m}-g^Q}_{n}^2\leq C_3\Big( (\delta_f + \delta_a)\land (\delta^0_f + \delta^0_a) + \frac{t}{n}\Big)\Big\},
\end{equation}
then $\pr(E_1(t))\geq 1-e^{-t}$.

\paragraph{Step IV: Bound the empirical excess risk.}
Define the event
$$
\begin{aligned}
E_2(t) = \Big\{\forall m(\boldsymbol{x}; \boldsymbol{W}, g, \boldsymbol{\Theta}, s) \in \pazocal{F}_s,\ &\frac{4}{n} \sum_{i = 1}^{n} \varepsilon_i (m(\boldsymbol{x}_i) - \widetilde{m}(\boldsymbol{x}_i)) - \lambda \sum_{i,j} \psi_\tau({\boldsymbol{\Theta}}_{ij}) \\
    \le& \frac{1}{2} \norm{m - \widetilde{m}}_{n}^2 + 2c_1 \left(v_{n} + \varrho_{n} + \frac{t}{n} \right)\Big\},
\end{aligned}
$$
where $c_1$ is the universal constant defined in Lemma \ref{lemma:weighted-empirical-process-regularized}. By Lemma \ref{lemma:weighted-empirical-process-regularized}, we have $\mathbb{P}(E_2(t)) \geq 1 - e^{-t}$ provided $\lambda \geq C_4 \varrho_n$ for some universal constant $C_4$. This is achievable by setting the universal constant $c_2$ in Lemma \ref{lemma:fast} sufficiently large.

Hence, given the definition of $v_n, \varrho_n, \Pi$ (Equation \eqref{eq:fast-rate-varrho_n}, \eqref{eq:fast-rate-v_n} and \eqref{eq:fast-rate-Pi}), It is notable theta
$$\varrho_n\lesssim \lambda,\quad (\delta_f + \delta_a)\land (\delta^0_f + \delta^0_a) + v_n+\lambda \Pi\lesssim (\delta_f + \delta_a + \delta_s) \land (\delta^0_f + \delta^0_a + \delta^0_s).$$
Under $E_1(t) \cap E_2(t)$, \eqref{eq:estBasic} thus reduces to
\begin{equation}
\label{eq:estBasic2}
\begin{aligned}
&\norm{\widehat{m} - \widetilde{m}}_{n}^2 + \lambda \sum_{i,j}\psi_\tau(\widehat{\boldsymbol{\Theta}}_{ij})\leq 4C_3\Big( (\delta_f + \delta_a)\land (\delta^0_f + \delta^0_a)\Big) + \frac{1}{2}\norm{m - \widetilde{m}}_{n}^2 + 2c_1 \Big(v_{n} + \varrho_{n} + \frac{t}{n} \Big) + 2\lambda\Pi\\
\Longrightarrow & \norm{\widehat{m} - \widetilde{m}}_{n}^2 + 2\lambda\sum_{i,j}\psi_\tau(\widehat{\boldsymbol{\Theta}}_{ij})\leq 8C_3\Big( (\delta_f + \delta_a)\land (\delta^0_f + \delta^0_a)\Big) + 4c_1 \Big(v_{n} + \varrho_{n} + \frac{t}{n} \Big) + 4\lambda\Pi\\
&~~~~~~~~~~~~~~~~~~~~~~~~~~~~~~~~~~~~\lesssim (\delta_f + \delta_a + \delta_s) \land (\delta^0_f + \delta^0_a+ \delta^0_s) + \frac{t}{n},
\end{aligned}
\end{equation}
Combining \eqref{eq:estE1} and \eqref{eq:estBasic2}, we have
\begin{equation}
\label{eq:estBasic3}
\norm{\widehat{m} - g^Q}_n^2\lesssim \norm{\widetilde{m}-g^Q}_{n}^2+\norm{\widehat{m}-\widetilde{m}}_{n}^2\lesssim (\delta^0_f + \delta^0_a+ \delta^0_s) + \frac{t}{n}
\end{equation}
under $E_1(t)\cap E_2(t)$, which completes the proof of upper bounding the empirical excess risk since $\pr(E_1(t)\cap E_2(t))\geq 1-2e^{-t}$.

\paragraph{Step V: Bound the excess risk.}
It remains the upper bound the excess risk (population error) $\pazocal{E}(\widehat{m}) = \norm{\widehat{m}-g^Q}_2^2$ to complete the proof of Lemma \ref{lemma:fast}. Define the event
$$E_3(t) = \left\{\forall m(\boldsymbol{x}; \boldsymbol{W}^Q, g, \boldsymbol{\Theta}, s)\in \pazocal{F}_s, ~~ \frac{1}{2}\norm{m - \widetilde{m}}^2_2 \le \norm{m - \widetilde{m}}_{n}^2 + 2\lambda \sum_{i,j} \psi_\tau({\boldsymbol{\Theta}}_{ij}) + c_1 \left(v_{n} + \varrho_{n} + \frac{t}{n}\right)\right\},$$
where $c_1$ is the universal constant defined in Lemma \ref{lemma:equivalence-population-empirical-l2-regularized}. By Lemma \ref{lemma:equivalence-population-empirical-l2-regularized}, as long as \(\lambda \geq C_5 \varrho_n\) for some universal constant \(C_5\), we have \(\pr(E_3(t)) \geq 1 - e^{-t}\). This is achievable by setting the universal constant \(c_2\) in Lemma \ref{lemma:fast} sufficiently large. 

Combining \eqref{eq:estBasic2} and the definition of $E_3(t)$, we have that under $E_1(t)\cap E_2(t)\cap E_3(t)$,
$$
\begin{aligned}
&\norm{\widehat{m}-\widetilde{m}}_2^2\lesssim (\delta_f + \delta_a +\delta_s)\land (\delta^0_f + \delta^0_a+ \delta^0_s) + \frac{t}{n}\\
\Longrightarrow& \norm{\widehat{m}-g^Q}_2^2\lesssim \norm{\widetilde{m}-g^Q}_{2}^2+\norm{\widehat{m}-\widetilde{m}}_{2}^2\lesssim  (\delta_f + \delta_a +\delta_s)\land (\delta^0_f + \delta^0_a+ \delta^0_s) + \frac{t}{n},
\end{aligned}
$$
which completes the proof as $\pr(E_1(t)\cap E_2(t)\cap E_3(t))\geq 1-3e^{-t}$.
\end{proof}

\begin{proof}[Proof of Corollary \ref{col:fast}]
The result follows directly by setting \(s = g^P = 0\) and \(\ca{J}^P = \emptyset\) in Lemma \ref{lemma:fast}. Under this setting, \(\delta_a + \delta_f + \delta_s\) is equal to \(\delta_a^0 + \delta_f^0 + \delta_s^0\), and the result holds by treating the sample for non-parametric regression as the source data and letting $h$ in Lemma \ref{lemma:fast} be $g^P$.
\end{proof}
% !TEX root = ../main.tex
\section{Empirical Implementation Details}
\label{app:numerical}

\subsection{Hyperparameter Specifications}
\noindent\textbf{Network Architecture.} All neural networks use depth $L = 4$ hidden layers with width $N = 100$ neurons. Factor-augmented models (FAST-NN and FAN-Lasso) use sparsity level $s = 50$, projection dimension $\overline{r} = 10$ for simulations ($\overline{r} = 3$ for real data).

\noindent\textbf{Regularization.} Factor-augmented models employ clipped $\ell_1$ regularization with threshold $\tau = 0.01$ and adaptive penalty weight $\lambda = 1.3 \log(p) / n_{\text{train}}$. 

\noindent\textbf{Optimization.} Adam optimizer with learning rate $\alpha = 0.001$, batch size $64$, maximum $200$ epochs, and mean squared error (MSE) loss. 

\subsection{Communities and Crime Dataset Attributes}

The original dataset from the UCI Machine Learning Repository contains 122 predictive attributes, 5 non-predictive identifiers, and 1 goal variable (\texttt{ViolentCrimesPerPop}). After removing the non-predictive identifiers (\texttt{state}, \texttt{county}, \texttt{community}, \texttt{communityname}, \texttt{fold}) and attributes with excessive missingness, the cleaned dataset retains 100 predictive features across 1,993 communities. All numeric features are pre-normalized to the $[0, 1]$ interval. 

The retained attributes by category are summarized in Table~\ref{tab:dataset-attributes}.

\begin{table}[htbp]
\centering
\small
\begin{tabular}{p{0.25\textwidth}p{0.7\textwidth}}
\hline
\textbf{Category} & \textbf{Variables} \\
\hline
Demographics \& Population (12) & population, householdsize, racepctblack, racePctWhite, racePctAsian, racePctHisp, agePct12t21, agePct12t29, agePct16t24, agePct65up, numbUrban, pctUrban \\
\hline
Socio-Economic (26) & medIncome, pctWWage, pctWFarmSelf, pctWInvInc, pctWSocSec, pctWPubAsst, pctWRetire, medFamInc, perCapInc, whitePerCap, blackPerCap, indianPerCap, AsianPerCap, OtherPerCap, HispPerCap, NumUnderPov, PctPopUnderPov, PctLess9thGrade, PctNotHSGrad, PctBSorMore, PctUnemployed, PctEmploy, PctEmplManu, PctEmplProfServ, PctOccupManu, PctOccupMgmtProf \\
\hline
Family Structure (13) & MalePctDivorce, MalePctNevMarr, FemalePctDiv, TotalPctDiv, PersPerFam, PctFam2Par, PctKids2Par, PctYoungKids2Par, PctTeen2Par, PctWorkMomYoungKids, PctWorkMom, NumIlleg, PctIlleg \\
\hline
Immigration (16) & NumImmig, PctImmigRecent, PctImmigRec5, PctImmigRec8, PctImmigRec10, PctRecentImmig, PctRecImmig5, PctRecImmig8, PctRecImmig10, PctSpeakEnglOnly, PctNotSpeakEnglWell, PctForeignBorn, PctBornSameState, PctSameHouse85, PctSameCity85, PctSameState85 \\
\hline
Housing (26) & PctLargHouseFam, PctLargHouseOccup, PersPerOccupHous, PersPerOwnOccHous, PersPerRentOccHous, PctPersOwnOccup, PctPersDenseHous, PctHousLess3BR, MedNumBR, HousVacant, PctHousOccup, PctHousOwnOcc, PctVacantBoarded, PctVacMore6Mos, MedYrHousBuilt, PctHousNoPhone, PctWOFullPlumb, OwnOccLowQuart, OwnOccMedVal, OwnOccHiQuart, RentLowQ, RentMedian, RentHighQ, MedRent, MedRentPctHousInc, MedOwnCostPctInc, MedOwnCostPctIncNoMtg \\
\hline
Law Enforcement \& Geography (7) & LemasSwornFT, LemasSwFTPerPop, LemasSwFTFieldOps, LemasSwFTFieldPerPop, LemasTotalReq, LemasPctPolicOnPatr, LemasGangUnitDeploy \\
\hline
\end{tabular}
\caption{Cleaned Dataset Attributes by Category}
\label{tab:dataset-attributes}
\end{table}

\end{document}